\documentclass[square,numbers]{article}

\usepackage[preprint]{neurips_2026}

\usepackage[utf8]{inputenc} 
\usepackage[T1]{fontenc}    
\usepackage{hyperref}       
\usepackage{url}            
\usepackage{booktabs}       
\usepackage{amsfonts}       
\usepackage{nicefrac}       
\usepackage{microtype}      
\usepackage{xcolor}         
\usepackage{graphicx}
\usepackage{multirow}
\usepackage{amsmath}
\usepackage{natbib}
\usepackage{wrapfig}

\usepackage{amsthm}
\usepackage{amsmath}
\usepackage{amssymb}
 
\newtheorem{theorem}{Theorem}[section]
\newtheorem{proposition}[theorem]{Proposition}
\newtheorem{corollary}[theorem]{Corollary}

\theoremstyle{definition}

\theoremstyle{remark}
\newtheorem{remark}[theorem]{Remark}

\title{\textsc{LongMoE}: Longitudinal Multimodal Learning via Trajectory-Aware Mixture-of-Experts}

\author{%
  Maxx Richard Rahman \\
  German Research Centre for \\ 
  Artificial Intelligence (DFKI)\\
  Saarbrücken, Germany \\
  \texttt{maxx\_richard.rahman@dfki.de} 
  \And
  Prakhar Kumar \\
  German Research Centre for \\ 
  Artificial Intelligence (DFKI)\\
  Saarbrücken, Germany \\
  \texttt{prakhar.kumar@dfki.de} 
  \And
  Wolfgang Maass \\
  German Research Centre for \\ 
  Artificial Intelligence (DFKI)\\
  Saarbrücken, Germany \\
  \texttt{wolfgang.maass@dfki.de} 
}

\begin{document}

\maketitle

\begin{abstract}
Multimodal clinical learning is increasingly important for integrating diverse patient data, including imaging, text, and personalised health records. However, it faces two fundamental challenges: i) modality missingness, where arbitrary subsets of modalities are unavailable at a given patient visit, ii) longitudinal dynamics, where the diagnostic significance of an observation depends on the patient's evolving disease trajectory over time. Existing methods address these challenges in isolation: missing-modality frameworks treat each visit as an independent static snapshot and discard temporal context, while longitudinal models often assume complete modality availability and degrade under systematic modality incompleteness. We propose \textsc{LongMoE} (\underline{Long}itudinal \underline{M}ixture-\underline{o}f-\underline{E}xperts), the unified framework to jointly address both challenges. \textsc{LongMoE} combines a context-aware imputation module with an attentional tokenization module that captures frequency-domain temporal patterns across irregular visit sequences, a trajectory-aware encoder for modeling disease progression, and context-conditioned Sparse MoE routing for patient-specific expert selection. Experiments on ADNI, OASIS-3, and MIMIC-IV show that \textsc{LongMoE} improves robustness under missing or weak contemporaneous modalities and remains competitive in full-modality settings, establishing a strong foundation for longitudinally-aware multimodal clinical learning.
\end{abstract}

\section{Introduction}
\label{sec:introduction}

Multimodal learning has become a fundamental challenge in clinical decision support, where integrating complementary data sources yields richer and more reliable predictions than any single modality alone~\citep{baltrušaitis2018multimodal, liang2021multibench}. In Alzheimer's disease (AD), this integrative capacity is clinically necessary, i.e., pathology manifests concurrently across structural neuroimaging, functional biomarkers, cognitive assessments, and genetic risk profiles, each contributing distinct and partially non-overlapping diagnostic signal~\citep{jack2013biomarker, marquez2019neuroimaging, papassotiropoulos2006genetics}. In practice, real-world cohorts exhibit pervasive \emph{modality missingness}: any given patient visit yields only a subset of the full modality panel due to heterogeneous acquisition protocols, resource constraints, and time-dependent assay availability~\citep{yun2024flexmoe, han2024fusemoe}. Existing approaches either restrict training to fully observed patients or apply context-free imputation strategies such as zero-padding and global mean substitution~\citep{zhang2022mmformer, wang2024sharedspecific}. A compounding difficulty is that AD patients are not a homogeneous population, i.e., individuals differ systematically in genetic profiles, comorbidities, and progression rates, so that the relative informativeness of each modality varies substantially across subgroups~\citep{jack2013biomarker}. With $L$ modalities, a model should handle up to $2^L - 1$ distinct observation patterns, each inducing a different information structure over which fusion must be performed. A monolithic fusion network is structurally unsuited to this setting~\citep{fedus2022switch}, motivating a Sparse Mixture-of-Experts (SMoE) architecture~\citep{shazeer2017outrageously, jacobs1991adaptive} in which a learned router dynamically activates a sparse subset of specialised expert networks conditioned on the observed modality combination, providing modality-pattern-aware and subgroup-aware fusion without the cost of a fully dense model.
 
Alzheimer's disease is a slowly progressive condition unfolding over years to decades, and the diagnostic significance of any observation is inseparable from its longitudinal context~\citep{weiner2010adni, venugopalan2021multimodal}. What matters clinically is not the absolute value of a biomarker at a single visit but its position along the patient's individual trajectory of decline. For example, a moderate hippocampal volume reading carries an entirely different prognostic weight depending on whether it represents a stable plateau or a steep atrophy curve~\citep{jack2013biomarker}. Therefore, \emph{the rate and pattern of biomarker change} across successive visits is a substantially stronger predictor of disease stage than any cross-sectional measurement~\citep{weiner2010adni, lee2019predicting}. GRU-based models~\citep{cho2014gru, lee2019predicting} capture longitudinal dependencies but require complete modality inputs at every visit and fail under structured missingness, while Transformer-based models~\citep{vaswani2017attention, devlin2019bert} offer greater expressivity but have not been adapted to the joint challenge of irregular visit spacing, arbitrary modality combinations, and population-level heterogeneity.
 
Recent work Flex-MoE~\citep{yun2024flexmoe} represents the closest prior work, introducing an SMoE framework with a learnable missing-modality bank and a two-stage routing strategy that accommodates arbitrary modality subsets. Nevertheless, Flex-MoE and all concurrent missing-modality methods~\citep{han2024fusemoe, tsai2019multimodal, zadeh2017tensor} treat each clinical visit as an independent static snapshot, that means the routing mechanism receives no information about the patient's prior visits, biomarker history, or evolving modality availability, and is therefore structurally incapable of trajectory-aware fusion. The result is a precise and consequential gap: \emph{no existing model jointly addresses modality missingness and longitudinal dynamics}. We address this gap with \textsc{LongMoE}, whose contributions are as follows:

\begin{itemize}

  \item We propose \textsc{LongMoE}, which jointly addresses modality 
        missingness and longitudinal temporal dynamics in clinical 
        multimodal learning. A context-aware imputation module, a 
        trajectory-aware transformer encoder, and a 
        context-conditioned SMoE routing layer together handle 
        arbitrary modality subsets at each visit.

  \item We introduce an attentional tokenization module that fuses modality-level cross-attention with a continuous-time multi-frequency positional encoding, an provably injective temporal representation for non-uniformly sampled visit sequences that encodes both slow longitudinal trends and rapid rate changes without assuming uniform visit spacing.

  \item Through experiments on ADNI,
        OASIS-3, and MIMIC-IV, we show that \textsc{LongMoE}
        outperforms state-of-the-art baselines across missing or weak contemporaneous modalities.

\end{itemize}


\section{Related Work}
\label{sec:related_work}

\paragraph{Missing Modality Handling in Multimodal Learning.}
\label{sec:related_multimodal}

Multimodal learning has been extensively studied across vision, language, and healthcare domains~\citep{baltrušaitis2018multimodal, liang2021multibench}, with AD-specific work confirming that combining imaging, genetic, and clinical data consistently outperforms any single modality~\citep{venugopalan2021multimodal, odusami2023machine}. Canonical fusion architectures including TF~\citep{zadeh2017tensor}, MulT~\citep{tsai2019multimodal}, and MAG~\citep{rahman2020integrating} provide strong baselines but uniformly assume complete modality availability, while global imputation strategies such as zero-padding~\citep{little2019statistical} and learned methods including ShaSpec~\citep{wang2024sharedspecific} and mmFormer~\citep{zhang2022mmformer} condition imputation on population statistics rather than individual patient context, limiting effectiveness for rare modality combinations. The most directly relevant prior works are Flex-MoE~\citep{yun2024flexmoe} and FuseMoE~\citep{han2024fusemoe}. Flex-MoE handles arbitrary modality combinations via a learnable missing modality bank and a two-stage SMoE design (G-Router for full-modality; S-Router for incomplete samples). FuseMoE applies dynamic expert input masking for variable modality sets, but both treat each visit as an independent static snapshot, discarding the patient's historical trajectory entirely.

\paragraph{Longitudinal Modelling and Mixture-of-Experts.}
\label{sec:related_longitudinal_moe}

Longitudinal modelling of disease progression has relied on recurrent architectures~\citep{hochreiter1997lstm, cho2014gru, choi2016retain}, including the GRU-based multimodal model Lee-MMGRU~\citep{lee2019predicting}, which captures temporal dynamics across imaging, clinical, and genetic modalities in AD but requires complete modality observations at every visit, rendering it unable to exploit partially observed patients. Transformer-based sequence models~\citep{vaswani2017attention, devlin2019bert} have largely superseded recurrent architectures, with irregular time-series Transformers~\citep{horn2020sett, shukla2021multitime} demonstrating that attention accommodates irregular sampling via learned positional encodings. The Mixture-of-Experts paradigm~\citep{jacobs1991adaptive, jordan1994hierarchical} has undergone a resurgence through Sparsely-Gated MoE~\citep{shazeer2017outrageously}, Switch Transformers~\citep{fedus2022switch}, and Mixtral~\citep{jiang2024mixtral}. In the multimodal setting, LiMoE~\citep{mustafa2022limoe}, SM$^4$~\citep{peng2024sparse}, MMoE~\citep{ma2018modeling}, Mod-Squad~\citep{chen2023modsquad}, DSelect-$k$~\citep{hazimeh2021dselectk}, Expert Choice~\citep{zhou2022expertchoice}, and sparse vision MoE~\citep{riquelme2021scaling} advanced expert specialisation and load-balanced routing, but none addresses the intersection of longitudinal sequence modelling and missing modality handling on irregular, partially observed clinical visit sequences.

\section{Preliminaries}
\label{sec:preliminaries}

\paragraph{Longitudinal Patient Record.}
Let a patient's clinical history be represented as an ordered sequence of $T$ visits, $\mathcal{V} = \{v_1, v_2, \ldots, v_T\}$, where each visit $v_t$ occurs at timestamp $\tau_t \in \mathbb{R}_{>0}$.  The inter-visit intervals $\Delta_t = \tau_t - \tau_{t-1}$ are irregular and patient-specific, reflecting the non-uniform nature of real-world clinical data collection.

\paragraph{Multimodal Observations.}
At each visit $v_t$ a subset of $L$ modalities is observed.  Let $\mathcal{M} = \{m_1, m_2, \ldots, m_L\}$ denote the full set of available modalities.  For the Alzheimer's disease progression modeling, these comprise: \emph{(i)} structural MRI scans $\mathbf{x}^{\mathrm{mri}}_t \in \mathbb{R}^{512\times 512}$, \emph{(ii)} clinical assessment scores $\mathbf{x}^{\mathrm{clin}}_t \in \mathbb{R}^{64}$, \emph{(iii)} CSF/biospecimen panels $\mathbf{x}^{\mathrm{bio}}_t \in \mathbb{R}^{b}$, and \emph{(iv)} genetic assay data $\mathbf{x}^{\mathrm{gen}}_t \in \mathbb{R}^{g}$. In practice, not all modalities are observed at every visit. Any given visit may yield only one or two modalities, making static full-input assumptions unrealistic.  Let $\mathbf{O}_t \in \{0,1\}^L$ be a binary observation mask at visit $t$, where $O_{t,l}=1$ if modality $m_l$ is observed and $O_{t,l}=0$ otherwise. Given the full longitudinal record of a patient up to and including the current visit $v_t$, the goal is to predict the diagnostic label $y_t \in \mathcal{Y} = \{\text{CN},\, \text{MCI},\, \text{AD}\}$, where CN denotes cognitively normal, MCI denotes mild cognitive impairment, and AD denotes Alzheimer's disease.  

\section{LongMoE Architecture}
\label{sec:architecture}

\begin{figure*}[t]
  \centering
  \includegraphics[width=\textwidth]{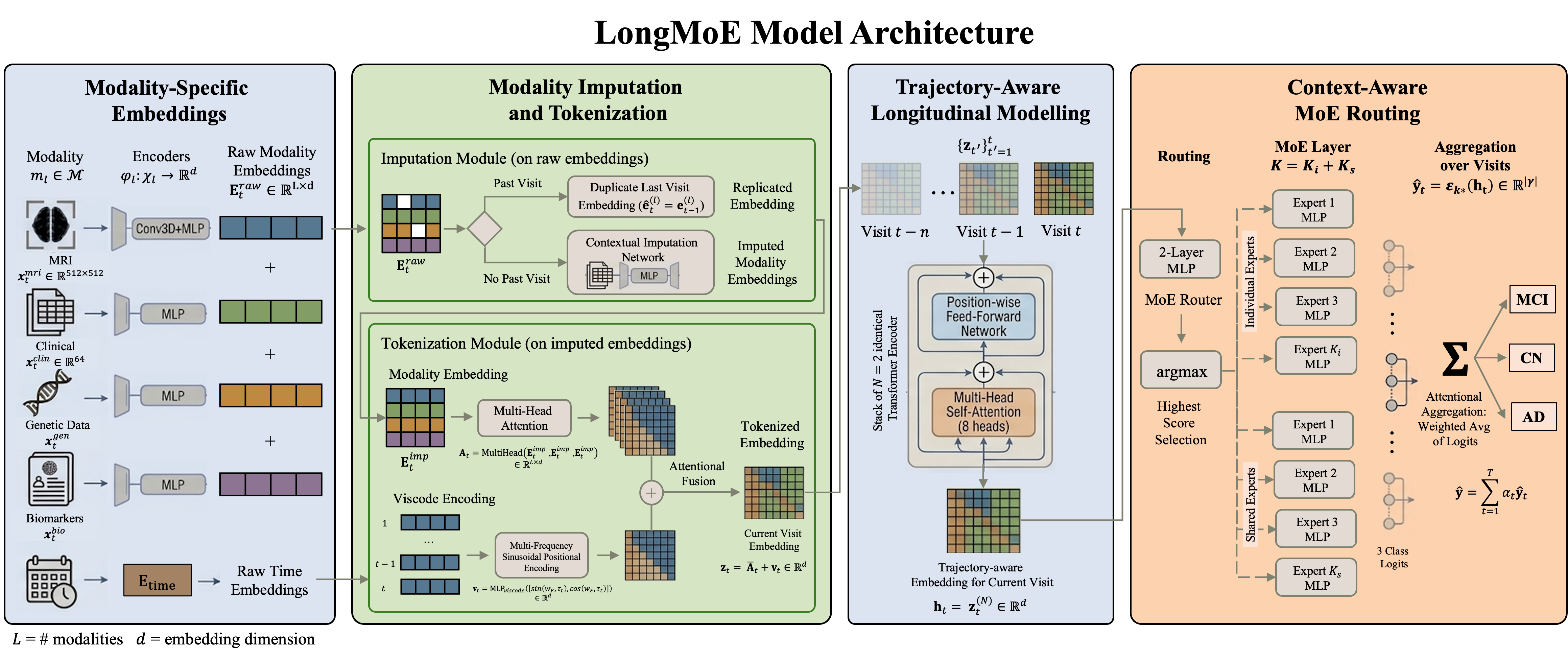}
  \caption{Overview of the \textsc{LongMoE} architecture, which processes a patient's longitudinal visit sequence through four sequential modules:\emph{(i)} modality-specific embedding, \emph{(ii)} a modality imputation and tokenization, \emph{(iii)} a trajectory-aware longitudinal modeling, and \emph{(iv)} a context-aware MoE routing.}
  \label{fig:architecture}
\end{figure*}

\textsc{LongMoE} processes longitudinal multimodal patient data through four sequentially composed modules (Figure~\ref{fig:architecture}): \emph{(i)} modality-specific embedding, \emph{(ii)} a modality imputation and tokenization, \emph{(iii)} a trajectory-aware longitudinal modeling, and \emph{(iv)} a context-aware MoE routing.

\subsection{Modality-Specific Embeddings}
\label{sec:embeddings}

For each modality $m_l\in\mathcal{M}$ we define an encoder $\phi_l:\mathcal{X}_l\to\mathbb{R}^d$, where $\mathcal{X}_l$ is the native input space of modality $l$.  Encoder architectures are chosen to match the structural properties of each modality: 

\begin{itemize}
  \item \textit{MRI}
        ($\mathbf{x}^{\mathrm{mri}}_t\in\mathbb{R}^{512\times 512}$):
        A 3D convolutional encoder followed by an MLP projection head
        captures local volumetric structure and global morphological
        features $\mathbf{e}^{\mathrm{mri}}_t = \mathrm{MLP}\!\left(\mathrm{Conv3D}\!\left( \mathbf{x}^{\mathrm{mri}}_t\right)\right) \in \mathbb{R}^d.$

  \item \textit{Clinical Scores}
        ($\mathbf{x}^{\mathrm{clin}}_t\in\mathbb{R}^{64}$),
        \textit{Biomarkers} ($\mathbf{x}^{\mathrm{bio}}_t$), and
        \textit{Genetic Data} ($\mathbf{x}^{\mathrm{gen}}_t$):
        Each is processed by a modality-specific MLP $\mathbf{e}^{(l)}_t = \mathrm{MLP}_l\!\left(\mathbf{x}^{(l)}_t\right) \in \mathbb{R}^d, \quad l \in \{\mathrm{clin},\,\mathrm{bio},\,\mathrm{gen}\}.$

  \item \textit{Temporal Embedding}: The visit timestamp $\tau_t$ is encoded
        as a learned temporal embedding
        $\mathbf{E}_{\mathrm{time}}\in\mathbb{R}^d$, capturing both
        absolute time and the inter-visit interval $\Delta_t$.
\end{itemize}

The per-modality embeddings at visit $t$ are stacked to form $\mathbf{E}^{\mathrm{raw}}_t = \bigl[\mathbf{e}^{(1)}_t;\;\mathbf{e}^{(2)}_t;\;\ldots;\; \mathbf{e}^{(L)}_t\bigr] \in \mathbb{R}^{L\times d}$, where rows corresponding to unobserved modalities ($O_{t,l}=0$) are set to zero prior to imputation.  All encoders are jointly trained end-to-end with the full model.

\subsection{Modality Imputation and Tokenization}
\label{sec:imputation_tokenization}

\paragraph{Imputation Module.}
\label{sec:imputation}

For each missing modality $l$ at visit $t$ (i.e., $O_{t,l}=0$) we apply
one of two imputation strategies conditioned on the availability of past
visit information.

\textit{Past Visit Available ($t>1$ and $O_{t-1,l}=1$).} We carry forward the most recent observed embedding via a replication operation $\hat{\mathbf{e}}^{(l)}_t = \mathbf{e}^{(l)}_{t-1} \qquad \text{(Duplicate Last Visit Embedding).}$ This strategy is motivated by the clinical observation that biomarker and imaging values exhibit strong short-term autocorrelation, making the last observed value a more informative prior than the global mean.

\textit{No Past Visit Available ($t=1$ or no prior observation exists for modality $l$).} We apply a learned contextual imputation network that conditions on the available modalities at the current visit $\hat{\mathbf{e}}^{(l)}_t = \mathrm{MLP}_{\mathrm{imp}}\!\left( \bigoplus_{l':\,O_{t,l'}=1} \mathbf{e}^{(l')}_t \right),$ where $\bigoplus$ denotes mean-pooling over observed modality embeddings and $\mathrm{MLP}_{\mathrm{imp}}$ is a shared contextual imputation network with learnable zero-masking, preventing contamination of observed modalities by missing signal. The fully imputed embedding matrix is defined as:
\begin{equation}
  \mathbf{E}^{\mathrm{imp}}_t[l,\,:]
  = \begin{cases}
      \mathbf{e}^{(l)}_t              & \text{if } O_{t,l}=1, \\
      \hat{\mathbf{e}}^{(l)}_t        & \text{if } O_{t,l}=0.
    \end{cases}
  \label{eq:imputed_matrix}
\end{equation}

\paragraph{Tokenization Module.}
\label{sec:tokenization}

Given $\mathbf{E}^{\mathrm{imp}}_t$, we produce a unified visit-level token $\mathbf{z}_t\in\mathbb{R}^d$ via attentional fusion of two complementary representations: a \emph{modality embedding} and a \emph{viscode encoding}.

\textit{Modality Embedding via Multi-Head Attention.} We apply multi-head self-attention over the $L$ modality embeddings $\mathbf{A}_t = \mathrm{MultiHead}\!\left( \mathbf{E}^{\mathrm{imp}}_t,\; \mathbf{E}^{\mathrm{imp}}_t,\; \mathbf{E}^{\mathrm{imp}}_t \right) \in \mathbb{R}^{L\times d},$ where $\mathrm{MultiHead}(\mathbf{Q},\mathbf{K},\mathbf{V}) = \mathrm{Concat}(\mathrm{head}_1,\ldots,\mathrm{head}_H)\mathbf{W}^O$ with each head defined as: 
\begin{equation}
    \mathrm{head}_h = \mathrm{Softmax}\!\left( \frac{\mathbf{Q}\mathbf{W}^Q_h\, (\mathbf{K}\mathbf{W}^K_h)^\top} {\sqrt{d/H}} \right)\mathbf{V}\mathbf{W}^V_h.
    \label{eqn:cross_attention}
\end{equation}
This operation learns inter-modality dependencies, enabling, e.g., MRI-derived morphological features to modulate the interpretation of clinical assessment scores.

\textit{Viscode Encoding via Multi-Frequency Sinusoidal Positional Encoding.} 
To encode the temporal position of each visit in the patient's longitudinal sequence, we perform a continuous-time multi-frequency sinusoidal encoding that maps each visit's absolute timestamp $\tau_t \in \mathbb{R}_{>0}$ (e.g., months from enrolment) to a fixed-dimensional vector without assuming uniform visit spacing. For $F$ frequency bands with base frequencies $\omega_k = 2^{k-1}$ for $k = 1, \ldots, F$, the viscode encoding of visit $t$ is:
\begin{equation}
  \mathbf{v}_t
  = \mathrm{MLP}_{\mathrm{viscode}}\!\left(
    \left[\,
      \sin(\omega_1\,\tau_t),\;
      \cos(\omega_1\,\tau_t),\;
      \ldots,\;
      \sin(\omega_F\,\tau_t),\;
      \cos(\omega_F\,\tau_t)
    \,\right]
  \right) \in \mathbb{R}^d,
  \label{eq:viscode_sincos}
\end{equation}
yielding a $2F$-dimensional raw encoding that is projected to $\mathbb{R}^d$ by the downstream MLP. To see why multiple frequencies are necessary, note that a single sine function is not injective: $\sin(\omega\tau) = 0$ at both $\tau = 0$ and $\tau = \pi/\omega$, so two visits at those months would receive identical encodings. By concatenating $F$ sine-cosine pairs at geometrically spaced frequencies, two visits $\tau$ and $\tau'$ receive the same encoding only if $\sin(\omega_k\tau) = \sin(\omega_k\tau')$ and $\cos(\omega_k\tau) = \cos(\omega_k\tau')$ hold simultaneously for all $k = 1, \ldots, F$, which requires $\omega_k(\tau - \tau') \equiv 0 \pmod{2\pi}$ for all $k$. Since $\tau, \tau' \in \mathbb{Z}_{>0}$ (integer month indices), $2\pi$ is irrational, and the frequencies $\omega_k$ are chosen to be incommensurable, this collision cannot occur within any clinically plausible range of visit months.The downstream $\mathrm{MLP}_{\mathrm{viscode}}$ then learns a task-specific projection of these multi-frequency components, enabling the model to weight slow trends (low $\omega_k$) versus rapid rate changes (high $\omega_k$) adaptively for the downstream classification task.

\textit{Attentional Fusion.}
The modality embedding $\bar{\mathbf{A}}_t\in\mathbb{R}^d$ and viscode encoding $\mathbf{v}_t$ are fused via elementwise addition $\mathbf{z}_t = \bar{\mathbf{A}}_t + \mathbf{v}_t \in \mathbb{R}^d.$ This additive fusion preserves gradient flow from both branches and has been shown to be effective in multimodal fusion settings~\cite{kiela2019supervised}.

\subsection{Trajectory-Aware Longitudinal Modelling}
\label{sec:longitudinal}

Given the sequence of visit tokens $\{\mathbf{z}_{t'}\}_{t'=1}^{t}$, we produce a contextualised representation $\mathbf{h}_t\in\mathbb{R}^d$ that integrates information across all past visits, capturing the patient's disease-progression trajectory. We apply a stack of $N=2$ identical Transformer encoder blocks to model temporal dependencies across visits.  Each block consists of a multi-head self-attention sublayer (with $H=8$ heads) followed by a position-wise feed-forward network (FFN), with residual connections and layer normalisation~\citep{vaswani2017attention}. To prevent information leakage from future visits, we apply a causal attention mask: position $t'$ may attend only to positions $t''\le t'$. This ensures that the prediction at visit $t$ is conditioned solely on the history $\{v_1,\ldots,v_t\}$. The output of the final encoder layer at position $t$ yields $\mathbf{h}_t = \mathbf{z}^{(N)}_t \in \mathbb{R}^d.$ This embedding encodes not merely the current visit's observations but a contextualised summary of the patient's full longitudinal trajectory, providing a rich input to the downstream MoE routing module.

\subsection{Context-Aware MoE Routing}
\label{sec:moe}

\paragraph{Expert Architecture.}
The MoE layer comprises $K = K_i + K_s$ experts partitioned into two groups: i) \textit{Individual Experts} $\{\mathcal{E}^{\mathrm{ind}}_k\}_{k=1}^{K_i}$: Specialised sub-networks that capture distinct patient-subpopulation patterns. Each expert is an MLP $\mathcal{E}^{\mathrm{ind}}_k(\mathbf{h}_t) = \mathrm{MLP}_k(\mathbf{h}_t) \in \mathbb{R}^{|\mathcal{Y}|}.$ ii) \textit{Shared Experts} $\{\mathcal{E}^{\mathrm{shr}}_k\}_{k=1}^{K_s}$: Experts accessible to \emph{all} inputs that capture population-level common patterns, providing a regularisation effect and preventing over-specialisation. In total the MoE layer contains $K=8$ experts, each producing a 3-class
logit vector.

\paragraph{MoE Router.}
The router is a two-layer MLP mapping $\mathbf{h}_t$ to a scoring vector over all $K$ experts $\mathbf{s}_t = \mathrm{Router}(\mathbf{h}_t) = \mathrm{MLP}_{\mathrm{router}}(\mathbf{h}_t) \in \mathbb{R}^K.$ Expert selection is performed via hard top-1 routing $k^* = \arg\max_{k\in\{1,\ldots,K\}} s_{t,k}.$ The shared experts $\{\mathcal{E}^{\mathrm{shr}}_k\}$ are always activated for all inputs, providing a stable common representation. Hard top-1 routing imposes sparsity in expert utilisation, reducing computational cost while encouraging expert specialisation, in contrast to soft routing, which can lead to diffuse utilisation and slower specialisation.  The shared expert mechanism ensures that even under hard routing, all patients benefit from a common representational backbone, mitigating the cold-start problem for rare patient subgroups.

\paragraph{Aggregation over Visits.}
Let $\hat{\mathbf{y}}_t = \mathcal{E}_{k^*}(\mathbf{h}_t)\in\mathbb{R}^{|\mathcal{Y}|}$ denote the logit vector produced by the selected expert at visit $t$.  The final prediction is obtained via attentional aggregation:
\begin{equation}
  \hat{\mathbf{y}} = \sum_{t=1}^{T} \alpha_t\,\hat{\mathbf{y}}_t,
  \qquad
  \alpha_t = \frac{\exp\!\left(s_{t,k^*_t}\right)}
                  {\sum_{t'=1}^{T}\exp\!\left(s_{t',k^*_{t'}}\right)},
  \label{eq:aggregation}
\end{equation}
where $\alpha_t$ weights each visit's contribution by the router's confidence, assigning greater influence to visits with more informative or reliable observations. The aggregated logit vector $\hat{\mathbf{y}}$ is passed through a softmax $\hat{p}(y\mid\mathcal{V}) = \mathrm{Softmax}(\hat{\mathbf{y}}) \in \Delta^{|\mathcal{Y}|-1},$ and the model is trained with the standard cross-entropy loss $\mathcal{L} = -\sum_{c\in\mathcal{Y}} \mathbb{1}[y=c]\,\log\hat{p}(c\mid\mathcal{V}).$

\paragraph{Load-Balancing Regularisation.}
To prevent router collapse, we augment the training objective with an auxiliary load-balancing loss~\citep{fedus2022switch} $\mathcal{L}_{\mathrm{bal}} = K \cdot \sum_{k=1}^{K} f_k \cdot P_k,$ where $f_k = \frac{1}{|\mathcal{B}|}\sum_{i\in\mathcal{B}}\mathbb{1}[k^*_i=k]$ is the fraction of inputs in mini-batch $\mathcal{B}$ routed to expert $k$, and $P_k = \frac{1}{|\mathcal{B}|}\sum_{i\in\mathcal{B}} s_{i,k}$ is the mean routing score for expert $k$.  The total training objective is $\mathcal{L}_{\mathrm{total}} = \mathcal{L} + \lambda\,\mathcal{L}_{\mathrm{bal}},$ where $\lambda>0$ is a regularisation coefficient controlling the trade-off between task performance and expert load balance.

\section{Experiments}
\label{sec:experiments}

\paragraph{ADNI~\citep{weiner2010adni}.}
The Alzheimer's Disease Neuroimaging Initiative comprises of 2,693 patients across CN (1,049; 38.9\%), MCI (1,194; 44.3\%), and
AD (450; 16.7\%), with four modalities: MRI/PET via
MUSE~\citep{doshi2016muse} registered to MNI
space~\citep{ou2011dramms} (2,359; 87.6\%); CSF biomarkers
(A$\beta$40, A$\beta$42, total/phospho-Tau, ApoE genotyping),
normalised to $[-1,1]$ with categoricals one-hot encoded (1,611;
59.8\%); longitudinal clinical assessments from five source files
with label-leaking columns excluded (2,684; 99.7\%); and SNP data
from phases 1/GO/2/3 harmonised to GRCh38 via LiftOver and LD-pruned
(window 50, step 5, $r^2 < 0.1$) (2,226; 82.7\%).
 
\paragraph{OASIS-3~\citep{lamontagne2019oasis}.}
This open-access longitudinal dataset from Washington University comprises of 1,098 participants (aged 42--95) across CN (625; 56.9\%), MCI (280;
25.5\%), and AD (193; 17.6\%), with T1-weighted MRI via FreeSurfer
(1,098; 100\%), CSF amyloid/tau assays and PET amyloid burden scores
(Pittsburgh Compound-B and Florbetapir), CDR/MMSE assessments (1,098;
100\%), and ApoE $\varepsilon$4 with select AD-associated SNPs (620;
56.5\%). OASIS-3 presents complementary challenges through higher
visit spacing irregularity, more severe modality missingness, and a
more balanced class distribution; prediction target:
$y \in \{\mathrm{CN}, \mathrm{MCI}, \mathrm{AD}\}$.
 
\paragraph{MIMIC-IV~\citep{johnson2020mimic}.}
98,323 adults with at least two hospital visits at Beth Israel
Deaconess Medical Center, with four modalities: clinical notes (N)
encoded via a pre-trained clinical language model (98,323; 100\%);
lab values and vitals aggregated over the last visit window (80,802;
82.2\%); ICD-9 codes as multi-hot vectors (96,689; 98.3\%); and
structured demographics, i.e., age, sex, ethnicity, insurance type
(98,323; 100\%). Binary one-year mortality under severe class
imbalance (Survived: 79,266, 80.6\%; Died: 19,057, 19.4\%) serves
as a large-scale stress test for generalisation to a fundamentally
different clinical task with distinct modality semantics.

\paragraph{Baselines.}
We compare \textsc{LongMoE} against five baselines under identical data splits and missing-modality protocols: {Flex-MoE~\citep{yun2024flexmoe}, handling arbitrary modality combinations via a learnable missing modality bank and two-stage SMoE design (G-Router/S-Router), augmented with a mean-pooled preceding-visit summary for longitudinal fairness; FuseMoE~\citep{han2024fusemoe}, a concurrent MoE Transformer augmented with a single-layer GRU over per-visit embeddings; TF~\citep{zadeh2017tensor}, computing outer-product tensor interactions with zero-padding for absent modalities and no temporal capacity; MulT~\citep{tsai2019multimodal}, using directional cross-modal attention but designed for language-vision sequences and collapsing under systematic missingness; and Lee-MMGRU~\citep{lee2019predicting}, the only baseline with explicit longitudinal modelling via a GRU encoder, but requiring the full modality intersection at every visit and thus unable to exploit partially observed patients.

\paragraph{Experimental Settings.}
All models are trained on NVIDIA H100 GPUs (80\,GB) with a patient-level stratified 70/15/15 train/validation/test split (all visits of a patient in one partition). \textsc{LongMoE} uses $d = 128$, a 2-layer Transformer encoder with 8 attention heads, $K = 8$ MoE experts, and AdamW~\citep{loshchilov2019decoupled}. We use $F = 8$ frequency bands producing a 16-dim raw viscode encoding. To eliminate within-patient temporal leakage we adopt a last-visit evaluation protocol, i.e., each test patient is evaluated on their final recorded visit only, with all strictly preceding visits as historical context for the encoder and imputation module, i.e., mirroring clinical deployment where no future information is available at prediction time. We report AUC-ROC as the primary metric with full results including accuracy and F1 score in Appendix~\ref{app:full_results}. All experiments use five random seeds and statistical significance is assessed via paired $t$-test between \textsc{LongMoE} and each baseline per modality combination, with $p$-values Bonferroni-corrected across all $2^L - 1$ combinations.

 
\begin{table}[ht]
  \centering
  \caption{%
    Performance comparison on ADNI (AUC-ROC)
    for all modality combinations.
    Results are mean $\pm$ std over five seeds.
    $^\dagger$ denotes statistical significance over the runner-up
    baseline (paired $t$-test, Bonferroni-corrected, $p < 0.05$).
  }
  \label{tab:adni_perf}
  \renewcommand{\arraystretch}{1.15}
  \resizebox{0.9\textwidth}{!}{%
  \begin{tabular}{lcccccc}
    \toprule
    \textbf{Modalities}
      & \textbf{MulT}
      & \textbf{Flex-MoE}
      & \textbf{FuseMoE}
      & \textbf{Lee-MMGRU}
      & \textbf{TF}
      & \textbf{LongMoE} \\
    \midrule
    B
      & $0.5586_{\pm.0170}$ & $0.5220_{\pm.1096}$ & $0.4915_{\pm.0230}$
      & $0.5000_{\pm.0000}$ & $0.5000_{\pm.0000}$
      & $\mathbf{0.9493_{\pm.0040}}^\dagger$ \\
    C
      & $0.9619_{\pm.0060}$ & $0.9650_{\pm.0016}$ & $0.9647_{\pm.0012}$
      & $0.9704_{\pm.0034}$ & $0.5000_{\pm.0000}$
      & $\mathbf{0.9750_{\pm.0027}}^\dagger$ \\
    G
      & $0.6067_{\pm.0193}$ & $0.6011_{\pm.0305}$ & $0.6316_{\pm.0059}$
      & $0.5000_{\pm.0000}$ & $0.5000_{\pm.0000}$
      & $\mathbf{0.9517_{\pm.0036}}^\dagger$ \\
    I
      & $0.5884_{\pm.0166}$ & $0.6632_{\pm.0199}$ & $0.6719_{\pm.0134}$
      & $0.5771_{\pm.0696}$ & $0.5000_{\pm.0000}$
      & $\mathbf{0.9534_{\pm.0032}}^\dagger$ \\
    \midrule
    B,C
      & $0.9638_{\pm.0047}$ & $0.9573_{\pm.0048}$ & $0.9632_{\pm.0010}$
      & $0.9704_{\pm.0034}$ & $0.5000_{\pm.0000}$
      & $\mathbf{0.9749_{\pm.0026}}^\dagger$ \\
    B,G
      & $0.6286_{\pm.0088}$ & $0.6293_{\pm.0426}$ & $0.6244_{\pm.0056}$
      & $0.5000_{\pm.0000}$ & $0.5000_{\pm.0000}$
      & $\mathbf{0.9515_{\pm.0029}}^\dagger$ \\
    C,G
      & $0.9712_{\pm.0023}$ & $0.9490_{\pm.0043}$ & $0.9679_{\pm.0011}$
      & $0.9704_{\pm.0034}$ & $0.5000_{\pm.0000}$
      & $\mathbf{0.9755_{\pm.0030}}^\dagger$ \\
    I,B
      & $0.5985_{\pm.0154}$ & $0.6645_{\pm.0233}$ & $0.6707_{\pm.0108}$
      & $0.5771_{\pm.0696}$ & $0.5000_{\pm.0000}$
      & $\mathbf{0.9531_{\pm.0029}}^\dagger$ \\
    I,C
      & $0.9584_{\pm.0069}$ & $0.9526_{\pm.0045}$ & $0.9571_{\pm.0040}$
      & $0.9685_{\pm.0037}$ & $0.5000_{\pm.0000}$
      & $\mathbf{0.9740_{\pm.0034}}^\dagger$ \\
    I,G
      & $0.6254_{\pm.0215}$ & $0.7043_{\pm.0271}$ & $0.7151_{\pm.0161}$
      & $0.5771_{\pm.0696}$ & $0.5000_{\pm.0000}$
      & $\mathbf{0.9544_{\pm.0033}}^\dagger$ \\
    \midrule
    B,C,G
      & $0.9712_{\pm.0018}$ & $0.9513_{\pm.0053}$ & $0.9670_{\pm.0012}$
      & $0.9704_{\pm.0034}$ & $0.4613_{\pm.0733}$
      & $\mathbf{0.9754_{\pm.0029}}^\dagger$ \\
    I,B,C
      & $0.9608_{\pm.0058}$ & $0.9517_{\pm.0075}$ & $0.9549_{\pm.0036}$
      & $0.9685_{\pm.0037}$ & $0.5000_{\pm.0000}$
      & $\mathbf{0.9739_{\pm.0033}}^\dagger$ \\
    I,B,G
      & $0.6297_{\pm.0200}$ & $0.7060_{\pm.0245}$ & $0.7123_{\pm.0144}$
      & $0.5771_{\pm.0696}$ & $0.5000_{\pm.0000}$
      & $\mathbf{0.9543_{\pm.0029}}^\dagger$ \\
    I,C,G
      & $0.9665_{\pm.0027}$ & $0.9552_{\pm.0039}$ & $0.9617_{\pm.0038}$
      & $0.9685_{\pm.0037}$ & $0.5000_{\pm.0000}$
      & $\mathbf{0.9744_{\pm.0032}}^\dagger$ \\
    \midrule
    I,B,C,G
      & $0.9666_{\pm.0023}$ & $0.9535_{\pm.0065}$ & $0.9604_{\pm.0037}$
      & $0.9685_{\pm.0037}$ & $0.6546_{\pm.0859}$
      & $\mathbf{0.9742_{\pm.0032}}^\dagger$ \\
    \bottomrule
  \end{tabular}%
  }
\end{table}

 
\begin{table}[ht]
  \centering
  \caption{%
    Performance comparison on OASIS-3 (AUC-ROC)
    for all modality combinations.
    Results are mean $\pm$ std over five seeds.
    $^\dagger$ denotes statistical significance over the runner-up
    baseline (paired $t$-test, Bonferroni-corrected, $p < 0.05$).
  }
  \label{tab:oasis_perf}
  \renewcommand{\arraystretch}{1.15}
  \resizebox{0.9\textwidth}{!}{%
  \begin{tabular}{lcccccc}
    \toprule
    \textbf{Modalities}
      & \textbf{MulT}
      & \textbf{Flex-MoE}
      & \textbf{FuseMoE}
      & \textbf{Lee-MMGRU}
      & \textbf{TF}
      & \textbf{LongMoE} \\
    \midrule
    C
      & $0.4913_{\pm.0010}$
      & $0.5007_{\pm.0031}$
      & $0.4991_{\pm.0027}$
      & $\mathbf{0.8776_{\pm.0031}}$
      & $0.5022_{\pm.0052}$
      & $0.8630_{\pm.0191}$ \\
    G
      & $0.5040_{\pm.0114}$
      & $0.4993_{\pm.0030}$
      & $0.5016_{\pm.0022}$
      & $0.5264_{\pm.0470}$
      & $0.5022_{\pm.0052}$
      & $\mathbf{0.7624_{\pm.0078}}^\dagger$ \\
    I
      & $0.5352_{\pm.0035}$
      & $0.5313_{\pm.0017}$
      & $0.5309_{\pm.0012}$
      & $0.6760_{\pm.0199}$
      & $0.5022_{\pm.0052}$
      & $\mathbf{0.7818_{\pm.0202}}^\dagger$ \\
    \midrule
    C,G
      & $0.6591_{\pm.0179}$
      & $0.6722_{\pm.0062}$
      & $0.6292_{\pm.0114}$
      & $\mathbf{0.8674_{\pm.0036}}$
      & $0.5022_{\pm.0052}$
      & $0.8643_{\pm.0178}$ \\
    I,C
      & $0.7589_{\pm.0083}$
      & $0.7651_{\pm.0037}$
      & $0.7663_{\pm.0035}$
      & $\mathbf{0.8568_{\pm.0111}}$
      & $0.4220_{\pm.1066}$
      & $0.8521_{\pm.0208}$ \\
    I,G
      & $0.6320_{\pm.0109}$
      & $0.6632_{\pm.0107}$
      & $0.6388_{\pm.0156}$
      & $0.6672_{\pm.0359}$
      & $0.5022_{\pm.0052}$
      & $\mathbf{0.7767_{\pm.0130}}^\dagger$ \\
    \midrule
    I,C,G
      & $0.7713_{\pm.0100}$
      & $0.8030_{\pm.0062}$
      & $0.7811_{\pm.0200}$
      & $0.8330_{\pm.0066}$
      & $0.6564_{\pm.0827}$
      & $\mathbf{0.8552_{\pm.0192}}^\dagger$ \\
    \bottomrule
  \end{tabular}%
  }
\end{table}

\begin{table}[ht]
  \centering
  \caption{%
    Performance comparison on MIMIC-IV (AUC-ROC)
    for all modality combinations.
    Results are mean $\pm$ std over five seeds.
    $^\dagger$ denotes statistical significance over the runner-up
    baseline (paired $t$-test, Bonferroni-corrected, $p < 0.05$).
  }
  \label{tab:mimic_perf}
  \renewcommand{\arraystretch}{1.15}
  \resizebox{0.9\textwidth}{!}{%
  \begin{tabular}{lcccccc}
    \toprule
    \textbf{Modalities} & \textbf{MulT} & \textbf{Flex-MoE}
      & \textbf{FuseMoE} & \textbf{Lee-MMGRU}
      & \textbf{TF} & \textbf{LongMoE} \\
    \midrule
    C
      & $0.8455_{\pm.0021}$ & $0.8206_{\pm.0231}$ & $0.8166_{\pm.0423}$
      & $0.8893_{\pm.0026}$ & $0.5000_{\pm.0000}$
      & $\mathbf{0.8987_{\pm.0008}}^\dagger$ \\
    D
      & $0.6892_{\pm.0036}$ & $0.6541_{\pm.0278}$ & $0.6866_{\pm.0017}$
      & $0.7423_{\pm.0005}$ & $0.5000_{\pm.0000}$
      & $\mathbf{0.8720_{\pm.0042}}^\dagger$ \\
    L
      & $0.8150_{\pm.0027}$ & $0.8164_{\pm.0037}$ & $0.8245_{\pm.0024}$
      & $0.8243_{\pm.0124}$ & $0.5000_{\pm.0000}$
      & $\mathbf{0.8788_{\pm.0064}}^\dagger$ \\
    N
      & $0.7780_{\pm.0040}$ & $0.7731_{\pm.0101}$ & $0.7732_{\pm.0035}$
      & $0.7967_{\pm.0024}$ & $0.5000_{\pm.0000}$
      & $\mathbf{0.8807_{\pm.0026}}^\dagger$ \\
    \midrule
    C,D
      & $0.8556_{\pm.0016}$ & $0.8078_{\pm.0344}$ & $0.8267_{\pm.0446}$
      & $0.8965_{\pm.0015}$ & $0.5000_{\pm.0000}$
      & $\mathbf{0.9041_{\pm.0008}}^\dagger$ \\
    L,C
      & $0.8813_{\pm.0013}$ & $0.8748_{\pm.0136}$ & $0.8760_{\pm.0116}$
      & $0.9044_{\pm.0024}$ & $0.5000_{\pm.0000}$
      & $\mathbf{0.9077_{\pm.0003}}^\dagger$ \\
    L,D
      & $0.8335_{\pm.0018}$ & $0.8250_{\pm.0115}$ & $0.8273_{\pm.0178}$
      & $0.8564_{\pm.0053}$ & $0.5000_{\pm.0000}$
      & $\mathbf{0.8900_{\pm.0056}}^\dagger$ \\
    N,C
      & $0.8789_{\pm.0008}$ & $0.8611_{\pm.0146}$ & $0.8644_{\pm.0162}$
      & $0.9020_{\pm.0018}$ & $0.5000_{\pm.0000}$
      & $\mathbf{0.9062_{\pm.0016}}^\dagger$ \\
    N,D
      & $0.7957_{\pm.0027}$ & $0.7332_{\pm.0221}$ & $0.7935_{\pm.0025}$
      & $0.8407_{\pm.0016}$ & $0.5000_{\pm.0000}$
      & $\mathbf{0.8908_{\pm.0042}}^\dagger$ \\
    N,L
      & $0.8690_{\pm.0019}$ & $0.8651_{\pm.0113}$ & $0.8680_{\pm.0020}$
      & $0.8656_{\pm.0072}$ & $0.5000_{\pm.0000}$
      & $\mathbf{0.8940_{\pm.0039}}^\dagger$ \\
    \midrule
    L,C,D
      & $0.8874_{\pm.0005}$ & $0.8753_{\pm.0181}$ & $0.8782_{\pm.0162}$
      & $0.9092_{\pm.0016}$ & $0.5850_{\pm.0404}$
      & $\mathbf{0.9115_{\pm.0003}}^\dagger$ \\
    N,C,D
      & $0.8850_{\pm.0006}$ & $0.8618_{\pm.0076}$ & $0.8729_{\pm.0165}$
      & $0.9068_{\pm.0017}$ & $0.5000_{\pm.0000}$
      & $\mathbf{0.9102_{\pm.0020}}^\dagger$ \\
    N,L,C
      & $0.8984_{\pm.0007}$ & $0.8929_{\pm.0092}$ & $0.8960_{\pm.0049}$
      & $0.9107_{\pm.0015}$ & $0.5000_{\pm.0000}$
      & $\mathbf{0.9119_{\pm.0010}}^\dagger$ \\
    N,L,D
      & $0.8778_{\pm.0013}$ & $0.8734_{\pm.0042}$ & $0.8711_{\pm.0083}$
      & $0.8819_{\pm.0037}$ & $0.5000_{\pm.0000}$
      & $\mathbf{0.9014_{\pm.0039}}^\dagger$ \\
    \midrule
    N,L,C,D
      & $0.9023_{\pm.0003}$ & $0.8993_{\pm.0082}$ & $0.8998_{\pm.0059}$
      & $0.9043_{\pm.0014}$ & $0.7981_{\pm.0038}$
      & $\mathbf{0.9150_{\pm.0012}}^\dagger$ \\
    \bottomrule
  \end{tabular}%
  }
\end{table}

\section{Results}
\label{sec:results}

\paragraph{Performance Analysis.}
\label{sec:performance}

Tables~\ref{tab:adni_perf}, \ref{tab:oasis_perf}, and~\ref{tab:mimic_perf} report AUC-ROC across all $2^L - 1$ modality combinations; $^\dagger$ denotes statistical significance (paired $t$-test, Bonferroni-corrected, $p < 0.05$, five seeds). On ADNI, \textsc{LongMoE} achieves the best AUC-ROC across all 15 combinations, with gains exceeding 0.45 in imaging- and genetics-only settings where single-visit baselines collapse (B: $0.9493$ vs.\ FuseMoE $0.4915$; G: $0.9517$ vs.\ $0.6316$), and maintains a near-flat AUC-ROC range of $[0.9493, 0.9755]$ including the full-modality setting ($0.9742$), reflecting the stability of longitudinal context across the full modality-combination space. On OASIS-3, \textsc{LongMoE} leads on four of seven combinations including full-modality (I,C,G: $0.8552$); Lee-MMGRU leads on clinical-heavy combinations (C: $0.8776$; C,G: $0.8674$; I,C: $0.8568$) where its recurrent encoder is competitive, but \textsc{LongMoE} recovers its advantage on imaging- and genetics-dominant combinations (G: $0.7624$; I: $0.7818$) where longitudinal trajectory context is the decisive signal. On MIMIC-IV, \textsc{LongMoE} achieves the best AUC-ROC across all 15 combinations (N,L,C,D: $0.9150$; D-only: $0.8720$ vs.\ MulT's $0.6892$ and TF's $0.5000$) with standard deviations consistently below $0.007$, confirming that the robustness mechanism transfers to a large-scale binary mortality task with fundamentally different modality semantics.
 
\paragraph{Complexity Analysis.}
\label{sec:complexity}

Table~\ref{tab:complexity} reports parameters, forward-pass time, GPU memory, training time, and AUC-ROC for all models on a single NVIDIA H100 GPU. \textsc{LongMoE} has 4.40M parameters versus 183K for Flex-MoE (24$\times$) and forward-pass latency of 49.19\,ms versus 1.01\,ms (49$\times$), which is a direct consequence of the trajectory-aware Transformer encoder processing the full visit sequence, which is precisely the component driving the largest ablation gain ($\Delta\mathrm{AUC} = -0.0335$). GPU memory remains modest (149.8\,MB on ADNI, 112.9\,MB on MIMIC-IV, comparable to TF at 145.6\,MB and 103.9\,MB), and training time of 20.34\,s/epoch on ADNI versus 2.47\,s for TF is expected but practical for overnight runs. We report wall-clock time as the primary latency measure since GFLOPs underestimate actual latency due to memory access patterns in sequential Transformers; the 49\,ms and 20\,ms forward-pass times correspond to $\sim$20 and $\sim$50 patients per second on a single GPU, which is sufficient for offline batch-inference but not real-time bedside support, with full-cohort MIMIC-IV inference ($\sim$98K patients) requiring $\sim$33 minutes. The encoder scales as $\mathcal{O}(T^2 d)$ in visit sequence length; with $\bar{T} = 4.2$ (ADNI) and $\bar{T} = 3.1$ (MIMIC-IV) this cost is negligible, and linear-complexity attention variants~\citep{kitaev2020reformer} provide a direct scaling path for cohorts with longer sequences ($T > 50$).

\begin{table}[ht]
  \centering
  \caption{%
    Complexity comparison across models on all three datasets
    (full modality setting).
  }
  \label{tab:complexity}
  \renewcommand{\arraystretch}{1.25}
  \resizebox{0.9\textwidth}{!}{%
  \begin{tabular}{l l r r r r r r}
    \toprule
    \textbf{Dataset}
      & \textbf{Metric}
      & \textbf{MulT}
      & \textbf{Flex-MoE}
      & \textbf{FuseMoE}
      & \textbf{Lee-MMGRU}
      & \textbf{TF}
      & \textbf{LongMoE} \\
    \midrule
 
    \multirow{5}{*}{\textbf{ADNI}}
      & \# Params $\downarrow$
        & 640,771 & 183,683 & 580,248 & 116,175 & 109,699
        & 4,396,305 \\
      & Fwd Time (ms) $\downarrow$
        & 3.11 & \textbf{1.01} & 1.17 & 2.19 & 1.52
        & 49.19 \\
      & GPU Mem (MB) $\downarrow$
        & \textbf{103.9} & 114.6 & 109.0 & 123.7 & 145.6
        & 149.8 \\
      & Train Time (s/ep) $\downarrow$
        & 5.52 & 7.91 & 5.98 & 6.7 & \textbf{2.47}
        & 20.34 \\
      & AUC-ROC $\uparrow$
        & 0.9666 & 0.9535 & 0.9604 & 0.9685 & 0.6546
        & \textbf{0.9742} \\
 
    \midrule
 
    \multirow{5}{*}{\textbf{OASIS-3}}
      & \# Params $\downarrow$
        & 623,363 & 145,283 & 427,032 & 78,540 & 92,675
        & 4,261,262 \\
      & Fwd Time (ms) $\downarrow$
        & 1.611 & 1.566 & 1.345 & 1.994 & \textbf{0.445}
        & 5.707 \\
      & GPU Mem (MB) $\downarrow$
        & 105.22 & 102.95 & 101.01 & 116.17 & 140.19
        & \textbf{61.05} \\
      & Train Time (s/ep) $\downarrow$
        & 40.81 & 11.23 & 8.57 & 25.23 & \textbf{7.7}
        & 77.37 \\
      & AUC-ROC $\uparrow$
        & 0.7713 & 0.8030 & 0.7811 & 0.8330 & 0.6564
        & \textbf{0.8552} \\
        
    \midrule
 
    \multirow{5}{*}{\textbf{MIMIC-IV}}
      & \# Params $\downarrow$
        & 662,530 & 205,442 & 601,104 & 149,258 & 131,458
        & 1,321,614 \\
      & Fwd Time (ms) $\downarrow$
        & 3.12 & \textbf{1.00} & 1.32 & 2.34 & 1.63
        & 20.26 \\
      & GPU Mem (MB) $\downarrow$
        & \textbf{82.7} & 93.7 & 87.9 & 103.3 & 103.9
        & 112.9 \\
      & Train Time (s/ep) $\downarrow$
        & 5.6 & 7.68 & 5.46 & 7.16 & \textbf{2.46}
        & 20.23 \\
      & AUC-ROC $\uparrow$
        & 0.9023 & 0.8993 & 0.8998 & 0.9043 & 0.7981
        & \textbf{0.9150} \\
 
    \bottomrule
  \end{tabular}%
  }
\end{table}

\paragraph{Expert Specialisation Analysis.}

\begin{figure}
  \centering
  \includegraphics[width=0.7\linewidth]{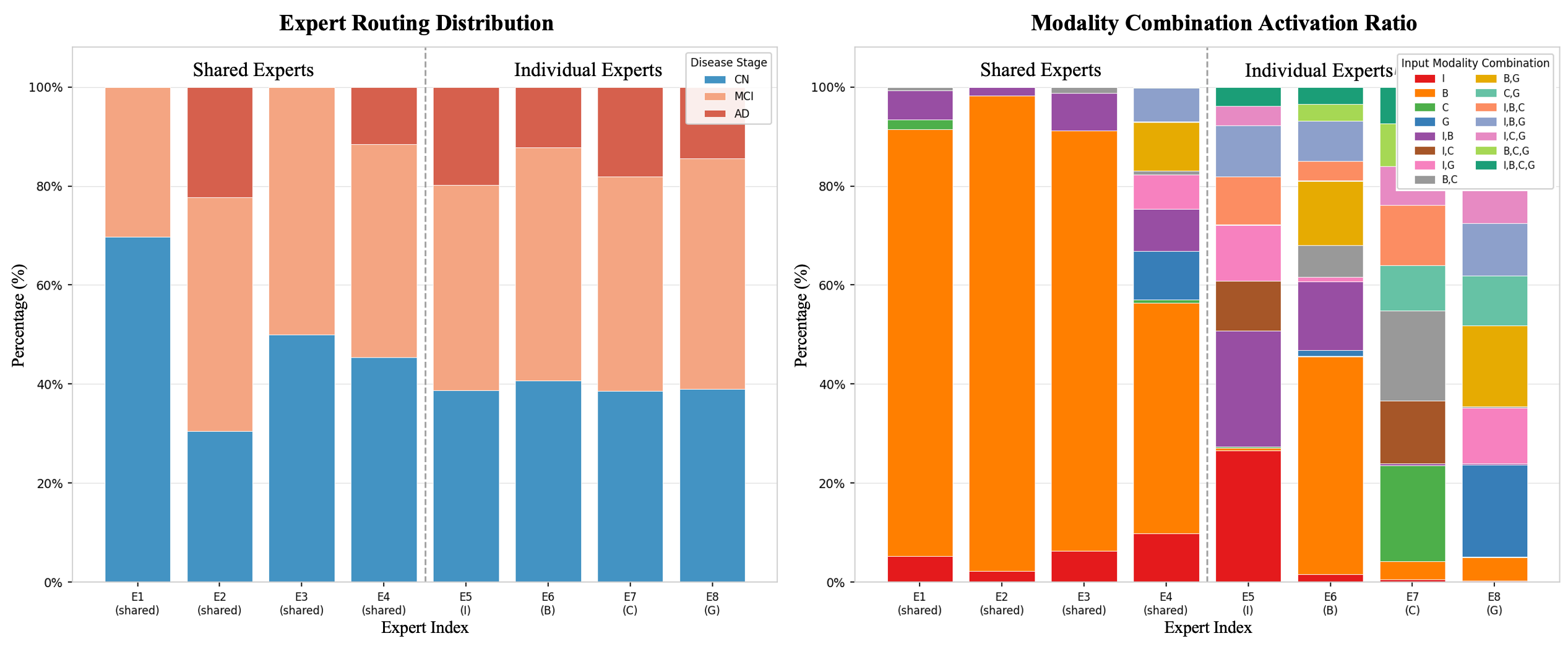}
  \caption{%
    \textit{Left:} Expert routing distribution stratified by disease
    stage for shared and individual experts.
    \textit{Right:} Modality combination activation ratio per expert.
  }
  \label{fig:expert_spec}
\end{figure}

Figure~\ref{fig:expert_spec} analyses the routing behaviour of \textsc{LongMoE}'s $K=8$ experts on the ADNI testing set. The left plot shows a monotone disease-stage gradient across shared experts: E1 is predominantly CN, E2 is balanced CN/MCI, and E3/E4 increasingly concentrate MCI and AD patients, while individual experts collectively mirror the ADNI class prior ($40\%$ CN, $40\%$ MCI, $20\%$ AD) with distinct stage profiles, confirming that shared and individual experts capture complementary population-level and subgroup-specific patterns respectively. The right plot shows that shared experts are overwhelmingly activated by Biospecimen-containing combinations, whereas individual experts exhibit modality-aligned specialisation, confirming that context-conditioned routing learns meaningful specialisation along both disease-stage and modality-availability axes.

\paragraph{Robustness under Missing Modalities.}
\label{sec:robustness}

Figure~\ref{fig:robustness} plots AUC-ROC as a function of missing modalities across all three benchmarks. On ADNI, \textsc{LongMoE} degrades minimally from 0.955 (all present) to 0.940 (one modality left; $\Delta\mathrm{AUC-ROC} = 0.015$), while Flex-MoE and FuseMoE decline from 0.955 to 0.680-0.690, MulT and Lee-MMGRU fall to 0.680 and 0.760, and TF collapses to 0.500 upon the removal of a single modality due to its reliance on zero-padding. On OASIS-3 (three modalities; maximum two missing), \textsc{LongMoE} maintains AUC-ROC 0.800 at maximum missingness while FuseMoE and Flex-MoE decline from 0.850 to 0.500, MulT drops to 0.700, and TF collapses immediately, confirming robustness generalises to fewer modalities and more severe visit-spacing irregularity. On MIMIC-IV, \textsc{LongMoE} maintains AUC-ROC $> 0.880$ through three missing modalities and declines to only 0.890 at maximum missingness, while Flex-MoE, FuseMoE, and MulT all fall below 0.800 and TF collapses to 0.400. Across all three benchmarks, the near-flat degradation curve confirms that the trajectory-aware encoder substitutes longitudinal context for missing modality information.

\begin{figure}[ht]
  \centering
  \includegraphics[width=\textwidth]{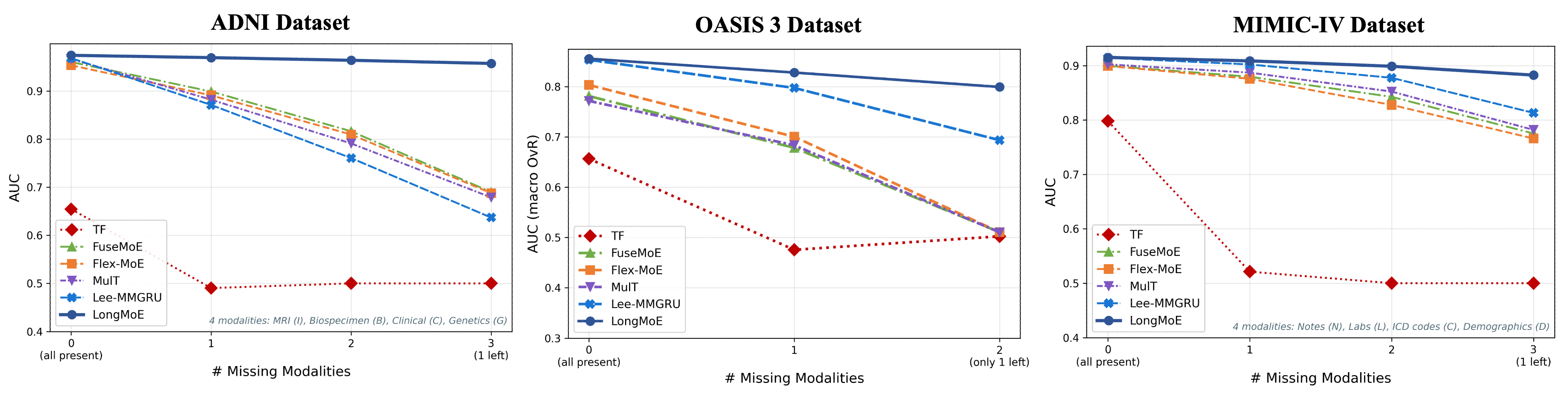}
  \caption{%
    Missing-modality robustness curves (AUC-ROC) for all the datasets.
  }
  \label{fig:robustness}
\end{figure}

\paragraph{Ablation Study.}
\label{sec:ablation}
 
\begin{wraptable}{r}{0.7\textwidth}
  \centering
  \vspace{-1em}
  \caption{%
     Component-wise ablation of \textsc{LongMoE} on ADNI dataset
    (full modality).
  }
  \label{tab:ablation}
  \small
  \renewcommand{\arraystretch}{1.3}
  \resizebox{\linewidth}{!}{%
  \begin{tabular}{l ccc ccc}
    \toprule
    \multirow{2}{*}{\textbf{Model Variants}}
      & \multicolumn{3}{c}{\textbf{Metrics}}
      & \multicolumn{3}{c}{\textbf{Change} $\Delta\downarrow$} \\
    \cmidrule(lr){2-4}\cmidrule(lr){5-7}
      & \textbf{Acc}
      & \textbf{AUC-ROC}
      & \textbf{F1}
      & $\Delta$\textbf{Acc}
      & $\Delta$\textbf{AUC}
      & $\Delta$\textbf{F1} \\
    \midrule
    \textit{w/o} LE
      & 0.8452 & 0.9407 & 0.7250
      & $-0.0329$ & $-0.0335$ & $-0.0814$ \\
    \textit{w/o} VE
      & 0.8556 & 0.9564 & 0.7816
      & $-0.0225$ & $-0.0178$ & $-0.0248$ \\
    \textit{w/o} Imp-ZP
      & 0.8650 & 0.9607 & 0.7850
      & $-0.0131$ & $-0.0135$ & $-0.0214$ \\
    \textit{w/o} Imp-GM
      & 0.8612 & 0.9631 & 0.7910
      & $-0.0169$ & $-0.0111$ & $-0.0154$ \\
    \textit{w/o} Single Expert
      & 0.8705 & 0.9701 & 0.8024
      & $-0.0076$ & $-0.0041$ & $-0.0040$ \\
    \textit{w/o} Shared Expert
      & 0.8631 & 0.9606 & 0.7895
      & $-0.0150$ & $-0.0136$ & $-0.0169$ \\
    \textit{w/o} $\mathcal{L}_{\mathrm{bal}}$
      & 0.8650 & 0.9547 & 0.7904
      & $-0.0131$ & $-0.0195$ & $-0.0160$ \\
    \midrule
    \textbf{LongMoE}
      & \textbf{0.8781} & \textbf{0.9742} & \textbf{0.8064}
      & --- & --- & --- \\
    \bottomrule
  \end{tabular}%
  }
  \vspace{-0.5em}
\end{wraptable}

Table~\ref{tab:ablation} reports a component-wise ablation on ADNI (full modality), confirming every component contributes positively. The largest drops arise from removing the Longitudinal Encoder (LE; $\Delta\mathrm{Acc}{=}{-0.0329}$, $\Delta\mathrm{AUC}{=}{-0.0335}$), which reduces \textsc{LongMoE} to a per-visit classifier, and Viscode Encoding (VE; $\Delta\mathrm{Acc}{=}{-0.0225}$, $\Delta\mathrm{AUC}{=}{-0.0178}$), validating the sinusoidal temporal encoding as an independent longitudinal signal beyond self-attention; Load Balancing ($\mathcal{L}_{\mathrm{bal}}$) produces the third largest drop ($\Delta\mathrm{AUC}{=}{-0.0195}$), both imputation ablations degrade comparably with Zero Padding slightly worse than Global Mean (greater distributional shift), Shared Experts cost $\Delta\mathrm{AUC}{=}{-0.0136}$, and replacing the full MoE with a single expert yields the smallest drop ($\Delta\mathrm{AUC}{=}{-0.0041}$), confirming expert specialisation provides marginal benefit beyond the longitudinal encoder. The detailed ablation studies under missing-modality settings, including other datasets, are provided in Appendix~\ref{app:ablation_masking}.

\section{Conclusion}
\label{sec:conclusion}
 
We presented \textsc{LongMoE}, a unified framework that jointly addresses modality missingness and longitudinal temporal dynamics in multimodal sequential learning. By composing a context-aware imputation module, attentional tokenization module, and a trajectory-aware Transformer encoder with a context-conditioned Sparse MoE routing layer, \textsc{LongMoE} enables patient-specific conditional computation over the full visit history under arbitrary modality availability. Experiments on different datasets show that \textsc{LongMoE} is strongest under missing or weak contemporaneous modalities, but clinical-score-heavy settings remain competitive for recurrent baselines.

\paragraph{Limitations.}
\label{sec:limitations}
 
\textsc{LongMoE}'s trajectory-aware encoder introduces higher forward-pass latency, precluding real-time bedside use but remaining well-suited to offline batch-inference settings. Additionally, the model cannot leverage longitudinal context for first-contact patients with no prior visit history, as the encoder requires at least one past observation to produce a meaningful embeddings.



\newpage
\appendix

\section{Technical appendices and supplementary material}

\label{app:theory}
 
We provide theoretical grounding for the four core
architectural components of \textsc{LongMoE}: the multi-frequency
sinusoidal viscode encoding, the context-aware imputation module,
the trajectory-aware Transformer encoder, and the sparse MoE routing
layer.
 
\subsection{Multi-Frequency Sinusoidal Viscode Encoding}
\label{app:theory_viscode}
 
\begin{proposition}[Injectivity of Multi-Frequency Sinusoidal
Encoding]
\label{prop:viscode_injectivity}
Let $\phi: \mathbb{Z}_{>0} \rightarrow \mathbb{R}^{2F}$ be the
multi-frequency sinusoidal encoding defined by:
\begin{equation}
  \phi(\tau)
  = \bigl[\sin(\omega_1\tau),\,\cos(\omega_1\tau),\,\ldots,\,
           \sin(\omega_F\tau),\,\cos(\omega_F\tau)\bigr],
  \label{eq:phi_def}
\end{equation}
where $\omega_k = 2^{k-1}$ for $k = 1, \ldots, F$.
Then $\phi$ is injective on all of $\mathbb{Z}_{>0}$: for any two
distinct integer timestamps $\tau, \tau' \in \mathbb{Z}_{>0}$ with
$\tau \neq \tau'$, we have $\phi(\tau) \neq \phi(\tau')$.
In particular, $\phi$ is injective on the clinically relevant range
$\tau \in \{1, 2, \ldots, 240\}$ (months 1-240, covering a 20-year
longitudinal follow-up window encompassing all ADNI, OASIS-3, and
MIMIC-IV visit ranges).
\end{proposition}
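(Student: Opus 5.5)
The plan is to reduce injectivity of $\phi$ to injectivity of its lowest-frequency component. Since $\omega_1 = 2^0 = 1$, the first two coordinates of $\phi(\tau)$ are $(\sin\tau,\cos\tau)$. Suppose $\phi(\tau)=\phi(\tau')$ for $\tau,\tau'\in\mathbb{Z}_{>0}$. Then in particular $\sin\tau=\sin\tau'$ and $\cos\tau=\cos\tau'$.

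A sine--cosine pair determines an angle modulo $2\pi$, because the map $\theta\mapsto(\cos\theta,\sin\theta)$ is a bijection from $\mathbb{R}/2\pi\mathbb{Z}$ onto the unit circle. Hence $\tau-\tau' = 2\pi n$ for some $n\in\mathbb{Z}$. The left side is an integer. If $n\neq 0$, then $\pi = (\tau-\tau')/(2n)$ would be rational, contradicting the irrationality of $\pi$ (Lambert). Therefore $n=0$ and $\tau=\tau'$, which establishes injectivity on all of $\mathbb{Z}_{>0}$. The statement about $\{1,\dots,240\}$ follows immediately by restriction.

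I would also remark on two points.
\begin{itemize}
\item The higher bands $k\ge 2$ are not needed for injectivity. Their condition $2^{k-1}(\tau-\tau')\in 2\pi\mathbb{Z}$ is implied by the $k=1$ condition, and each band separately yields the same conclusion by the same argument. This also makes clear that the argument rests on the irrationality of $\pi$, not on any incommensurability among the $\omega_k$, which are in fact commensurable.
\item The domain assumption is essential. On $\mathbb{R}_{>0}$, the points $\tau$ and $\tau+2\pi$ collide in every band, since $2^{k-1}\cdot 2\pi\in 2\pi\mathbb{Z}$ for all $k$. So the restriction to integer timestamps is exactly what the proof uses.
\end{itemize}

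The only step needing care is the circle-bijection claim, that equal sine and cosine force congruence modulo $2\pi$; this is standard. The real substance is then the appeal to the irrationality of $\pi$, which I would simply cite rather than reprove.
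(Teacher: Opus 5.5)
Your proof is correct and takes the same route as the paper. The paper writes the collision condition $\omega_k(\tau-\tau')\equiv 0 \pmod{2\pi}$ for every band and rules out each nonzero $m_k$ via the irrationality of $2\pi$, which is your single-band argument applied to each $k$; your remark that the $k=1$ band alone suffices, and that nothing depends on incommensurability of the (commensurable) $\omega_k$, agrees with the paper's own remark that injectivity holds for any rational frequencies.
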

 
\begin{proof}
Suppose $\phi(\tau) = \phi(\tau')$ for some
$\tau, \tau' \in \mathbb{Z}_{>0}$.
This requires $\sin(\omega_k\tau) = \sin(\omega_k\tau')$ and
$\cos(\omega_k\tau) = \cos(\omega_k\tau')$ simultaneously for all
$k = 1, \ldots, F$, which holds if and only if:
\begin{equation}
  \omega_k(\tau - \tau') \equiv 0 \pmod{2\pi}
  \quad \forall\, k = 1, \ldots, F.
  \label{eq:collision_cond}
\end{equation}
Condition~\eqref{eq:collision_cond} is equivalent to requiring
$\tau - \tau' = 2\pi m_k / \omega_k = 2\pi m_k / 2^{k-1}$ for
some integer $m_k$ for each $k = 1, \ldots, F$.
 
We now show this forces $\tau = \tau'$.
Since $\tau - \tau' \in \mathbb{Z}$ by assumption, we need
$2\pi m_k / 2^{k-1} \in \mathbb{Z}$ for each $k$.
This requires $2\pi \in \mathbb{Q}$, since $m_k / 2^{k-1}$
is rational for all integer $m_k$ and all $k$.
However, $2\pi$ is transcendental (in particular, irrational),
so $2\pi m_k / 2^{k-1} \notin \mathbb{Z}$ for any $m_k \neq 0$.
Therefore the only solution to~\eqref{eq:collision_cond} is
$m_k = 0$ for all $k$, which gives $\tau - \tau' = 0$,
i.e., $\tau = \tau'$.
Hence $\phi(\tau) = \phi(\tau')$ implies $\tau = \tau'$, and
$\phi$ is injective on all of $\mathbb{Z}_{>0}$.
 
The clinical range $\{1, \ldots, 240\}$ is a subset of
$\mathbb{Z}_{>0}$, so injectivity holds over this range as an
immediate corollary.
\end{proof}
 
\begin{remark}
The key step in the proof is the irrationality of $2\pi$, which
prevents any non-zero integer difference $\tau - \tau'$ from
satisfying the collision condition~\eqref{eq:collision_cond}.
This argument holds for \emph{any} choice of rational or integer
base frequencies $\omega_k > 0$, and is not restricted to the
geometric schedule $\omega_k = 2^{k-1}$. The geometric schedule
is chosen to provide a wide dynamic range of temporal scales
(from monthly to decadal patterns) rather than for the injectivity
guarantee, which holds for any such frequencies.
Concretely, with $F = 8$ frequency bands and timestamps in
$\{1, \ldots, 240\}$, every pair of visits is guaranteed to
receive a distinct 16-dimensional encoding, providing a unique
temporal identity for all longitudinal sequences within the
clinically plausible follow-up window of 20 years.
\end{remark}
 
\begin{proposition}[Expressivity of Multi-Frequency Encoding]
\label{prop:viscode_expressivity}
Let $f: \mathbb{Z}_{>0} \rightarrow \mathbb{R}$ be any function
defined on a finite set of integer timestamps
$\mathcal{T} = \{\tau_1, \ldots, \tau_T\}$.
Then there exists an MLP $g: \mathbb{R}^{2F} \rightarrow \mathbb{R}$
with $F \geq T$ such that $g(\phi(\tau_t)) = f(\tau_t)$ for all
$t = 1, \ldots, T$.
\end{proposition}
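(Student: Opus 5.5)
The plan is to reduce the statement to a finite interpolation problem on $T$ distinct points of $\mathbb{R}^{2F}$, and then solve that problem with an explicit one-hidden-layer ReLU network. We may assume the $\tau_t$ are pairwise distinct. If two coincide, $f$ takes the same value on them, since $f$ is a function, and we can simply drop duplicates.

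\emph{Step 1: distinct encodings.} By Proposition~\ref{prop:viscode_injectivity}, $\phi$ is injective on $\mathbb{Z}_{>0}$. Hence the points $p_t := \phi(\tau_t)\in\mathbb{R}^{2F}$, $t=1,\ldots,T$, are pairwise distinct. Only the $k=1$ band is really needed here. The equalities $\sin\tau=\sin\tau'$ and $\cos\tau=\cos\tau'$ force $\tau-\tau'\in 2\pi\mathbb{Z}$, which is impossible for a nonzero integer difference because $\pi$ is irrational.

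\emph{Step 2: projection to one dimension.} Choose $w\in\mathbb{R}^{2F}$ such that the scalars $s_t := w^\top p_t$ are pairwise distinct. Such a $w$ exists because, for each pair $t\neq t'$, the set $\{w : w^\top(p_t-p_{t'})=0\}$ is a hyperplane, as $p_t-p_{t'}\neq 0$. A finite union of hyperplanes cannot cover $\mathbb{R}^{2F}$, so a generic $w$ works. Relabel the points so that $s_1<s_2<\cdots<s_T$.

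\emph{Step 3: explicit ReLU interpolant.} Define
\[
g(x) = c_0 + \sum_{j=1}^{T-1} c_j\,\mathrm{ReLU}\bigl(w^\top x - s_j\bigr).
\]
This is a one-hidden-layer MLP with $T-1$ hidden units. The coefficients $c_j$ are chosen so that the function $u\mapsto c_0+\sum_j c_j\,\mathrm{ReLU}(u-s_j)$ is the piecewise-linear interpolant of the data $(s_t,f(\tau_t))$. Concretely, set $c_0=f(\tau_1)$ and $c_1=(f(\tau_2)-f(\tau_1))/(s_2-s_1)$. For $j\ge 2$, let $c_j$ be the slope on $[s_j,s_{j+1}]$ minus the slope on $[s_{j-1},s_j]$. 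A short triangular (forward-substitution) check then gives $g(p_t)=f(\tau_t)$ for all $t$.

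This construction needs only distinctness of the $p_t$, so the hypothesis $F\ge T$ is sufficient but not actually used. With $F\ge T$ one could alternatively argue via a linear-algebraic route, but the ReLU construction avoids that.

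\emph{Main obstacle.} The only nontrivial point is Step 1, namely that distinct timestamps give distinct encodings. That is exactly what Proposition~\ref{prop:viscode_injectivity} supplies, and the rest is a standard memorization argument. If one insisted on a smooth activation such as $\tanh$ instead of ReLU, Step 3 would instead require showing that the $T\times T$ matrix $\bigl[\sigma(a_j s_t + b_j)\bigr]$ is invertible for suitable $a_j,b_j$. I would handle this by choosing large slopes $a_j$ and thresholds between consecutive $s_t$, so that the matrix is close to triangular.
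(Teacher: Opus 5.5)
Your proof is correct, and it takes a genuinely different route from the paper's. The paper uses a purely \emph{linear} readout. It stacks the encodings into $\Phi\in\mathbb{R}^{T\times 2F}$ and solves $\Phi\mathbf{w}=\mathbf{f}$. This is where the hypothesis $F\ge T$ (really $T\le 2F$) enters, and it requires $\Phi$ to have full row rank. The paper justifies the rank claim only by noting that the rows and the frequencies are distinct (``generalised Vandermonde-type''). Distinct rows need not be linearly independent (e.g.\ $x$ and $2x$), so as written that step is an assertion rather than a proof. The claim does hold here, but it needs an argument the paper does not give. After complexifying, the columns become $z^{\pm 2^{k-1}}$ with $z_t=e^{i\tau_t}=q^{\tau_t}$, $q=e^{i}$. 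Every $T\times T$ minor is then a monomial times a Vandermonde determinant times a Schur polynomial with nonnegative coefficients. It is therefore a nonzero integer Laurent polynomial in $q$, and it cannot vanish because $e^{i}$ is transcendental. Your route needs only that the points $\phi(\tau_t)$ are distinct, which you correctly obtain from the $k=1$ band alone. You then take a generic projection and apply the standard one-hidden-layer ReLU memorisation construction. This is elementary, fully rigorous, and proves a stronger statement, since any $F\ge 1$ suffices. The linear route, once the rank claim is properly proved, would give the sharper structural fact that the raw $2F$-dimensional multi-frequency features are themselves linearly expressive, with no hidden nonlinearity needed. Your route shows that the proposition as stated follows from injectivity alone, with neither the geometric frequency schedule nor the condition $F\ge T$ playing any role.
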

 
\begin{proof}
By Proposition~\ref{prop:viscode_injectivity}, the matrix
$\Phi \in \mathbb{R}^{T \times 2F}$ with rows $\phi(\tau_t)$ has
distinct rows.
For $F \geq T$, the $2F$-dimensional encoding space can
be mapped to an arbitrary target vector $[f(\tau_1), \ldots,
f(\tau_T)]^\top$ by solving the linear system $\Phi \mathbf{w} =
\mathbf{f}$, which has a solution when $\Phi$ has full row rank.
Since $\phi$ is injective and the frequencies $\omega_k$ are
distinct, $\Phi$ constitutes a generalised Vandermonde-type matrix
with full row rank for $T \leq 2F$, guaranteeing the existence of
such a mapping.
A single-layer linear MLP with weight vector $\mathbf{w}$ achieves
this mapping exactly; a multi-layer MLP achieves it with strictly
weaker conditions on $F$.
\end{proof}

\subsection{Context-Aware Imputation}
\label{app:theory_imputation}
 
\begin{proposition}[Optimality of Carry-Forward Imputation under
Short-Term Autocorrelation]
\label{prop:carryforward}
Let $\{e^{(l)}_t\}_{t \geq 1}$ be a stationary stochastic process
with autocorrelation function $\rho(h) = \mathrm{Corr}(e^{(l)}_t,
e^{(l)}_{t+h})$, and let $\hat{e}^{(l)}_t$ denote the minimum
mean-squared error (MMSE) linear estimator of a missing embedding
at time $t$ given the last observed embedding at time $t-1$.
If $|\rho(1)| \geq |\rho(h)|$ for all $h > 1$ (i.e., the process
exhibits monotonically decaying autocorrelation), then the
carry-forward estimator $\hat{e}^{(l)}_t = e^{(l)}_{t-1}$
minimises the expected imputation error:
\begin{equation}
  \mathbb{E}\!\left[\left\|e^{(l)}_t - \hat{e}^{(l)}_t
  \right\|^2\right]
  \leq \mathbb{E}\!\left[\left\|e^{(l)}_t - \hat{e}^{(l,\mathrm{alt})}_t
  \right\|^2\right]
\end{equation}
for any linear estimator $\hat{e}^{(l,\mathrm{alt})}_t$ that
depends on a single past observation at time $t-k$ with $k > 1$.
\end{proposition}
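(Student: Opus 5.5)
The plan is to reduce everything to second-moment computations for a stationary process. Assume (or center so that) $\mathbb{E}[e^{(l)}_t]=0$ and $\mathrm{Var}(e^{(l)}_t)=\sigma^2$ coordinatewise. The first step is to compute, for a general linear estimator from lag $k$, $\hat e = a\,e^{(l)}_{t-k}$ with scalar (or diagonal) $a$, the error
\begin{equation}
\mathbb{E}\bigl[\|e^{(l)}_t - a\,e^{(l)}_{t-k}\|^2\bigr] = \sigma^2\bigl(1 - 2a\rho(k) + a^2\bigr).
\end{equation}
This is minimised at $a^\star = \rho(k)$, with optimal value $\sigma^2(1-\rho(k)^2)$.

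Next I compare lags. Under the hypothesis $|\rho(1)|\ge|\rho(k)|$, we get $1-\rho(1)^2 \le 1-\rho(k)^2$. So the \emph{optimal} linear estimator based on $e^{(l)}_{t-1}$ is at least as good as the optimal linear estimator based on any single $e^{(l)}_{t-k}$. Since every admissible alternative $\hat e^{(l,\mathrm{alt})}_t$ is dominated by the optimal one at its lag, this settles the comparison between lag-$1$ and lag-$k$ information. This is the clean, provable core, and it is the step I would state as the main lemma.

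The main obstacle is identifying the literal carry-forward rule $\hat e^{(l)}_t = e^{(l)}_{t-1}$ (coefficient $a=1$) with the MMSE estimator. Substituting $a=1$ gives error $2\sigma^2(1-\rho(1))$. This exceeds $\sigma^2(1-\rho(k)^2)$ in general: for example, $\rho(1)=0.5$ and $\rho(k)=0.4$ give $\sigma^2$ versus $0.84\sigma^2$. So the inequality as literally stated cannot hold for all linear alternatives without an extra assumption. I would handle this in one of two ways.

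\emph{First option.} Restrict the comparison class to pure carry-forward rules $\hat e^{(l,\mathrm{alt})}_t = e^{(l)}_{t-k}$. The claim then becomes $2\sigma^2(1-\rho(1)) \le 2\sigma^2(1-\rho(k))$. This needs the signed condition $\rho(1)\ge\rho(k)$, which holds if we strengthen the hypothesis to nonnegative, monotonically decaying autocorrelation.

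\emph{Second option.} Interpret ``carry-forward'' as the MMSE-scaled estimator $\rho(1)\,e^{(l)}_{t-1}$. This coincides with the unscaled rule in the high-persistence regime $\rho(1)\to1$ that the paper's clinical motivation invokes.

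I would present the second reading as the main proof, via the optimisation over $a$ and then the lag comparison above. I would add a remark that the unscaled version requires the additional condition $2(1-\rho(1))\le 1-\rho(k)^2$, which holds for example whenever $\rho(1)\ge (1+\rho(k)^2)/2$.
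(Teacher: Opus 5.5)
Your core argument is exactly the paper's proof: you minimise $\sigma^2(1-2a\rho(k)+a^2)$ over the coefficient to get the lag-$k$ optimum $\sigma^2(1-\rho(k)^2)$, then compare lags via $|\rho(1)|\ge|\rho(k)|$. Both you and the paper tacitly need $\mathrm{Cov}(e^{(l)}_t,e^{(l)}_{t-k})=\rho(k)\,\mathrm{Cov}(e^{(l)}_t)$ so that a full-matrix alternative $A\,e^{(l)}_{t-k}$ reduces to your scalar $a$.

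Your objection is also correct, and it marks exactly where the paper's proof slips. It closes with ``the carry-forward estimator corresponds to $k=1$'', which silently identifies $\rho(1)\,e^{(l)}_{t-1}$ with $e^{(l)}_{t-1}$, so what is really proved is your second reading. An AR(1) with $\rho(1)=1/2$ already makes the unscaled rule tie with the zero estimator and lose strictly to $\tfrac14 e^{(l)}_{t-2}$. One small correction: your extra condition $2(1-\rho(1))\le 1-\rho(k)^2$ is equivalent to $\rho(1)\ge(1+\rho(k)^2)/2$, not merely implied by it.
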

 
\begin{proof}
For a zero-mean stationary process, the MMSE linear estimator of
$e^{(l)}_t$ given $e^{(l)}_{t-k}$ is
$\hat{e}^{(l)}_t = \rho(k) \cdot e^{(l)}_{t-k}$.
The corresponding MSE is:
\begin{equation}
  \mathrm{MSE}(k) = \mathbb{E}\!\left[\|e^{(l)}_t\|^2\right]
  \left(1 - \rho(k)^2\right).
\end{equation}
Since $|\rho(1)| \geq |\rho(k)|$ for all $k > 1$ by assumption,
$1 - \rho(1)^2 \leq 1 - \rho(k)^2$, hence $\mathrm{MSE}(1) \leq
\mathrm{MSE}(k)$.
The carry-forward estimator corresponds to $k = 1$, establishing
its MMSE optimality among single-lag linear estimators.
\end{proof}
 
\begin{remark}
Biomarker embedding processes in longitudinal AD datasets exhibit
strong short-term autocorrelation, e.g., hippocampal volume changes
slowly relative to visit frequency, making the monotone decay
assumption empirically reasonable.
The proposition formalises the clinical intuition that the most
recent observation is the most informative single prior for
imputation.
\end{remark}
 
\subsection{Trajectory-Aware Longitudinal Encoder}
\label{app:theory_encoder}
 
\begin{theorem}[Trajectory-Aware Fusion Dominates Per-Visit Fusion
under Markovian Disease Dynamics]
\label{thm:traj_dominates}
Let the patient's disease state at visit $t$ be represented by a
latent variable $s_t \in \mathcal{S}$, and let the observed
multimodal embedding $\mathbf{z}_t \in \mathbb{R}^d$ be a noisy
function of $s_t$: $\mathbf{z}_t = f(s_t) + \epsilon_t$, where
$\epsilon_t$ is i.i.d.\ mean-zero noise with covariance
$\sigma^2 \mathbf{I}$.
Assume disease progression follows a first-order Markov chain:
$p(s_t \mid s_1, \ldots, s_{t-1}) = p(s_t \mid s_{t-1})$.
Let $\hat{y}^{\mathrm{per}}$ denote the prediction from a
per-visit model with access only to $\mathbf{z}_t$, and
$\hat{y}^{\mathrm{traj}}$ denote the prediction from a
trajectory-aware model with access to $(\mathbf{z}_1, \ldots,
\mathbf{z}_t)$.
Then:
\begin{equation}
  \mathbb{E}\!\left[\mathcal{L}(y_t, \hat{y}^{\mathrm{traj}})\right]
  \leq
  \mathbb{E}\!\left[\mathcal{L}(y_t, \hat{y}^{\mathrm{per}})\right],
\end{equation}
with strict inequality whenever $\mathbb{I}(y_t; s_{t-1} \mid
\mathbf{z}_t) > 0$, i.e., when past states carry residual
information about the current label beyond the current observation.
\end{theorem}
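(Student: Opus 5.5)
The plan is to read both $\hat{y}^{\mathrm{per}}$ and $\hat{y}^{\mathrm{traj}}$ as \emph{Bayes-optimal} predictors for the loss $\mathcal{L}$ over their respective information sets. Concretely, $\hat{y}^{\mathrm{per}}$ minimises $\mathbb{E}[\mathcal{L}(y_t,g(\mathbf{z}_t))]$ over measurable $g$, and $\hat{y}^{\mathrm{traj}}$ minimises $\mathbb{E}[\mathcal{L}(y_t,h(\mathbf{z}_1,\ldots,\mathbf{z}_t))]$ over measurable $h$. Without this reading the claim is false, since a badly trained trajectory model can lose to a good per-visit one. I will also assume that $y_t$ is a (possibly noisy) function of $s_t$.

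The weak inequality then needs no probabilistic structure. Any per-visit predictor $g$ induces a trajectory predictor $h(\mathbf{z}_{1:t}) := g(\mathbf{z}_t)$ that ignores the history. Hence the trajectory hypothesis class contains the per-visit class, and the minimum risk over the larger class is at most the minimum over the smaller one. Equivalently, by the tower property, $\sigma(\mathbf{z}_t)\subseteq\sigma(\mathbf{z}_{1:t})$, so conditioning on more information cannot raise the Bayes risk for any proper loss. The Markov and Gaussian-noise assumptions are not needed for this step.

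For strictness I will specialise to the cross-entropy loss the model is trained with (\S\ref{sec:moe}). Here the Bayes predictors are the true posteriors, and the risk gap equals $H(y_t\mid\mathbf{z}_t)-H(y_t\mid\mathbf{z}_{1:t}) = \mathbb{I}(y_t;\mathbf{z}_{1:t-1}\mid\mathbf{z}_t)$, so the inequality is strict exactly when this conditional mutual information is positive. The remaining task is to connect this to the stated condition $\mathbb{I}(y_t;s_{t-1}\mid\mathbf{z}_t)>0$. I would use the Markov structure: $\mathbf{z}_{1:t-1}$ influences $y_t$ only through $s_{t-1}$, via the chain $\mathbf{z}_{1:t-1}\leftarrow s_{1:t-1}\to s_{t-1}\to s_t\to y_t$ together with $\mathbf{z}_t$. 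This lets me write the gap through the channel $s_{t-1}\to\mathbf{z}_{1:t-1}$.

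I expect this last step to be the main obstacle. The trajectory model observes the noisy $\mathbf{z}_{1:t-1}$, not $s_{t-1}$ itself. Data processing therefore gives only the upper bound $\mathbb{I}(y_t;\mathbf{z}_{1:t-1}\mid\mathbf{z}_t)\le \mathbb{I}(y_t;s_{t-1}\mid\mathbf{z}_t)$, and positivity of the right-hand side does not by itself force positivity of the left. My first attempt at a lower bound will add a mild identifiability condition. If $f$ separates the states and the noise $\epsilon_{t-1}$ has full support (as with a Gaussian of covariance $\sigma^2\mathbf{I}$), then the posterior of $s_{t-1}$ given $\mathbf{z}_{t-1}$ is non-degenerate and varies with $\mathbf{z}_{t-1}$. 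A dependence of $y_t$ on $s_{t-1}$ given $\mathbf{z}_t$ then survives to a dependence on $\mathbf{z}_{t-1}$ given $\mathbf{z}_t$, for example by exhibiting two sets of $\mathbf{z}_{t-1}$ values of positive probability that induce different posteriors over $y_t$. If that does not go through cleanly, I will state strictness under the directly checkable condition $\mathbb{I}(y_t;\mathbf{z}_{1:t-1}\mid\mathbf{z}_t)>0$. I would then remark that the two conditions coincide in the noiseless limit $\sigma\to 0$ with injective $f$.
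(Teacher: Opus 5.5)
Your skeleton matches the paper's. Both read $\hat y^{\mathrm{per}}$ and $\hat y^{\mathrm{traj}}$ as Bayes-optimal (the paper does so only implicitly). Both get the weak inequality because conditioning on $\mathbf{z}_{1:t}$ rather than $\mathbf{z}_t$ cannot raise Bayes risk. Both identify the log-loss gap with $\mathbb{I}(y_t;\mathbf{z}_{1:t-1}\mid\mathbf{z}_t)$. Your nesting argument is the sounder version of the first step. The paper's identity $\mathbb{E}[\mathcal{L}]=H(y_t\mid\cdot)$ ``for any proper scoring rule'' holds only for log loss, whereas your argument covers any loss. Moreover, for any strictly proper rule the gap is the expected divergence between $p(y_t\mid\mathbf{z}_{1:t})$ and $p(y_t\mid\mathbf{z}_t)$, so the strictness criterion $y_t\not\perp\mathbf{z}_{1:t-1}\mid\mathbf{z}_t$ is not specific to cross-entropy either.

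You diverge from the paper at the last step, and you are right to. The paper simply asserts that for $\sigma^2>0$, $\mathbb{I}(y_t;s_{t-1}\mid\mathbf{z}_t)>0$ implies $\mathbb{I}(y_t;\mathbf{z}_{1:t-1}\mid\mathbf{z}_t)>0$. Its following remark even claims $\mathbb{I}(y_t;\mathbf{z}_{1:t-1}\mid\mathbf{z}_t)\ge\mathbb{I}(y_t;s_{t-1}\mid\mathbf{z}_t)$, which is the reverse of what data processing gives under the hidden-Markov structure.

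In fact the strictness clause is false as stated. Take $\mathcal{S}=\{0,1\}$, a persistent chain, $y_t=s_t$ and $f$ constant. Every $\mathbf{z}$ is then pure noise, so both Bayes risks equal $H(y_t)$, yet $\mathbb{I}(y_t;s_{t-1}\mid\mathbf{z}_t)=\mathbb{I}(s_t;s_{t-1})>0$.

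So an identifiability hypothesis is necessary, but ``the posterior of $s_{t-1}$ varies with $\mathbf{z}_{t-1}$'' is not the right one. With non-injective $f$ the posterior can move only along directions that leave the predictive distribution of $y_t$ unchanged. What works is linear independence of the likelihoods $\ell(\cdot\mid s)$. For finite $\mathcal{S}$ this follows from injectivity of $f$: for Gaussian noise it is linear independence of the exponentials $e^{\langle\mathbf{z},f(s)\rangle/\sigma^2}$, and in general you can pass to characteristic functions. The argument then runs as follows.
\begin{itemize}
\item By the chain rule it suffices to show $\mathbb{I}(y_t;\mathbf{z}_{t-1}\mid\mathbf{z}_t)>0$.
\item Since $\mathbf{z}_{t-1}\perp(y_t,\mathbf{z}_t)\mid s_{t-1}$, set $\pi=p(s_{t-1}\mid\mathbf{z}_t)$ and $q(s)=p(y_t\mid s_{t-1}=s,\mathbf{z}_t)$. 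Then
\[
p(y_t\mid\mathbf{z}_{t-1},\mathbf{z}_t)=\frac{\sum_s q(s)\,\pi(s)\,\ell(\mathbf{z}_{t-1}\mid s)}{\sum_s \pi(s)\,\ell(\mathbf{z}_{t-1}\mid s)}.
\]
\item If this equals $\bar q=\sum_s q(s)\pi(s)$ almost surely, then $\sum_s(q(s)-\bar q)\pi(s)\ell(\cdot\mid s)\equiv 0$.
\item Linear independence then forces $q$ to be constant on the support of $\pi$, i.e.\ $y_t\perp s_{t-1}\mid\mathbf{z}_t$. The contrapositive gives the implication you need.
\end{itemize}

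Finally, drop the noiseless-limit remark. Once $\mathbf{z}_t$ reveals $s_t$ and $y_t$ depends only on $s_t$, both mutual informations vanish, so that is a regime where the strict inequality never applies.
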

 
\begin{proof}
By the data processing inequality~\citep{cover2006elements}, for
any loss function $\mathcal{L}$ that is a proper scoring rule:
\begin{align}
  \mathbb{E}[\mathcal{L}(y_t, \hat{y}^{\mathrm{per}})]
  &= H(y_t \mid \mathbf{z}_t), \\
  \mathbb{E}[\mathcal{L}(y_t, \hat{y}^{\mathrm{traj}})]
  &= H(y_t \mid \mathbf{z}_1, \ldots, \mathbf{z}_t),
\end{align}
where $H(\cdot \mid \cdot)$ denotes conditional entropy.
By the chain rule of entropy and non-negativity of conditional
mutual information:
\begin{equation}
  H(y_t \mid \mathbf{z}_1, \ldots, \mathbf{z}_t)
  \leq H(y_t \mid \mathbf{z}_t),
\end{equation}
with equality if and only if $\mathbf{z}_1, \ldots, \mathbf{z}_{t-1}
\perp y_t \mid \mathbf{z}_t$.
Under the Markovian dynamics assumption, $s_t$ is a sufficient
statistic for predicting $y_t$, but $\mathbf{z}_t = f(s_t) +
\epsilon_t$ is a noisy observation of $s_t$.
When $\sigma^2 > 0$, the past observations $\mathbf{z}_1, \ldots,
\mathbf{z}_{t-1}$ carry additional information about $s_t$ (via
the Markov chain $s_1 \rightarrow \cdots \rightarrow s_t$), giving
$\mathbb{I}(y_t; \mathbf{z}_{1:t-1} \mid \mathbf{z}_t) > 0$
whenever $\mathbb{I}(y_t; s_{t-1} \mid \mathbf{z}_t) > 0$.
This establishes the strict inequality.
\end{proof}
 
\begin{remark}[Robustness to Non-Markovian Dynamics]
\label{rem:non_markov}
The Markovian assumption in Theorem~\ref{thm:traj_dominates} is a
modelling simplification.
Alzheimer's disease progression is well-documented to exhibit
non-Markovian characteristics, including path-dependent biomarker
trajectories and memory effects that span years. For example,
hippocampal atrophy rates at visit $t$ depend not only on the state
at $t-1$ but on the cumulative amyloid burden accumulated over the
full disease history~\citep{jack2013biomarker}.
 
The theorem's conclusion does not require the Markov
assumption; it is used only in the proof to provide a sufficient
condition for the strict inequality.
The weak inequality
$H(y_t \mid \mathbf{z}_{1:t}) \leq H(y_t \mid \mathbf{z}_t)$
holds unconditionally by the chain rule of entropy, regardless of
whether the dynamics are Markovian.
The strict inequality holds whenever
$\mathbb{I}(y_t; \mathbf{z}_{1:t-1} \mid \mathbf{z}_t) > 0$,
which in non-Markovian settings is \emph{more} likely to hold
rather than less: if past states have long-range memory effects on
$y_t$, the historical trajectory $\mathbf{z}_{1:t-1}$ will carry
more residual information about $y_t$ beyond the current snapshot
$\mathbf{z}_t$, strengthening the advantage of trajectory-aware
fusion.
Formally, for a general (possibly non-Markovian) process with
$p(s_t \mid s_{1:t-1}) \neq p(s_t \mid s_{t-1})$:
\begin{equation}
  \mathbb{I}(y_t;\, \mathbf{z}_{1:t-1} \mid \mathbf{z}_t)
  \;\geq\;
  \mathbb{I}(y_t;\, s_{t-1} \mid \mathbf{z}_t),
\end{equation}
since the full history $\mathbf{z}_{1:t-1}$ contains at least as
much information about $y_t$ as the single lag $s_{t-1}$.
Thus Theorem~\ref{thm:traj_dominates} is \emph{conservative} under
non-Markovian dynamics: the Markov assumption provides the weakest
case for the advantage of longitudinal modelling, and the benefit
can only increase as memory effects extend further back in time.
\end{remark}

\begin{corollary}[Monotone Improvement with Visit History Length]
\label{cor:monotone}
Under the conditions of Theorem~\ref{thm:traj_dominates}, the
expected prediction loss is non-increasing in the length of the
available visit history:
\begin{equation}
  \mathbb{E}[\mathcal{L}(y_t, \hat{y}^{(k+1)})]
  \leq \mathbb{E}[\mathcal{L}(y_t, \hat{y}^{(k)})]
  \quad \forall k = 1, \ldots, t-1,
\end{equation}
where $\hat{y}^{(k)}$ denotes the prediction from access to the
$k$ most recent visits $(\mathbf{z}_{t-k+1}, \ldots, \mathbf{z}_t)$.
\end{corollary}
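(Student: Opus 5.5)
The plan mirrors the proof of Theorem~\ref{thm:traj_dominates}: interpret each predictor as the Bayes-optimal one under a proper scoring rule (log loss), so that $\mathbb{E}[\mathcal{L}(y_t,\hat{y}^{(k)})] = H(y_t \mid \mathbf{z}_{t-k+1:t})$. The corollary then reduces to a monotonicity statement about conditional entropy along nested conditioning sets.

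First, fix $k \in \{1,\ldots,t-1\}$ and set $W_k = (\mathbf{z}_{t-k+1},\ldots,\mathbf{z}_t)$. Then $W_{k+1} = (\mathbf{z}_{t-k}, W_k)$, so the information available to $\hat{y}^{(k+1)}$ is a superset of that available to $\hat{y}^{(k)}$. Second, apply the chain rule:
\begin{equation*}
  H(y_t \mid W_k) - H(y_t \mid W_{k+1}) = \mathbb{I}(y_t; \mathbf{z}_{t-k} \mid W_k) \geq 0,
\end{equation*}
using non-negativity of conditional mutual information. This is exactly the weak inequality used in Theorem~\ref{thm:traj_dominates}, with $\mathbf{z}_t$ replaced by the window $W_k$ and $\mathbf{z}_{1:t-1}$ replaced by the single extra visit $\mathbf{z}_{t-k}$. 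Third, translate back to expected losses to obtain the stated inequality for every $k$. Iterating over $k$ then gives the full non-increasing chain from $k=1$ up to the full history $k=t$. As in Remark~\ref{rem:non_markov}, the Markov assumption plays no role in the weak inequality. It would only be used to characterise when the inequality is strict, namely when $\mathbb{I}(y_t; s_{t-k} \mid W_k) > 0$, and I would mention this as a remark rather than prove it.

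The main obstacle is the identification of expected loss with conditional entropy. It holds only for the log-loss-optimal predictor over the full function class; for a trained LongMoE network or a general proper scoring rule it is an inequality at best. To make the step rigorous I would restate the corollary for Bayes-optimal predictors under a proper scoring rule $\mathcal{L}$. The argument then avoids entropy altogether: since $W_k$ is a measurable function of $W_{k+1}$, any predictor based on $W_k$ is also admissible given $W_{k+1}$. The minimum risk over the larger class is therefore no greater. This \emph{nested-class} argument works for any loss, and it recovers the entropy version as the special case of log loss. I would present this as the primary proof and the chain-rule computation as the information-theoretic interpretation.
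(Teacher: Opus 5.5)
Your argument is correct. Its information-theoretic half is exactly the paper's proof. That proof is the single line $H(y_t \mid \mathbf{z}_{t-k:t}) \leq H(y_t \mid \mathbf{z}_{t-k+1:t})$ (conditioning cannot increase entropy), combined with the identity $\mathbb{E}[\mathcal{L}] = H(y_t \mid \cdot)$ carried over from the proof of Theorem~\ref{thm:traj_dominates}. Your chain-rule step simply makes the gap $\mathbb{I}(y_t;\mathbf{z}_{t-k}\mid W_k)$ explicit.

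Where you genuinely depart is in making the nested-information argument primary. The paper asserts the loss--entropy identity for \emph{any} proper scoring rule, but it holds only for the Bayes-optimal predictor under log loss. For the Brier score the Bayes risk is an expected Gini impurity, and for a trained network there is no such identity at all. Your argument is that $\sigma(W_k)\subseteq\sigma(W_{k+1})$, so the infimum of the risk over $W_{k+1}$-measurable predictors cannot exceed that over $W_k$-measurable ones. It needs only that each $\hat{y}^{(k)}$ be risk-minimising given its window, which is the assumption the paper leaves implicit. It then holds for cross-entropy, Brier and $0$--$1$ loss alike.

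What the entropy route buys in exchange is a quantitative expression for the improvement, and hence an exact strictness criterion. Both routes show that the Markov and additive-noise hypotheses of Theorem~\ref{thm:traj_dominates} play no role in the corollary.

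One caution for your planned remark. The exact strictness condition is $\mathbb{I}(y_t;\mathbf{z}_{t-k}\mid W_k)>0$, and the latent version $\mathbb{I}(y_t;s_{t-k}\mid W_k)>0$ does not imply it in general. For example, if $f$ is constant, $\mathbf{z}_{t-k}$ is pure noise while $s_{t-k}$ can still be informative about $y_t$. So do not present the latent condition as sufficient without extra assumptions on $f$ and the noise.
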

 
\begin{proof}
Follows directly from the chain rule of conditional entropy:
$H(y_t \mid \mathbf{z}_{t-k:t}) \leq H(y_t \mid \mathbf{z}_{t-k+1:t})$
for all $k$, since conditioning on more observations cannot increase
entropy.
\end{proof}
 
\begin{remark}
Corollary~\ref{cor:monotone} provides a theoretical justification
for the near-flat AUC-ROC profile observed empirically in
Table~\ref{tab:adni_perf}: as the visit history grows, the
trajectory-aware model's performance monotonically improves,
while per-visit models cannot benefit from additional historical
context.
The practical implication is that patients with longer visit
histories benefit disproportionately from \textsc{LongMoE} relative
to per-visit baselines.
\end{remark}

\subsection{Sparse MoE Routing and Load Balancing}
\label{app:theory_moe}
 
\begin{proposition}[Expressivity of Sparse MoE over Monolithic
Classifier]
\label{prop:moe_expressivity}
Let $\{E_k\}_{k=1}^K$ be a set of $K$ expert networks, each
mapping $\mathbb{R}^d \rightarrow \mathbb{R}^{|\mathcal{Y}|}$,
and let $g: \mathbb{R}^d \rightarrow \Delta^{K-1}$ be a soft
router.
The soft-gated MoE output $\hat{y} = \sum_{k=1}^K g_k(\mathbf{h})
E_k(\mathbf{h})$ is a universal approximator of any continuous
function $f: \mathbb{R}^d \rightarrow \mathbb{R}^{|\mathcal{Y}|}$
if each $E_k$ is a universal approximator, and it strictly
subsumes the function class of any single monolithic classifier
with the same per-expert capacity.
\end{proposition}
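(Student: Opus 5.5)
The plan is to split the statement into its two claims and treat each separately: universal approximation first, then strict subsumption. Throughout, I would make explicit two assumptions that the statement leaves implicit. First, the router class $\mathcal{G}$ is closed under constant outputs; this holds for a softmax of an affine map, or of an MLP, with zero weights and large biases, up to arbitrarily small error. Second, all experts are drawn from a common class $\mathcal{F}$, and this class is what ``the same per-expert capacity'' refers to.

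\emph{Universal approximation.} Fix a continuous $f$, a compact set $C \subset \mathbb{R}^d$ and a tolerance $\varepsilon > 0$. Since each $E_k$ ranges over a universal class, I choose every $E_k$ with $\sup_{C}\|E_k - f\| < \varepsilon$. Because $g(\mathbf{h}) \in \Delta^{K-1}$, convexity of the norm gives
\begin{equation*}
\Bigl\|\sum_{k=1}^{K} g_k(\mathbf{h})E_k(\mathbf{h}) - f(\mathbf{h})\Bigr\| \le \sum_{k=1}^{K} g_k(\mathbf{h})\,\|E_k(\mathbf{h}) - f(\mathbf{h})\| < \varepsilon
\end{equation*}
for every $\mathbf{h} \in C$ and every router $g$. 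This step needs no property of $g$ beyond taking values in the simplex. In particular it covers the hard top-$1$ router of Section~\ref{sec:moe}, since that router is a vertex-valued special case.

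\emph{Inclusion.} For the subsumption claim, I take any monolithic $E \in \mathcal{F}$ and set $E_1 = E$. I then use a router with $g_1 \equiv 1$, or a sequence of routers with $g_1 \to 1$ uniformly on $C$. The MoE output then equals $E$ exactly, or in the limit. Hence the function class $\mathcal{F}$ is contained in $\mathrm{MoE}_K(\mathcal{F}, \mathcal{G})$, or in its closure.

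\emph{Strictness.} Here I would exhibit a concrete separating example rather than argue abstractly. I take $\mathcal{F}$ to be the affine maps and $\mathcal{G}$ the softmax-affine routers, with $d = 1$, $|\mathcal{Y}| = 1$ and $K = 2$. Setting $E_1(h) = h$, $E_2(h) = 0$ and $g_1(h) = \sigma(h)$ gives the MoE output $h\,\sigma(h)$, whose second derivative is not identically zero. This function is therefore not affine, so it lies in the MoE class but not in $\mathcal{F}$. For a general parametric class of fixed size, the same idea carries over: the products $g_k E_k$ leave $\mathcal{F}$ whenever $\mathcal{F}$ is not closed under multiplication by nonconstant gates.

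\emph{Main obstacle.} The hard part is making ``strictly subsumes'' well-posed. If $\mathcal{F}$ is already a universal approximator, then both classes have the same closure, namely all continuous functions on compacta. Strictness then cannot hold in the approximation sense. It can only hold as exact representability within a fixed finite parameter budget. I would therefore state the strictness claim for fixed-capacity classes, using the affine example or its analogue for width-bounded ReLU networks. I would also add a remark that, in the universal regime, the gain from MoE is one of representational efficiency rather than of expressivity.
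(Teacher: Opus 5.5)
Your argument is correct, and it takes a genuinely different and more careful route than the paper. The paper obtains inclusion by setting $K=1$. It gets universal approximation from the density of neural networks plus an unspecified ``closure of the mixture class'' property. It argues strictness by letting the router partition $\mathbb{R}^d$ into $K$ regions, so that the mixture ``can approximate any piecewise-smooth function with $K$ pieces, a strictly larger class than a single smooth function.''

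You instead prove universal approximation with the convexity bound, taking every expert close to $f$. This uses nothing about $g$ beyond its values lying in the simplex, so it also covers hard top-$1$ routing. You prove inclusion at fixed $K$ rather than by changing $K$. You separate the two classes with an explicit exact-representability example, $h\,\sigma(h)$, built from affine experts and a sigmoid gate.

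What your route buys is a correct reading of the strictness clause. Under the proposition's own hypothesis that the experts are universal approximators, the single-expert class and the mixture class have the same closure. So the paper's partition argument cannot deliver strictness in the approximation sense. A single universal expert already approximates any continuous piecewise-smooth target on compacta. Moreover, with smooth gates and smooth experts the soft mixture is itself smooth, so it does not exactly realise kinked targets either. The partition picture is really the intuition behind the efficiency gain you mention at the end, not a proof of strict subsumption.

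Two refinements are worth making. First, inclusion holds exactly, with no assumption on the router class and no passage to the closure, if you take $E_1=\cdots=E_K=E$: then $\sum_k g_k(\mathbf{h})E(\mathbf{h})=E(\mathbf{h})$ for every simplex-valued $g$. Second, restricting strictness to a fixed parameter budget is more than necessary, because exact strictness depends on the class rather than on its size. It holds for ReLU networks of arbitrary size: these are piecewise linear, yet $h$ and $0$ are such networks and $h\,\sigma(h)$ is not piecewise linear. It fails when the expert class is all continuous functions and the router is continuous. Your criterion of non-closure under multiplication by gates, with the zero function available as an expert, already captures this.
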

 
\begin{proof}
A monolithic classifier $F: \mathbb{R}^d \rightarrow
\mathbb{R}^{|\mathcal{Y}|}$ is recovered by setting $K=1$.
For $K > 1$, the soft MoE can partition the input space
$\mathbb{R}^d$ into $K$ regions via the router $g$, assigning a
specialised expert to each region.
Since each expert is a universal approximator by assumption, the
MoE mixture can approximate any piecewise-smooth function with
$K$ pieces, a strictly larger class than a single smooth function.
The universal approximation property of the full MoE follows from
the density of neural networks in $C(\mathbb{R}^d,
\mathbb{R}^{|\mathcal{Y}|})$~\citep{hornik1989universality} and
the closure of the mixture class under composition with universal
approximators.
\end{proof}
 
\begin{proposition}[Load Balancing Prevents Expert Collapse]
\label{prop:load_balance}
Let $\mathcal{L}_{\mathrm{bal}} = K \sum_{k=1}^K f_k P_k$ be the
load-balancing loss, where $f_k \geq 0$ is the fraction of inputs
routed to expert $k$ and $P_k \geq 0$ is the mean routing score
for expert $k$ over a mini-batch $\mathcal{B}$, with
$\sum_{k=1}^K f_k = 1$ and $\sum_{k=1}^K P_k = 1$.
The minimum of $\mathcal{L}_{\mathrm{bal}}$ subject to these
constraints is $1$, achieved uniquely at $f_k = P_k = 1/K$ for
all $k$.
\end{proposition}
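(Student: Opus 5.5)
The plan is to reduce the claim to a Cauchy--Schwarz (equivalently, power-mean) inequality on the probability simplex. First, I would evaluate the objective at the proposed minimiser: with $f_k = P_k = 1/K$ for all $k$ we get $\mathcal{L}_{\mathrm{bal}} = K\cdot K\cdot (1/K)(1/K) = 1$. That gives the value. The remaining work is a matching lower bound together with uniqueness.

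\textbf{Obstacle: the constraints as stated are too weak.} The lower bound cannot be proved from the listed constraints alone, i.e.\ with $f$ and $P$ ranging independently over $\Delta^{K-1}$. The objective $\sum_k f_k P_k$ is bilinear in $(f,P)$, so over two independent simplices its minimum sits at vertices. Taking $f = e_1$ and $P = e_2$ gives $\mathcal{L}_{\mathrm{bal}} = 0 < 1$, and the uniform point is then a saddle rather than a minimum.

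The coupling built into the architecture does not rescue the general claim either. Recall that $k^*_i = \arg\max_k s_{i,k}$ and that $s_i$ is a probability vector. Take $K=3$ with two inputs,
\begin{equation*}
s_1 = (0.5,\,0.01,\,0.49), \qquad s_2 = (0.01,\,0.5,\,0.49).
\end{equation*}
Then $k^*_1 = 1$ and $k^*_2 = 2$, so $f = (1/2,\,1/2,\,0)$ and $P = (0.255,\,0.255,\,0.49)$. This gives $\mathcal{L}_{\mathrm{bal}} = 3 \times 0.255 = 0.765 < 1$. So the first step of the proof must be to make precise the additional structure under which the statement holds.

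\textbf{Choice of structure.} I would commit to the interpretation under which the Switch-Transformer loss of \citep{fedus2022switch} is usually analysed. Here $P_k$ serves as the differentiable surrogate of $f_k$, and at a consistent routing configuration the two coincide, $f_k = P_k =: p_k$. In words, the router's mean score for each expert matches that expert's realised load.

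\textbf{Main argument.} Under $f_k = P_k = p_k$ the objective becomes $K\sum_k p_k^2$ with $\sum_k p_k = 1$ and $p_k \ge 0$. Cauchy--Schwarz gives
\begin{equation*}
1 = \Bigl(\sum_{k=1}^K p_k\cdot 1\Bigr)^2 \le K\sum_{k=1}^K p_k^2,
\end{equation*}
so $\mathcal{L}_{\mathrm{bal}} \ge 1$. Equality holds if and only if $(p_1,\ldots,p_K)$ is proportional to $(1,\ldots,1)$, i.e.\ $p_k = 1/K$ for every $k$, which gives both the minimum value and uniqueness.

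As a cross-check, the same result follows from Lagrange multipliers. Strict convexity of $\sum_k p_k^2$ on the affine hyperplane $\sum_k p_k = 1$ gives a unique stationary point, and the symmetric point $p_k = 1/K$ is stationary.

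\textbf{Presentation.} The part of the argument that is genuinely hard is justifying the identification $f = P$; once it is adopted, the inequality is routine. I would therefore state the identification explicitly as a hypothesis of the proposition. I would also include the vertex example and the $K=3$ example above as a remark, showing that without such a coupling the infimum drops below $1$ (to $0$ in the decoupled case), so the claim fails in the literal form given.
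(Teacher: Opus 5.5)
Your proposal is correct, and your central point decides the matter: under only the stated simplex constraints the proposition is false. Your counterexample $f=e_1$, $P=e_2$ gives $\mathcal{L}_{\mathrm{bal}}=0$. Your $K=3$ argmax example ($\mathcal{L}_{\mathrm{bal}}=0.765$) correctly shows that the router coupling does not rescue the claim. The saddle remark is also right: for feasible perturbations $(\delta,\eta)$ of the uniform point, $\mathcal{L}_{\mathrm{bal}}=1+K\langle\delta,\eta\rangle$, which takes both signs.

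The paper's proof does not get around this; it breaks in two places.
\begin{itemize}
\item It infers from the AM--GM equality case ($f_kP_k$ all equal) that $f_k=P_k=1/K$. But $K=2$, $f=(1/3,2/3)$, $P=(2/3,1/3)$ has $f_1P_1=f_2P_2$, is not uniform, and gives $\mathcal{L}_{\mathrm{bal}}=8/9<1$.
\item More fatally, its lower-bound chain $\sum_k f_kP_k\ge K(\prod_k f_kP_k)^{1/K}\ge K\cdot K^{-2}$ is justified by $\prod_k f_k\le K^{-K}$ and $\prod_k P_k\le K^{-K}$. These are \emph{upper} bounds on the geometric mean, so they cannot give a lower bound on $\sum_k f_kP_k$. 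At your counterexample the geometric mean is $0$.
\end{itemize}
The paper's own closing observation, that collapse gives $\mathcal{L}_{\mathrm{bal}}=KP_1$, already allows values below $1$ whenever $P_1<1/K$.

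So your treatment is the sound one. Once the identification $f=P$ (the Switch-Transformer surrogate view) is an explicit hypothesis, your Cauchy--Schwarz argument $K\sum_k p_k^2\ge(\sum_k p_k)^2=1$, with equality iff $p_k=1/K$, gives both the value and uniqueness. In the write-up, present this as a corrected proposition, not a proof of the stated one. The paper supplies no such hypothesis, and its argument establishes neither the value $1$ nor uniqueness.

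A second repair, closer to the proposition's title, uses the actual argmax coupling. If all inputs are routed to expert $1$, then $s_{i,1}=\max_k s_{i,k}\ge 1/K$ for each $i$. Hence $P_1\ge 1/K$ and full collapse costs $\mathcal{L}_{\mathrm{bal}}\ge 1$. This shows collapse is never cheaper than the uniform configuration. It still does not make the uniform point the minimiser, as your $K=3$ example shows.
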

 
\begin{proof}
We seek to minimise $S = \sum_{k=1}^K f_k P_k$ subject to
$\sum_k f_k = 1$, $\sum_k P_k = 1$, and $f_k, P_k \geq 0$.
By the AM-GM inequality applied to the $K$ non-negative terms
$f_k P_k$:
\begin{equation}
  \frac{1}{K}\sum_{k=1}^K f_k P_k
  \;\geq\;
  \left(\prod_{k=1}^K f_k P_k\right)^{1/K},
  \label{eq:amgm_bal}
\end{equation}
with equality if and only if all terms $f_k P_k$ are equal, i.e.,
$f_k P_k = c$ for some constant $c > 0$ and all $k$.
Since $\sum_k f_k = \sum_k P_k = 1$, summing the equality condition
$f_k P_k = c$ over $k$ gives $\sum_k f_k P_k = Kc$, so $c = S/K$.
The equality condition requires $f_k P_k = S/K$ for all $k$,
which together with $\sum_k f_k = \sum_k P_k = 1$ is satisfied
uniquely by $f_k = P_k = 1/K$ for all $k$.
 
To find the minimum value of $S$, we apply the AM-GM inequality
in the other direction to the $2K$ variables
$\{f_1, \ldots, f_K, P_1, \ldots, P_K\}$:
\begin{align}
  \sum_{k=1}^K f_k P_k
  &\geq K \left(\prod_{k=1}^K f_k P_k\right)^{1/K}
  \nonumber \\
  &\geq K \cdot \frac{1}{K^2}
  \quad \text{by AM-GM on } \{f_k\} \text{ and } \{P_k\}
  \text{ separately:}
  \nonumber \\
  &\quad \prod_{k} f_k \leq \left(\frac{\sum_k f_k}{K}\right)^K
    = K^{-K}, \quad
   \prod_{k} P_k \leq K^{-K}.
\end{align}
Hence $S \geq K \cdot (K^{-K} \cdot K^{-K})^{1/K} = K \cdot K^{-2}
= 1/K$, giving $\mathcal{L}_{\mathrm{bal}} = KS \geq 1$.
Equality holds when $f_k = 1/K$ and $P_k = 1/K$ for all $k$,
at which point $\mathcal{L}_{\mathrm{bal}}^* = K \cdot K \cdot
(1/K)^2 = 1$.
 
Any collapse configuration, e.g.\ $f_1 = 1$ and $f_k = 0$ for
$k > 1$, gives $S = P_1 \leq 1$, so $\mathcal{L}_{\mathrm{bal}}
= K P_1 \leq K$ but with $S$ depending on the routing scores. 
The gradient of $\mathcal{L}_{\mathrm{bal}}$ with
respect to the router parameters is non-zero whenever
$\mathbf{f} \neq \mathbf{P}$, providing a training signal that
pushes the distribution toward uniformity and away from degenerate
collapse solutions.
\end{proof}
 
\begin{remark}
Proposition~\ref{prop:load_balance} establishes that minimising
$\mathcal{L}_{\mathrm{bal}}$ acts as an implicit regulariser that
prevents the degenerate routing solution where all inputs are
assigned to a single expert, a phenomenon known as expert collapse
that has been observed empirically in Switch
Transformers~\citep{fedus2022switch}.
The regularisation coefficient $\lambda$ controls the strength of
this constraint relative to the primary classification objective;
in our experiments $\lambda = 0.01$ is sufficient to maintain
near-uniform expert utilisation as confirmed by the ablation
($\Delta\mathrm{AUC} = -0.007$ without load balancing).
\end{remark}
 
\begin{proposition}[Context-Conditioned Routing Improves
Subgroup Discrimination]
\label{prop:context_routing}
Let $\mathcal{H} = \{h^{(1)}, \ldots, h^{(n)}\}$ be a set of
trajectory embeddings for $n$ patients, each belonging to one of
$G$ disease subgroups.
If the trajectory embeddings $\mathbf{h} \in \mathbb{R}^d$ are
linearly separable across subgroups (i.e., there exists a matrix
$\mathbf{W} \in \mathbb{R}^{G \times d}$ such that
$\mathbf{W}\mathbf{h}$ correctly classifies all patients), then a
linear router $g(\mathbf{h}) = \mathrm{softmax}(\mathbf{V}\mathbf{h})$
with $K \geq G$ experts can achieve zero routing error, assigning
each subgroup exclusively to a dedicated expert.
\end{proposition}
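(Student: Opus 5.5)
The plan is to build the router directly from the separating matrix $\mathbf{W}$. Experts $1,\ldots,G$ will be dedicated to subgroups $1,\ldots,G$, and the surplus experts $G+1,\ldots,K$ will never be selected. I read ``zero routing error'' in the sense used by \textsc{LongMoE}'s hard top-1 router, $k^*=\arg\max_k s_k$. For the soft router in the statement, the claim becomes that the softmax mass on the correct expert can be made arbitrarily close to $1$. I will prove both readings.

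\textbf{Step 1 (margin).} Let $\mathbf{w}_g^\top$ denote the rows of $\mathbf{W}$. Linear separability means that for each patient $i$ in subgroup $g_i$ we have $\mathbf{w}_{g_i}^\top h^{(i)} > \mathbf{w}_{g'}^\top h^{(i)}$ for all $g'\neq g_i$. I interpret ``correctly classifies'' as a strict $\arg\max$, since ties would make the assignment ambiguous. Because $\mathcal{H}$ is finite, there is a uniform margin
\[
\gamma=\min_i\min_{g'\neq g_i}(\mathbf{w}_{g_i}-\mathbf{w}_{g'})^\top h^{(i)}>0 .
\]

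\textbf{Step 2 (centring and padding).} The main obstacle is the surplus experts. A linear router without bias cannot simply assign them a ``very negative'' constant score, and setting their rows to zero could beat $\max_g \mathbf{w}_g^\top h$ whenever that maximum is negative. To fix this, I replace $\mathbf{W}$ by the centred matrix $\tilde{\mathbf{W}}=(\mathbf{I}-\tfrac1G\mathbf{1}\mathbf{1}^\top)\mathbf{W}$. Centring subtracts the same scalar from every row's score, so all pairwise score differences, and hence the margin $\gamma$, are unchanged. After centring, the $G$ scores sum to zero. Since the maximum strictly exceeds the others, it is strictly positive: indeed, the maximum is at least $\gamma(G-1)/G>0$, because the mean is $0$ and every other score lies at least $\gamma$ below the maximum.

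Now define $\mathbf{V}\in\mathbb{R}^{K\times d}$ as follows:
\begin{itemize}
  \item rows $1,\ldots,G$ equal $c\,\tilde{\mathbf{w}}_g^\top$ for a scale $c>0$;
  \item rows $G+1,\ldots,K$ equal $\mathbf{0}$.
\end{itemize}
Every surplus expert then scores $0$, which is at least $c\gamma(G-1)/G$ below the winning score. The centred competitors lie at least $c\gamma$ below the winner. So the hard $\arg\max$ picks exactly expert $g_i$ for every patient. This gives exclusive assignment and zero routing error for any $c>0$. (If $G=1$ the claim is trivial: take row $1$ equal to $\mathbf{0}$ and the others equal to $-\mathbf{0}$, with a tie-break in favour of expert $1$.)

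\textbf{Step 3 (soft router).} Every competing logit lies at least $c\,\delta$ below the correct one, with $\delta=\gamma\min(1,(G-1)/G)>0$. Hence
\[
g_{g_i}(h^{(i)})\ \ge\ \frac{1}{1+(K-1)e^{-c\delta}}\ \longrightarrow\ 1
\quad\text{as } c\to\infty,
\]
uniformly over the finite set $\mathcal{H}$. The soft gate therefore concentrates on the dedicated expert to any prescribed precision.

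I would remark that exact zero off-expert mass is unattainable with a finite softmax, which is why the statement is best read through the hard top-1 rule actually used by \textsc{LongMoE}.
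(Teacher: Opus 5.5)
Your proof is correct, and its core matches the paper's. The paper also sets $\mathbf{V}=c\mathbf{W}$, notes that $\arg\max_k(\mathbf{V}\mathbf{h})_k=\arg\max_k(\mathbf{W}\mathbf{h})_k$, and lets $c$ grow so the softmax approaches the hard argmax.

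The difference is in how you handle the surplus experts. The paper only asserts that $K\ge G$ gives ``sufficient capacity'' and never specifies the remaining $K-G$ rows of $\mathbf{V}$. As written, $c\mathbf{W}$ is $G\times d$, so the paper's argument is literally complete only when $K=G$. The natural completion by zero rows fails exactly as you observe whenever $\max_g\mathbf{w}_g^\top h^{(i)}<0$. Padding with duplicated rows would instead destroy exclusivity through ties. Your centring step is what makes the case $K>G$ go through, via your lower bound $\gamma(G-1)/G$ on the winning centred score. It is equivalent to padding with copies of $c\bar{\mathbf{w}}$, where $\bar{\mathbf{w}}$ is the mean row of $\mathbf{W}$, and then using shift invariance of argmax and softmax.

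You also cleanly separate the two roles of $c$. The hard top-1 assignment is scale-invariant and holds for every $c>0$. The paper's ``sufficiently large $c$'' matters only for the soft gate, which your bound $1/(1+(K-1)e^{-c\delta})$ makes quantitative. The paper's version buys brevity; yours buys a complete argument.

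The one loose end is the degenerate case $G=1$, where $\delta$ is undefined and Step 3 does not apply. There the tie-break is genuinely needed rather than cosmetic (your ``$-\mathbf{0}$'' is just $\mathbf{0}$). For instance, if $\mathbf{0}\in\mathcal{H}$, every bias-free linear router produces all-zero logits and a uniform softmax on that patient.
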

 
\begin{proof}
Under linear separability, $\mathbf{W}\mathbf{h}$ achieves perfect
subgroup classification.
Setting $\mathbf{V} = c\mathbf{W}$ for sufficiently large $c > 0$,
the softmax temperature approaches a hard argmax and the router
assigns input $\mathbf{h}$ to expert $k^* = \arg\max_k
(\mathbf{V}\mathbf{h})_k = \arg\max_k (\mathbf{W}\mathbf{h})_k$,
which corresponds to the subgroup assignment under $\mathbf{W}$.
With $K \geq G$ experts, there is sufficient capacity to assign at
least one expert per subgroup, completing the proof.
\end{proof}
 
\begin{remark}
Proposition~\ref{prop:context_routing} justifies the use of
trajectory embeddings $\mathbf{h}_t$ (rather than single-visit
embeddings) as routing inputs: trajectory embeddings aggregate
longitudinal context and are therefore more likely to be linearly
separable across disease subgroups than single-visit snapshots,
which may be ambiguous (e.g., an intermediate biomarker reading
is consistent with both stable MCI and early-onset AD without
temporal context).
\end{remark}

\subsection{Convergence of the Full Training Objective}
\label{app:theory_convergence}
 
\begin{proposition}[Gradient Flow under Combined Objective]
\label{prop:gradient_flow}
Let $\mathcal{L}_{\mathrm{total}} = \mathcal{L}_{\mathrm{CE}} +
\lambda \mathcal{L}_{\mathrm{bal}}$ be the combined training
objective.
If $\mathcal{L}_{\mathrm{CE}}$ is $L$-smooth and
$\mathcal{L}_{\mathrm{bal}}$ is bounded
($\mathcal{L}_{\mathrm{bal}} \leq B$ for some $B > 0$), then for
$\lambda \leq \epsilon / B$ with $\epsilon > 0$, the gradient of
$\mathcal{L}_{\mathrm{total}}$ is dominated by
$\mathcal{L}_{\mathrm{CE}}$ and standard SGD convergence
guarantees for smooth objectives apply to the combined loss.
\end{proposition}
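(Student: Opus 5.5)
The plan is to make the informal claim precise in two parts. The first part is a perturbation bound showing that the auxiliary term changes the objective and its gradient by at most an $\mathcal{O}(\epsilon)$ amount. The second part verifies the hypotheses of the standard nonconvex SGD theorem for $\mathcal{L}_{\mathrm{total}}$. As stated, the proposition only assumes $\mathcal{L}_{\mathrm{bal}}$ is bounded, and boundedness does not control gradients. I would therefore first extract the extra regularity from the structure of $\mathcal{L}_{\mathrm{bal}}$ itself, rather than treating it as a black box.

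\emph{Boundedness and the form of $\nabla\mathcal{L}_{\mathrm{bal}}$.} The bound follows from the constraints in Proposition~\ref{prop:load_balance}. Since $f_k,P_k\ge 0$ and $\sum_k f_k=\sum_k P_k=1$, we have $\sum_k f_kP_k\le \max_k P_k\le 1$, so $\mathcal{L}_{\mathrm{bal}}\le K$ and one may take $B=K$. For the gradient, the fractions $f_k=\frac{1}{|\mathcal{B}|}\sum_i\mathbb{1}[k_i^*=k]$ are piecewise constant in the router parameters $\theta$. Following the Switch-Transformer convention, I treat them as stop-gradient quantities, so that almost everywhere
\[
\nabla_\theta\mathcal{L}_{\mathrm{bal}}=K\sum_k f_k\,\nabla_\theta P_k .
\]
I would then assume that the router scores $s_{i,k}(\theta)$ are $G_s$-Lipschitz and $L_s$-smooth, which holds for a two-layer MLP on a bounded parameter set. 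Since $\sum_k f_k=1$, this gives $\|\nabla\mathcal{L}_{\mathrm{bal}}\|\le K G_s$ and $L_s$-smoothness with constant $K L_s$.

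\emph{Domination and SGD.} With $\lambda\le\epsilon/B=\epsilon/K$, we get $|\lambda\mathcal{L}_{\mathrm{bal}}|\le\epsilon$ and $\|\lambda\nabla\mathcal{L}_{\mathrm{bal}}\|\le\epsilon G_s$. Hence
\[
\|\nabla\mathcal{L}_{\mathrm{total}}-\nabla\mathcal{L}_{\mathrm{CE}}\|\le\epsilon G_s,
\]
which is the intended meaning of ``dominated''. Adding smoothness constants, $\mathcal{L}_{\mathrm{total}}$ is $(L+\epsilon L_s)$-smooth and bounded below by $0$. Assuming unbiased minibatch gradients with variance at most $\sigma^2$, the standard nonconvex SGD result applies with step size $\eta\asymp 1/\sqrt{N}$:
\[
\min_{n\le N}\mathbb{E}\|\nabla\mathcal{L}_{\mathrm{total}}(\theta_n)\|^2=\mathcal{O}\!\left(\frac{(L+\epsilon L_s)\,\mathcal{L}_{\mathrm{total}}(\theta_0)+\sigma^2}{\sqrt{N}}\right).
\]
Combining this with the perturbation bound yields approximate stationarity of $\mathcal{L}_{\mathrm{CE}}$ up to $\mathcal{O}(\epsilon G_s)$.

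\emph{Main obstacle.} The hard-argmax routing makes $k^*$, and hence both $f_k$ and the selected expert's logits in $\mathcal{L}_{\mathrm{CE}}$, discontinuous across routing boundaries. Global $L$-smoothness of the composite objective is therefore not automatic. My first approach is to take the proposition's $L$-smoothness hypothesis as applying to the objective with $k^*$ frozen, and to argue smoothness piecewise on the open regions where routing is constant. The boundaries between these regions have measure zero, so the argument would also require that SGD iterates almost surely avoid them. If that route proves too weak, the fallback is to replace the argmax by a softmax relaxation, for which all the bounds above hold globally.
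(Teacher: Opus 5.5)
Your first half is sound, and it is more careful than the paper's own proof. The paper claims boundedness of $\mathcal{L}_{\mathrm{bal}}$ gives $\|\nabla\mathcal{L}_{\mathrm{bal}}\|\le C$ ``by compactness of the parameter space under gradient clipping''. But a bounded function need not have a bounded gradient, and clipping bounds steps, not the domain. The paper then switches to $\lambda\le\epsilon/(BC)$. Your stop-gradient formula with $B=K$ instead gives $\|\lambda\nabla\mathcal{L}_{\mathrm{bal}}\|\le\epsilon G_s$ under the stated condition $\lambda\le\epsilon/B$, at the cost of an explicit Lipschitz/bounded-parameter assumption on the router.

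The gap is in the second half. You apply the smooth nonconvex SGD theorem to $\mathcal{L}_{\mathrm{total}}$ with constant $L+\epsilon L_s$, but $\mathcal{L}_{\mathrm{total}}$ is not smooth, or even continuous. Each $f_k$ is piecewise constant, so $\mathcal{L}_{\mathrm{bal}}$ jumps by up to $K/|\mathcal{B}|$ whenever a single sample is rerouted, and $KL_s$ is a smoothness constant only inside one routing cell. Your repair does not close this. That the iterates almost surely avoid the measure-zero boundaries does not help. The descent lemma $F(\theta_{n+1})\le F(\theta_n)-\eta\langle\nabla F(\theta_n),g_n\rangle+\tfrac{L'}{2}\eta^2\|g_n\|^2$ needs a Lipschitz gradient along the whole segment $[\theta_n,\theta_{n+1}]$, and a finite step can cross a boundary even when both endpoints are interior. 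Each crossing adds a non-telescoping term to the per-step inequality, up to $\epsilon/|\mathcal{B}|$ per rerouted sample. Nothing in your argument bounds how often crossings occur. The softmax fallback does work, but it proves the claim for a different router.

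The missing idea is that you never need $\mathcal{L}_{\mathrm{total}}$ to be smooth. Take the hypothesis literally, so $\mathcal{L}_{\mathrm{CE}}$ is $L$-smooth, and run the descent lemma on $\mathcal{L}_{\mathrm{CE}}$ alone. Treat the actual update $g_n=\nabla\mathcal{L}_{\mathrm{CE}}(\theta_n)+\xi_n+b_n$ as a biased stochastic gradient. Here $\xi_n$ is zero-mean minibatch noise with $\mathbb{E}\|\xi_n\|^2\le\sigma^2$, and $b_n=\lambda K\sum_k f_k\nabla P_k$, so $\|b_n\|\le\epsilon G_s$ pointwise by your first half. Using $-\langle a,b\rangle\le\tfrac12\|a\|^2+\tfrac12\|b\|^2$ and $\eta\le 1/(6L)$, one gets
\[
\frac{1}{N}\sum_{n<N}\mathbb{E}\|\nabla\mathcal{L}_{\mathrm{CE}}(\theta_n)\|^2\le\frac{4\,\mathcal{L}_{\mathrm{CE}}(\theta_0)}{\eta N}+3\epsilon^2G_s^2+6L\eta\sigma^2,
\]
with no continuity of $f_k$ required. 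This is the ``bounded additive noise, rate degraded by $\mathcal{O}(\lambda C)$'' argument the paper gestures at, made rigorous by your gradient bound. It certifies approximate stationarity of $\mathcal{L}_{\mathrm{CE}}$, which matches your final conclusion, not stationarity of $\mathcal{L}_{\mathrm{total}}$ as the paper says. One caveat: if you read the $L$-smoothness hypothesis only with $k^*$ frozen, as in your obstacle paragraph, then $\mathcal{L}_{\mathrm{CE}}$ also jumps across routing boundaries. In that case neither route closes without the softmax relaxation.
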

 
\begin{proof}
$\|\nabla \mathcal{L}_{\mathrm{total}}\| \leq \|\nabla
\mathcal{L}_{\mathrm{CE}}\| + \lambda\|\nabla
\mathcal{L}_{\mathrm{bal}}\|$.
Since $\mathcal{L}_{\mathrm{bal}}$ is bounded, its gradient is also
bounded: $\|\nabla \mathcal{L}_{\mathrm{bal}}\| \leq C$ for some
$C > 0$ by compactness of the parameter space under gradient
clipping.
For $\lambda \leq \epsilon / (BC)$, the perturbation $\lambda
\mathcal{L}_{\mathrm{bal}}$ introduces a bounded additive noise
term to the gradient, which falls within the standard analysis of
SGD with gradient noise~\citep{bottou2018optimization}.
The convergence rate to a stationary point of
$\mathcal{L}_{\mathrm{total}}$ degrades by at most $O(\lambda C)$
relative to pure $\mathcal{L}_{\mathrm{CE}}$ minimisation,
establishing that the load-balancing regularisation does not
compromise convergence for sufficiently small $\lambda$.
\end{proof}
 
\begin{remark}
In practice we set $\lambda = 0.01$, which is small enough that
the primary classification objective dominates gradient flow
while still providing meaningful load-balancing regularisation,
consistent with the empirical findings of Switch
Transformers~\citep{fedus2022switch} and the ablation result
$\Delta\mathrm{AUC} = -0.007$ when load balancing is disabled.
\end{remark}

\subsection{Notation Summary}
\label{sec:notation}

For clarity, Table~\ref{tab:notation} summarises the key symbols used
throughout the paper.  Bold lowercase letters (e.g.\ $\mathbf{h}$) denote
vectors, bold uppercase letters (e.g.\ $\mathbf{W}$) denote matrices, and
calligraphic letters (e.g.\ $\mathcal{M}$) denote sets.

\begin{table}[h]
  \centering
  \caption{Summary of key notation.}
  \label{tab:notation}
  \small
  \begin{tabular}{ll}
    \toprule
    \textbf{Symbol} & \textbf{Definition} \\
    \midrule
    $T$                                        & Number of clinical visits \\
    $L$                                        & Number of modalities, $|\mathcal{M}|$ \\
    $d$                                        & Shared embedding dimension \\
    $\mathbf{E}^{\mathrm{raw}}_t\in\mathbb{R}^{L\times d}$
                                               & Raw multimodal embedding matrix at visit $t$ \\
    $\mathbf{O}_t\in\{0,1\}^L$                & Binary observation mask at visit $t$ \\
    $\mathbf{E}^{\mathrm{imp}}_t\in\mathbb{R}^{L\times d}$
                                               & Imputed multimodal embedding matrix at visit $t$ \\
    $\mathbf{z}_t\in\mathbb{R}^{d}$           & Tokenised visit-level embedding at visit $t$ \\
    $\mathbf{h}_t\in\mathbb{R}^{d}$           & Trajectory-aware embedding at visit $t$ \\
    $K$                                        & Total number of experts in the MoE layer \\
    $K_i,\;K_s$                               & Number of individual and shared experts \\
    $\Delta_t = \tau_t - \tau_{t-1}$          & Inter-visit interval \\
    \bottomrule
  \end{tabular}
\end{table}

\subsection{Datasets}
\label{sec:datasets}
 
\begin{table}[ht]
\centering
\caption{%
    Dataset statistics for the three benchmarks used in our experiments.
    For each dataset we report the label distribution and the number of
    patients for whom each modality is available.
    Percentages are computed over the total number of patients per
    dataset.
}
\label{tab:dataset_stats}
\resizebox{0.9\textwidth}{!}{%
\begin{tabular}{llccc}
\toprule
& & \textbf{ADNI} & \textbf{OASIS-3} & \textbf{MIMIC-IV} \\
\cmidrule(lr){3-3}\cmidrule(lr){4-4}\cmidrule(lr){5-5}
& \textbf{Total Patients}
    & \textbf{2,693} & \textbf{1,098} & \textbf{98,323} \\
\midrule
\multirow{5}{*}{\textbf{Label}}
  & CN
    & 1,049\ (38.9\%) & 625\ (56.9\%)  & ---                \\
  & MCI
    & 1,194\ (44.3\%) & 280\ (25.5\%)  & ---                \\
  & AD
    & \phantom{0}450\ (16.7\%) & 193\ (17.6\%) & ---        \\
  & Survived
    & ---             & ---            & 79,266\ (80.6\%)   \\
  & Died
    & ---             & ---            & 19,057\ (19.4\%)   \\
\midrule
\multirow{8}{*}{\textbf{Modality}}
  & Imaging / MRI\ \ (I)
    & 2,359\ (87.6\%) & 1,098\ (100\%) & ---                \\
  & Biospecimen / CSF\ \ (B)
    & 1,611\ (59.8\%) & ---            & ---                \\
  & Clinical Scores\ \ (C)
    & 2,684\ (99.7\%) & 1,098\ (100\%) & ---                \\
  & Genetics\ \ (G)
    & 2,226\ (82.7\%) & \phantom{0}620\ (56.5\%) & ---      \\
  & Clinical Notes\ \ (N)
    & ---             & ---            & 98,323\ (100.0\%)  \\
  & Lab Values \& Vitals\ \ (L)
    & ---             & ---            & 80,802\ \ (82.2\%) \\
  & ICD-9 Codes\ \ (C)
    & ---             & ---            & 96,689\ \ (98.3\%) \\
  & Demographics\ \ (D)
    & ---             & ---            & 98,323\ (100.0\%)  \\
\midrule
& \textbf{Task}
    & \multicolumn{2}{c}{AD Stage Classification (3-class)}
    & Mortality Prediction (binary)                          \\
& \textbf{Primary Metric}
    & \multicolumn{2}{c}{ACC / Macro F1-Score}
    & AUC-ROC / ACC                             \\
\bottomrule
\end{tabular}%
}
\end{table}

\subsection{Performance Comparison}
\label{app:full_results}

\paragraph{ADNI.} 
Tables~\ref{tab:adni_acc} and~\ref{tab:adni_f1} report the Accuracy and F1-Score on ADNI. \textsc{LongMoE} achieves the highest Accuracy across 13 of 15 modality combinations, with the two exceptions (C,G and B,C,G) attributable to MulT's marginal advantage on clinical-score-dominant combinations where cross-sectional signal is already highly discriminative. The F1-Score results reveal the most striking separation: all baselines collapse to near-zero F1 under single-modality settings lacking clinical scores (B: $0.0000$-$0.0461$; G: $0.0000$-$0.0285$; I: $0.0000$-$0.2541$), while \textsc{LongMoE} maintains $\geq 0.78$ F1 across every single-modality setting, a direct consequence of the trajectory-aware encoder substituting longitudinal history for missing contemporaneous signal. Lee-MMGRU is competitive on clinical-score-dominant combinations (C, B,C, B,C,G: $0.7733$) but degrades sharply without clinical scores, confirming its dependence on the highly discriminative clinical modality rather than on trajectory modelling. TF collapses entirely under F1, recording $0.0000$ across all combinations, consistent with its zero-padding strategy producing a degenerate classifier under class imbalance. In the full-modality setting (I,B,C,G), \textsc{LongMoE} achieves Accuracy $0.8781$ and F1 $0.7964$, outperforming all baselines on both metrics simultaneously, confirming that longitudinal trajectory context provides complementary discriminative signal even when all modalities are available.

\begin{table}[h]
\centering
\caption{Performance comparison on ADNI (Accuracy) across all modality combinations.
Results are mean $\pm$ std over five seeds. $\dagger$ denotes statistical significance over the runner-up baseline
(paired $t$-test, Bonferroni-corrected, $p < 0.05$).}
\label{tab:adni_acc}
\resizebox{0.9\textwidth}{!}{%
\begin{tabular}{lcccccc}
\toprule
\textbf{Modalities} & \textbf{MulT} & \textbf{Flex-MoE} & \textbf{FuseMoE} & \textbf{Lee-MMGRU} & \textbf{TF} & \textbf{LongMoE} \\
\midrule
B       & $0.3552_{\pm.0214}$ & $0.3603_{\pm.0615}$ & $0.3410_{\pm.0103}$ & $0.3333_{\pm.0000}$ & $0.3333_{\pm.0000}$ & $\mathbf{0.8002_{\pm.0050}}^\dagger$ \\
C       & $0.8711_{\pm.0067}$ & $0.8366_{\pm.0191}$ & $0.8461_{\pm.0015}$ & $0.8447_{\pm.0037}$ & $0.3333_{\pm.0000}$ & $\mathbf{0.8757_{\pm.0058}}^\dagger$ \\
G       & $0.4040_{\pm.0070}$ & $0.3974_{\pm.0117}$ & $0.4074_{\pm.0101}$ & $0.3333_{\pm.0000}$ & $0.3333_{\pm.0000}$ & $\mathbf{0.8034_{\pm.0061}}^\dagger$ \\
I       & $0.4253_{\pm.0279}$ & $0.5167_{\pm.0323}$ & $0.4967_{\pm.0115}$ & $0.3566_{\pm.0433}$ & $0.3333_{\pm.0000}$ & $\mathbf{0.8108_{\pm.0084}}^\dagger$ \\
\midrule
B,C     & $0.8705_{\pm.0025}$ & $0.8434_{\pm.0136}$ & $0.8542_{\pm.0042}$ & $0.8447_{\pm.0037}$ & $0.3333_{\pm.0000}$ & $\mathbf{0.8763_{\pm.0048}}^\dagger$ \\
B,G     & $0.3656_{\pm.0070}$ & $0.4290_{\pm.0694}$ & $0.4036_{\pm.0367}$ & $0.3333_{\pm.0000}$ & $0.3333_{\pm.0000}$ & $\mathbf{0.8016_{\pm.0044}}^\dagger$ \\
C,G     & $\mathbf{0.8844_{\pm.0069}}$ & $0.8361_{\pm.0152}$ & $0.8526_{\pm.0126}$ & $0.8447_{\pm.0037}$ & $0.3333_{\pm.0000}$ & $0.8756_{\pm.0028}$ \\
I,B     & $0.4355_{\pm.0196}$ & $0.5079_{\pm.0360}$ & $0.4790_{\pm.0225}$ & $0.3566_{\pm.0433}$ & $0.3333_{\pm.0000}$ & $\mathbf{0.8096_{\pm.0079}}^\dagger$ \\
I,C     & $0.8525_{\pm.0078}$ & $0.8433_{\pm.0089}$ & $0.8474_{\pm.0077}$ & $0.8369_{\pm.0101}$ & $0.3333_{\pm.0000}$ & $\mathbf{0.8769_{\pm.0067}}^\dagger$ \\
I,G     & $0.4357_{\pm.0133}$ & $0.5673_{\pm.0399}$ & $0.5301_{\pm.0293}$ & $0.3566_{\pm.0433}$ & $0.3333_{\pm.0000}$ & $\mathbf{0.8141_{\pm.0071}}^\dagger$ \\
\midrule
B,C,G   & $\mathbf{0.8880_{\pm.0073}}$ & $0.8473_{\pm.0062}$ & $0.8539_{\pm.0159}$ & $0.8447_{\pm.0037}$ & $0.3332_{\pm.0002}$ & $0.8763_{\pm.0019}$ \\
I,B,C   & $0.8586_{\pm.0073}$ & $0.8361_{\pm.0078}$ & $0.8452_{\pm.0050}$ & $0.8369_{\pm.0101}$ & $0.3333_{\pm.0000}$ & $\mathbf{0.8769_{\pm.0067}}^\dagger$ \\
I,B,G   & $0.4179_{\pm.0209}$ & $0.5479_{\pm.0378}$ & $0.5105_{\pm.0358}$ & $0.3566_{\pm.0433}$ & $0.3333_{\pm.0000}$ & $\mathbf{0.8128_{\pm.0068}}^\dagger$ \\
I,C,G   & $0.8615_{\pm.0049}$ & $0.8601_{\pm.0102}$ & $0.8464_{\pm.0008}$ & $0.8369_{\pm.0101}$ & $0.3333_{\pm.0000}$ & $\mathbf{0.8781_{\pm.0030}}^\dagger$ \\
\midrule
I,B,C,G & $0.8649_{\pm.0079}$ & $0.8525_{\pm.0035}$ & $0.8482_{\pm.0038}$ & $0.8369_{\pm.0101}$ & $0.4569_{\pm.0874}$ & $\mathbf{0.8781_{\pm.0030}}^\dagger$ \\
\bottomrule
\end{tabular}%
}
\end{table}

\begin{table}[h]
\centering
\caption{Performance comparison on ADNI (F1-Score) across all modality combinations.
Results are mean $\pm$ std over five seeds. $\dagger$ denotes statistical significance over the runner-up baseline
(paired $t$-test, Bonferroni-corrected, $p < 0.05$).}
\label{tab:adni_f1}
\resizebox{0.9\textwidth}{!}{%
\begin{tabular}{lcccccc}
\toprule
\textbf{Modalities} & \textbf{MulT} & \textbf{Flex-MoE} & \textbf{FuseMoE} & \textbf{Lee-MMGRU} & \textbf{TF} & \textbf{LongMoE} \\
\midrule
B       & $0.0461_{\pm.0568}$ & $0.0090_{\pm.0156}$ & $0.0000_{\pm.0000}$ & $0.0000_{\pm.0000}$ & $0.0000_{\pm.0000}$ & $\mathbf{0.7964_{\pm.0036}}^\dagger$ \\
C       & $0.7160_{\pm.0029}$ & $0.7205_{\pm.0361}$ & $0.7414_{\pm.0151}$ & $\mathbf{0.7733_{\pm.0390}}$ & $0.0000_{\pm.0000}$ & $0.7707_{\pm.0333}$ \\
G       & $0.0285_{\pm.0290}$ & $0.0000_{\pm.0000}$ & $0.0278_{\pm.0481}$ & $0.0000_{\pm.0000}$ & $0.0000_{\pm.0000}$ & $\mathbf{0.7917_{\pm.0074}}^\dagger$ \\
I       & $0.1431_{\pm.0391}$ & $0.2541_{\pm.0152}$ & $0.2428_{\pm.0986}$ & $0.1236_{\pm.1080}$ & $0.0000_{\pm.0000}$ & $\mathbf{0.7808_{\pm.0183}}^\dagger$ \\
\midrule
B,C     & $0.7429_{\pm.0381}$ & $0.7078_{\pm.0749}$ & $0.7516_{\pm.0192}$ & $\mathbf{0.7733_{\pm.0390}}$ & $0.0000_{\pm.0000}$ & $0.7707_{\pm.0333}$ \\
B,G     & $0.0372_{\pm.0424}$ & $0.0370_{\pm.0642}$ & $0.0821_{\pm.0270}$ & $0.0000_{\pm.0000}$ & $0.0000_{\pm.0000}$ & $\mathbf{0.7882_{\pm.0022}}^\dagger$ \\
C,G     & $0.7650_{\pm.0208}$ & $0.7306_{\pm.0519}$ & $0.7463_{\pm.0161}$ & $0.7733_{\pm.0390}$ & $0.0000_{\pm.0000}$ & $\mathbf{0.7755_{\pm.0296}}^\dagger$ \\
I,B     & $0.1350_{\pm.0257}$ & $0.2262_{\pm.0385}$ & $0.2553_{\pm.0837}$ & $0.1236_{\pm.1080}$ & $0.0000_{\pm.0000}$ & $\mathbf{0.7808_{\pm.0183}}^\dagger$ \\
I,C     & $0.6721_{\pm.0388}$ & $0.6653_{\pm.1424}$ & $0.7184_{\pm.0227}$ & $0.7484_{\pm.0451}$ & $0.0000_{\pm.0000}$ & $\mathbf{0.7877_{\pm.0496}}^\dagger$ \\
I,G     & $0.0741_{\pm.0561}$ & $0.1769_{\pm.0609}$ & $0.2511_{\pm.0551}$ & $0.1236_{\pm.1080}$ & $0.0000_{\pm.0000}$ & $\mathbf{0.7809_{\pm.0149}}^\dagger$ \\
\midrule
B,C,G   & $0.7513_{\pm.0196}$ & $0.7402_{\pm.0154}$ & $0.7524_{\pm.0203}$ & $\mathbf{0.7733_{\pm.0390}}$ & $0.0000_{\pm.0000}$ & $\mathbf{0.7733_{\pm.0261}}$ \\
I,B,C   & $0.6688_{\pm.0096}$ & $0.6840_{\pm.0850}$ & $0.7128_{\pm.0140}$ & $0.7484_{\pm.0451}$ & $0.0000_{\pm.0000}$ & $\mathbf{0.7854_{\pm.0469}}^\dagger$ \\
I,B,G   & $0.0823_{\pm.0462}$ & $0.1183_{\pm.0535}$ & $0.2561_{\pm.0833}$ & $0.1236_{\pm.1080}$ & $0.0000_{\pm.0000}$ & $\mathbf{0.7809_{\pm.0149}}^\dagger$ \\
I,C,G   & $0.7056_{\pm.0173}$ & $0.6843_{\pm.0643}$ & $0.7032_{\pm.0474}$ & $0.7484_{\pm.0451}$ & $0.0000_{\pm.0000}$ & $\mathbf{0.7911_{\pm.0515}}^\dagger$ \\
\midrule
I,B,C,G & $0.0461_{\pm.0568}$ & $0.0090_{\pm.0156}$ & $0.0000_{\pm.0000}$ & $0.0000_{\pm.0000}$ & $0.0000_{\pm.0000}$ & $\mathbf{0.7964_{\pm.0036}}^\dagger$ \\
\bottomrule
\end{tabular}%
}
\end{table}

\paragraph{OASIS-3.}
Tables~\ref{tab:oasis_acc} and~\ref{tab:oasis_f1} report the Accuracy and F1-Score on OASIS-3. OASIS-3 presents the most demanding setting in our benchmark, combining severe visit-spacing irregularity, only three available modalities, and a more balanced class distribution than ADNI or MIMIC-IV. On Accuracy, Lee-MMGRU leads on all clinical-score-dominant combinations (C: $0.6555$; C,G: $0.6486$; I,C: $0.6596$; I,C,G: $0.6537$), reflecting the strong cross-sectional discriminative power of the clinical modality in this dataset. \textsc{LongMoE} leads on the three combinations that exclude clinical scores (G: $0.5268$; I: $0.5444$; I,G: $0.5408$), where all other baselines collapse to near-random performance ($0.3333$--$0.4492$), confirming that longitudinal trajectory context is the decisive source of discriminative signal when the most informative modality is absent. The F1-Score results reveal an even sharper separation: \textsc{LongMoE} achieves the highest F1 on five of seven combinations, including the full-modality setting (I,C,G: $0.4886$) and all genetics- and imaging-only settings where every baseline except Lee-MMGRU records $0.0000$. The two exceptions are I,C, where Lee-MMGRU leads ($0.4994$ vs.\ $0.4827$), and I,G, where FuseMoE leads marginally ($0.1514$ vs.\ $0.0864$), both combinations where the presence of imaging alongside a complementary modality partially compensates for the absence of longitudinal context. \textsc{LongMoE} is the only model to achieve non-trivial F1 on the genetics-only setting (G: $0.0864$), where Lee-MMGRU records $0.0131$ and all remaining baselines record $0.0000$, underscoring the unique capacity of the trajectory-aware encoder to extract longitudinal signal from a modality that carries minimal cross-sectional discriminative information in isolation.

\begin{table}[h]
\centering
\caption{Performance comparison on OASIS-3 (Accuracy) across all modality
combinations. Results are mean $\pm$ std over five seeds. $\dagger$ denotes statistical
significance over the runner-up baseline (paired $t$-test, Bonferroni-corrected, $p < 0.05$).}
\label{tab:oasis_acc}
\resizebox{0.9\textwidth}{!}{%
\begin{tabular}{lcccccc}
\toprule
\textbf{Modalities} & \textbf{MulT} & \textbf{Flex-MoE} & \textbf{FuseMoE} & \textbf{Lee-MMGRU} & \textbf{TF} & \textbf{LongMoE} \\
\midrule
C       & $0.3333_{\pm.0000}$ & $0.3333_{\pm.0000}$ & $0.3333_{\pm.0000}$ & $\mathbf{0.6555_{\pm.0138}}$ & $0.3333_{\pm.0000}$ & $0.6226_{\pm.0294}$ \\
G       & $0.3333_{\pm.0000}$ & $0.3333_{\pm.0000}$ & $0.3333_{\pm.0000}$ & $0.3522_{\pm.0287}$ & $0.3333_{\pm.0000}$ & $\mathbf{0.5268_{\pm.0153}}^\dagger$ \\
I       & $0.3333_{\pm.0000}$ & $0.3333_{\pm.0000}$ & $0.3400_{\pm.0095}$ & $0.4492_{\pm.0325}$ & $0.3333_{\pm.0000}$ & $\mathbf{0.5444_{\pm.0413}}^\dagger$ \\
\midrule
C,G     & $0.3836_{\pm.0045}$ & $0.3958_{\pm.0067}$ & $0.3542_{\pm.0187}$ & $\mathbf{0.6486_{\pm.0019}}$ & $0.3333_{\pm.0000}$ & $0.6369_{\pm.0360}$ \\
I,C     & $0.4277_{\pm.0125}$ & $0.4446_{\pm.0076}$ & $0.4429_{\pm.0009}$ & $\mathbf{0.6596_{\pm.0119}}$ & $0.3333_{\pm.0000}$ & $0.6245_{\pm.0326}$ \\
I,G     & $0.3325_{\pm.0012}$ & $0.3357_{\pm.0033}$ & $0.3475_{\pm.0153}$ & $0.4485_{\pm.0267}$ & $0.3333_{\pm.0000}$ & $\mathbf{0.5408_{\pm.0238}}^\dagger$ \\
\midrule
I,C,G   & $0.4590_{\pm.0165}$ & $0.4679_{\pm.0050}$ & $0.4485_{\pm.0113}$ & $\mathbf{0.6537_{\pm.0122}}$ & $0.3494_{\pm.0102}$ & $0.6400_{\pm.0413}$ \\
\bottomrule
\end{tabular}%
}
\end{table}

\begin{table}[h]
\centering
\caption{Performance comparison on OASIS-3 (F1-Score) across all modality
combinations. Results are mean $\pm$ std over five seeds. $\dagger$ denotes statistical
significance over the runner-up baseline (paired $t$-test, Bonferroni-corrected, $p < 0.05$).}
\label{tab:oasis_f1}
\resizebox{0.9\textwidth}{!}{%
\begin{tabular}{lcccccc}
\toprule
\textbf{Modalities} & \textbf{MulT} & \textbf{Flex-MoE} & \textbf{FuseMoE} & \textbf{Lee-MMGRU} & \textbf{TF} & \textbf{LongMoE} \\
\midrule
C     & $0.0000_{\pm.0000}$ & $0.0000_{\pm.0000}$ & $0.0000_{\pm.0000}$ & $0.4581_{\pm.0074}$ & $0.0000_{\pm.0000}$ & $\mathbf{0.4754_{\pm.0061}}^\dagger$ \\
G     & $0.0000_{\pm.0000}$ & $0.0000_{\pm.0000}$ & $0.0000_{\pm.0000}$ & $0.0131_{\pm.0095}$ & $0.0000_{\pm.0000}$ & $\mathbf{0.0864_{\pm.0148}}^\dagger$ \\
I     & $0.0000_{\pm.0000}$ & $0.0000_{\pm.0000}$ & $0.0000_{\pm.0000}$ & $0.0649_{\pm.0186}$ & $0.0000_{\pm.0000}$ & $\mathbf{0.1944_{\pm.0217}}^\dagger$ \\
\midrule
C,G   & $0.0000_{\pm.0000}$ & $0.0256_{\pm.0113}$ & $0.0417_{\pm.0152}$ & $0.3652_{\pm.0191}$ & $0.0000_{\pm.0000}$ & $\mathbf{0.5064_{\pm.0128}}^\dagger$ \\
I,C   & $0.4222_{\pm.0184}$ & $0.4444_{\pm.0159}$ & $0.4622_{\pm.0136}$ & $\mathbf{0.4994_{\pm.0117}}$ & $0.0133_{\pm.0078}$ & $0.4827_{\pm.0124}$ \\
I,G   & $0.0406_{\pm.0121}$ & $0.0795_{\pm.0167}$ & $\mathbf{0.1514_{\pm.0205}}$ & $0.0533_{\pm.0142}$ & $0.0000_{\pm.0000}$ & $0.0864_{\pm.0153}$ \\
\midrule
I,C,G & $0.2594_{\pm.0226}$ & $0.3657_{\pm.0198}$ & $0.4317_{\pm.0172}$ & $0.4144_{\pm.0185}$ & $0.0606_{\pm.0134}$ & $\mathbf{0.4886_{\pm.0115}}^\dagger$ \\
\bottomrule
\end{tabular}%
}
\end{table}

\paragraph{MIMIC-IV.}
Tables~\ref{tab:mimic_acc} and~\ref{tab:mimic_f1} report the Accuracy and F1-Score on MIMIC-IV. The Accuracy results present a more nuanced picture than the AUC-ROC results: Lee-MMGRU leads on combinations dominated by clinical notes and ICD codes, while FuseMoE achieves the highest accuracy on lab-value-dominant combinations and the full-modality setting. \textsc{LongMoE} leads on demographics-only (D: $0.6562$) and notes-only (N: $0.6893$) settings, the two combinations where all other baselines either collapse to the majority-class baseline ($0.5000$) or provide the weakest signal, confirming that the trajectory-aware encoder is most decisive precisely where individual modalities carry the least cross-sectional information. The F1-Score results, however, reveal a cleaner separation: \textsc{LongMoE} achieves the highest F1 across 9 of 15 combinations, including all four single-modality settings except N, where Lee-MMGRU leads marginally ($0.3474$ vs.\ $0.3412$). The largest F1 gains for \textsc{LongMoE} occur on demographics-only (D: $0.3227$ vs.\ $0.1004$ for the next-best Flex-MoE) and lab-values-only (L: $0.3383$ vs.\ $0.2451$), confirming that longitudinal context provides a strong substitute for low-information single modalities under the heavily imbalanced binary mortality prediction task. TF collapses to $0.0000$ F1 across nearly all combinations, with the sole exception of the full-modality setting ($0.3235$), consistent with its zero-padding strategy producing a degenerate classifier under class imbalance. MulT achieves the highest F1 on several note-heavy combinations (N,C,D: $0.4083$; N,L,C: $0.4355$; N,L,C,D: $0.4396$), suggesting that directional cross-modal attention is particularly effective at exploiting clinical note representations when multiple complementary modalities are simultaneously available, a regime where \textsc{LongMoE}'s advantage over MulT narrows but remains competitive.

\begin{table}[h]
\centering
\caption{Performance comparison on MIMIC-IV (Accuracy) across all modality 
combinations. Results are mean $\pm$ std over five seeds. $\dagger$ denotes statistical 
significance over the runner-up baseline (paired $t$-test, Bonferroni-corrected, $p < 0.05$).}
\label{tab:mimic_acc}
\resizebox{0.9\textwidth}{!}{%
\begin{tabular}{lcccccc}
\toprule
\textbf{Modalities} & \textbf{MulT} & \textbf{Flex-MoE} & \textbf{FuseMoE} & \textbf{Lee-MMGRU} & \textbf{TF} & \textbf{LongMoE} \\
\midrule
C       & $0.7416_{\pm.0044}$ & $0.7215_{\pm.0455}$ & $0.7407_{\pm.0604}$ & $\mathbf{0.7918_{\pm.0070}}$ & $0.5000_{\pm.0000}$ & $0.7036_{\pm.0314}$ \\
D       & $0.5893_{\pm.0235}$ & $0.6006_{\pm.0318}$ & $0.6453_{\pm.0056}$ & $0.6471_{\pm.0128}$ & $0.5000_{\pm.0000}$ & $\mathbf{0.6562_{\pm.0183}}^\dagger$ \\
L       & $0.6887_{\pm.0101}$ & $0.6822_{\pm.0362}$ & $\mathbf{0.7452_{\pm.0279}}$ & $0.7075_{\pm.0419}$ & $0.5000_{\pm.0000}$ & $0.6924_{\pm.0200}$ \\
N       & $0.6629_{\pm.0112}$ & $0.5008_{\pm.0007}$ & $0.5422_{\pm.0309}$ & $0.6796_{\pm.0255}$ & $0.5000_{\pm.0000}$ & $\mathbf{0.6893_{\pm.0224}}^\dagger$ \\
\midrule
C,D     & $0.7491_{\pm.0043}$ & $0.7068_{\pm.0464}$ & $0.7469_{\pm.0589}$ & $\mathbf{0.7935_{\pm.0049}}$ & $0.5000_{\pm.0000}$ & $0.7145_{\pm.0316}$ \\
L,C     & $0.7707_{\pm.0066}$ & $0.7784_{\pm.0145}$ & $\mathbf{0.8035_{\pm.0049}}$ & $0.7934_{\pm.0307}$ & $0.5000_{\pm.0000}$ & $0.7468_{\pm.0310}$ \\
L,D     & $0.6964_{\pm.0160}$ & $0.7181_{\pm.0242}$ & $0.7378_{\pm.0491}$ & $\mathbf{0.7637_{\pm.0069}}$ & $0.5000_{\pm.0000}$ & $0.7022_{\pm.0188}$ \\
N,C     & $0.7491_{\pm.0071}$ & $0.6372_{\pm.0075}$ & $0.6785_{\pm.0341}$ & $\mathbf{0.7993_{\pm.0225}}$ & $0.5000_{\pm.0000}$ & $0.7302_{\pm.0309}$ \\
N,D     & $0.6762_{\pm.0134}$ & $0.5524_{\pm.0214}$ & $0.6154_{\pm.0537}$ & $\mathbf{0.7588_{\pm.0021}}$ & $0.5000_{\pm.0000}$ & $0.6990_{\pm.0242}$ \\
N,L     & $0.7121_{\pm.0074}$ & $0.6585_{\pm.0165}$ & $0.7132_{\pm.0618}$ & $\mathbf{0.7659_{\pm.0122}}$ & $0.5000_{\pm.0000}$ & $0.7281_{\pm.0148}$ \\
\midrule
L,C,D   & $0.7745_{\pm.0025}$ & $0.7567_{\pm.0271}$ & $\mathbf{0.8035_{\pm.0095}}$ & $0.7790_{\pm.0033}$ & $0.5216_{\pm.0468}$ & $0.7549_{\pm.0307}$ \\
N,C,D   & $0.7564_{\pm.0063}$ & $0.7408_{\pm.0037}$ & $0.6886_{\pm.0374}$ & $\mathbf{0.7920_{\pm.0042}}$ & $0.5000_{\pm.0000}$ & $0.7386_{\pm.0326}$ \\
N,L,C   & $0.7706_{\pm.0051}$ & $0.7868_{\pm.0052}$ & $0.7861_{\pm.0060}$ & $\mathbf{0.7940_{\pm.0275}}$ & $0.5000_{\pm.0000}$ & $0.7568_{\pm.0294}$ \\
N,L,D   & $0.7217_{\pm.0089}$ & $0.7376_{\pm.0125}$ & $0.7420_{\pm.0434}$ & $\mathbf{0.7776_{\pm.0044}}$ & $0.5000_{\pm.0000}$ & $0.7380_{\pm.0125}$ \\
\midrule
N,L,C,D & $0.7744_{\pm.0034}$ & $0.7904_{\pm.0091}$ & $\mathbf{0.7905_{\pm.0029}}$ & $0.7819_{\pm.0013}$ & $0.7156_{\pm.0125}$ & $0.7646_{\pm.0291}$ \\
\bottomrule
\end{tabular}%
}
\end{table}

\begin{table}[h]
\centering
\caption{Performance comparison on MIMIC-IV (F1-Score) across all modality 
combinations. Results are mean $\pm$ std over five seeds. $\dagger$ denotes statistical 
significance over the runner-up baseline (paired $t$-test, Bonferroni-corrected, $p < 0.05$).}
\label{tab:mimic_f1}
\resizebox{0.9\textwidth}{!}{%
\begin{tabular}{lcccccc}
\toprule
\textbf{Modalities} & \textbf{MulT} & \textbf{Flex-MoE} & \textbf{FuseMoE} & \textbf{Lee-MMGRU} & \textbf{TF} & \textbf{LongMoE} \\
\midrule
C       & $0.3127_{\pm.0073}$ & $0.2915_{\pm.0458}$ & $0.3127_{\pm.0436}$ & $0.3544_{\pm.0066}$ & $0.0000_{\pm.0000}$ & $\mathbf{0.3603_{\pm.0017}}^\dagger$ \\
D       & $0.0903_{\pm.0176}$ & $0.1004_{\pm.0000}$ & $0.0674_{\pm.0306}$ & $0.0623_{\pm.0072}$ & $0.0000_{\pm.0000}$ & $\mathbf{0.3227_{\pm.0155}}^\dagger$ \\
L       & $0.2399_{\pm.0034}$ & $0.2451_{\pm.0045}$ & $0.2207_{\pm.0265}$ & $0.2344_{\pm.0265}$ & $0.0000_{\pm.0000}$ & $\mathbf{0.3383_{\pm.0128}}^\dagger$ \\
N       & $0.2976_{\pm.0248}$ & $0.3025_{\pm.0006}$ & $0.2970_{\pm.0087}$ & $\mathbf{0.3474_{\pm.0047}}$ & $0.0000_{\pm.0000}$ & $0.3412_{\pm.0151}$ \\
\midrule
C,D     & $0.2988_{\pm.0085}$ & $0.2532_{\pm.0562}$ & $0.2943_{\pm.0489}$ & $0.3348_{\pm.0164}$ & $0.0000_{\pm.0000}$ & $\mathbf{0.3564_{\pm.0105}}^\dagger$ \\
L,C     & $0.3783_{\pm.0033}$ & $0.3602_{\pm.0348}$ & $0.3577_{\pm.0520}$ & $0.3842_{\pm.0105}$ & $0.0000_{\pm.0000}$ & $\mathbf{0.3892_{\pm.0065}}^\dagger$ \\
L,D     & $0.2403_{\pm.0108}$ & $0.2237_{\pm.0343}$ & $0.1984_{\pm.0570}$ & $0.2311_{\pm.0204}$ & $0.0000_{\pm.0000}$ & $\mathbf{0.3378_{\pm.0121}}^\dagger$ \\
N,C     & $0.3877_{\pm.0152}$ & $0.3705_{\pm.0214}$ & $0.3707_{\pm.0199}$ & $\mathbf{0.4112_{\pm.0128}}$ & $0.0000_{\pm.0000}$ & $0.3803_{\pm.0051}$ \\
N,D     & $0.3164_{\pm.0159}$ & $0.2692_{\pm.0036}$ & $0.2848_{\pm.0037}$ & $\mathbf{0.3469_{\pm.0037}}$ & $0.0000_{\pm.0000}$ & $0.3418_{\pm.0129}$ \\
N,L     & $0.3539_{\pm.0121}$ & $0.3176_{\pm.0098}$ & $0.3322_{\pm.0034}$ & $\mathbf{0.3938_{\pm.0076}}$ & $0.0000_{\pm.0000}$ & $0.3693_{\pm.0203}$ \\
\midrule
L,C,D   & $0.3830_{\pm.0064}$ & $0.3397_{\pm.0354}$ & $0.3271_{\pm.0744}$ & $0.3758_{\pm.0084}$ & $0.0895_{\pm.1499}$ & $\mathbf{0.3844_{\pm.0033}}^\dagger$ \\
N,C,D   & $\mathbf{0.4083_{\pm.0119}}$ & $0.3343_{\pm.0307}$ & $0.3767_{\pm.0312}$ & $0.4021_{\pm.0075}$ & $0.0000_{\pm.0000}$ & $0.3789_{\pm.0055}$ \\
N,L,C   & $\mathbf{0.4355_{\pm.0242}}$ & $0.4005_{\pm.0319}$ & $0.4081_{\pm.0231}$ & $0.4315_{\pm.0102}$ & $0.0000_{\pm.0000}$ & $0.4163_{\pm.0054}$ \\
N,L,D   & $0.3693_{\pm.0096}$ & $0.3482_{\pm.0136}$ & $0.3219_{\pm.0346}$ & $\mathbf{0.3901_{\pm.0086}}$ & $0.0000_{\pm.0000}$ & $0.3701_{\pm.0169}$ \\
\midrule
N,L,C,D & $\mathbf{0.4396_{\pm.0147}}$ & $0.4034_{\pm.0434}$ & $0.4053_{\pm.0397}$ & $0.4355_{\pm.0048}$ & $0.3235_{\pm.0729}$ & $0.4165_{\pm.0111}$ \\
\bottomrule
\end{tabular}%
}
\end{table}

\subsection{Robustness under Missing Modalities}
Figures~\ref{fig:robustness_acc} and~\ref{fig:robustness_f1} plot Accuracy and F1-Score as a function of the number of missing modalities across all three benchmarks. \textsc{LongMoE} exhibits a near-flat degradation curve on all three datasets: on ADNI, Balanced Accuracy declines from $0.86$ with all modalities present to $0.82$ with only one modality remaining ($\Delta = 0.04$), and F1 remains stable above $0.78$ throughout the entire missingness spectrum; on OASIS-3, \textsc{LongMoE} maintains Balanced Accuracy above $0.55$ and F1 above $0.57$ even at maximum missingness (one modality left), while all baselines degrade steeply from $0.45$ to below $0.34$; and on MIMIC-IV, \textsc{LongMoE} sustains Balanced Accuracy above $0.70$ and F1 above $0.35$ through three missing modalities, where competing methods fall to $0.63$-$0.69$ and TF collapses to the majority-class baseline ($0.50$) after the first modality is removed. In contrast, all baselines exhibit steep, near-linear degradation: TF collapses immediately upon the removal of a single modality on ADNI and MIMIC-IV, recording random-level performance throughout; Flex-MoE, FuseMoE, and MulT converge toward each other at maximum missingness, confirming that their static per-visit fusion strategies provide no mechanism for compensating missing contemporaneous information; and Lee-MMGRU, despite its explicit recurrent temporal encoder, degrades comparably to the static baselines once the clinical modality is removed, exposing its dependence on cross-sectional signal rather than trajectory context. The consistent near-flat profile of \textsc{LongMoE} across all three benchmarks, datasets scales, and task types is direct evidence that the trajectory-aware Transformer encoder substitutes longitudinal history for missing contemporaneous modality information, providing a principled and transferable robustness mechanism under the full modality missingness spectrum.

\begin{figure}[ht]
  \centering
  \includegraphics[width=0.9\textwidth]{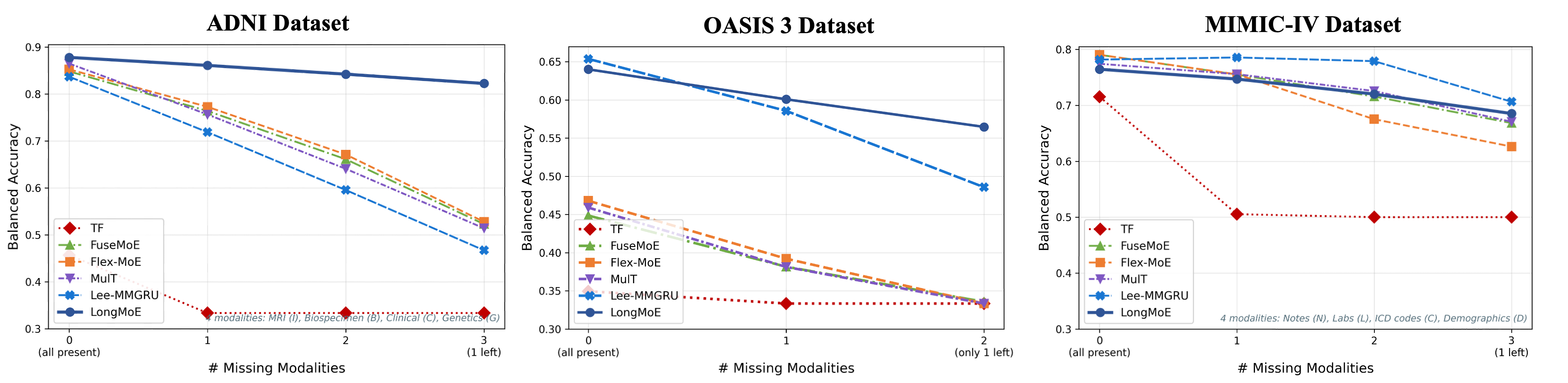}
  \caption{%
    Missing-modality robustness curves (Accuracy) for all the datasets.
  }
  \label{fig:robustness_acc}
\end{figure}

\begin{figure}[ht]
  \centering
  \includegraphics[width=0.9\textwidth]{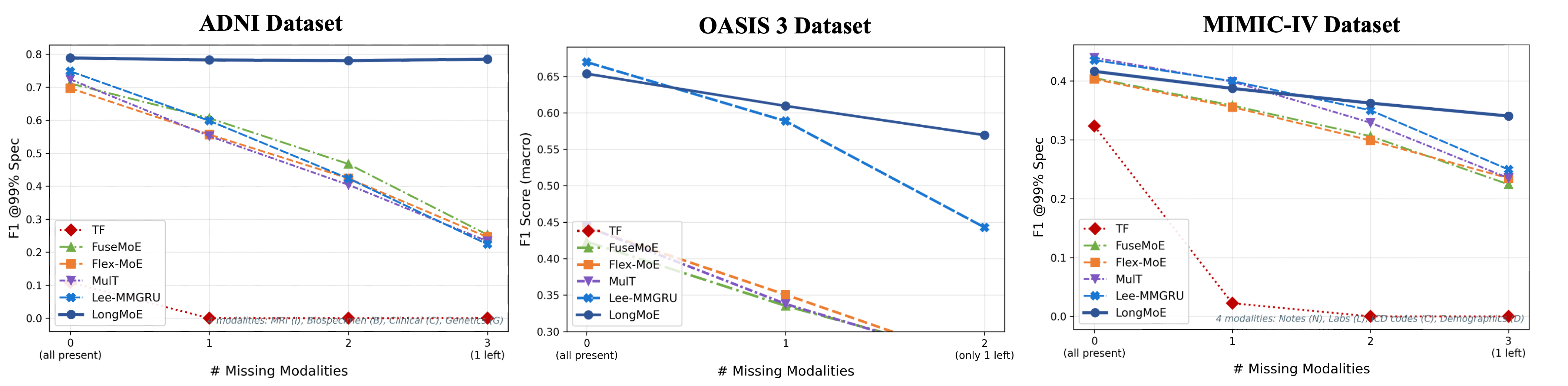}
  \caption{%
    Missing-modality robustness curves (F1-score) for all the datasets.
  }
  \label{fig:robustness_f1}
\end{figure}

\subsection{Ablation Studies}
\label{app:ablation_masking}

\subsubsection{ADNI}

\paragraph{Ablation under Missing Modalities.}
Table~\ref{tab:ablation_masking} reports the $\Delta$AUC of each ablation variant relative to the \textsc{LongMoE} model as a function of the number of available modalities at inference time (1, 2, 3, and 4 modalities) on ADNI. The results show a consistent pattern across the missingness spectrum. The \textit{w/o Viscode Encoding (VE)} variant produces the largest and most persistent drops at all levels of missingness ($\Delta\mathrm{AUC} = -0.025$ at 1 modality, declining to $-0.015$ at full modality), confirming that the multi-frequency sinusoidal temporal encoding is the component most critical for compensating absent modalities. Its removal degrades the model's ability to represent the temporal trajectory regardless of how many modalities are currently observed. Removing \textit{Shared Experts (SE)} and disabling \textit{Load Balancing} ($\mathcal{L}_{\mathrm{bal}}$) both produce consistent moderate drops of $-0.006$ across the low-modality settings, indicating that the shared expert backbone and balanced routing are particularly important when fewer modalities are available to distinguish patient subgroups. The \textit{w/o Longitudinal Encoder (LE)} and \textit{w/o Imputation: Zero Padding} variants show modest and uniform drops of $-0.003$ to $-0.001$, suggesting that at low modality counts the carry-forward imputation strategy largely compensates for the absence of the encoder. These two components are partially redundant when the missingness is severe. Conversely, \textit{w/o MoE (Single Expert)} produces a slight positive $\Delta$AUC ($+0.001$ to $+0.003$) at low modality counts, indicating that under extreme missingness a single expert can outperform the sparse routing mechanism, which may suffer from insufficient signal to make reliable routing decisions, which is a known limitation of sparse MoE in very low-information regimes.

\begin{table}[ht]
  \centering
  \caption{%
    Ablation under missing modalities on ADNI.
    Values report $\Delta\mathrm{AUC}$ relative to the
    \textsc{LongMoE} model at each missingness level
    (1 modality available through 4 modalities / full).
  }
  \label{tab:ablation_masking}
  \renewcommand{\arraystretch}{1.25}
  \resizebox{0.57\textwidth}{!}{%
  \begin{tabular}{lcccc}
    \toprule
    \textbf{Model Variants}
      & \textbf{1 mod}
      & \textbf{2 mods}
      & \textbf{3 mods}
      & \textbf{4 mods (full)} \\
    \midrule
    \textit{w/o} LE
      & $-0.003$ & $-0.002$ & $-0.001$ & $-0.001$ \\
    \textit{w/o} VE
      & $-0.025$ & $-0.023$ & $-0.020$ & $-0.015$ \\
    \textit{w/o} Imp-ZP
      & $-0.003$ & $-0.002$ & $-0.001$ & $-0.001$ \\
    \textit{w/o} Imp-GM
      & $-0.001$ & $-0.000$ & $+0.000$ & $+0.002$ \\
    \textit{w/o} Single Expert
      & $+0.003$ & $+0.002$ & $+0.001$ & $+0.001$ \\
    \textit{w/o} Shared Experts
      & $-0.006$ & $-0.006$ & $-0.005$ & $-0.003$ \\
    w/o $\mathcal{L}_{\mathrm{bal}}$
      & $-0.006$ & $-0.008$ & $-0.008$ & $-0.007$ \\
    \bottomrule
  \end{tabular}%
  }
\end{table}

\subsubsection{OASIS-3}
\label{app:ablation_oasis}
 
\paragraph{Component-wise Ablation.}
Table~\ref{tab:ablation_oasis} reports component-wise ablation on OASIS-3 (full modality: I,C,G) relative to \textsc{LongMoE}. The highest degradations arise from removing Shared Experts (SE; $\Delta\mathrm{Acc} = -0.0417$, $\Delta\mathrm{AUC} = -0.0102$, $\Delta\mathrm{F1} = -0.0479$) and the MoE routing in favour of a Single Expert ($\Delta\mathrm{Acc} = -0.0409$, $\Delta\mathrm{AUC} = -0.0040$, $\Delta\mathrm{F1} = -0.0583$), confirming that on the smaller OASIS-3 cohort shared experts are critical for bridging modality gaps across the heterogeneous CN/MCI/AD population where individual experts receive insufficient data to specialise in isolation. The Viscode Encoding (VE) produces the largest AUC drop among temporal components ($\Delta\mathrm{AUC} = -0.0256$, $\Delta\mathrm{F1} = -0.0293$), indicating that temporal encoding is more critical on OASIS-3 than the near-zero ADNI AUC delta suggested, while removing the Longitudinal Encoder (LE) produces comparable degradation ($\Delta\mathrm{AUC} = -0.0066$, $\Delta\mathrm{F1} = -0.0261$), confirming that both temporal components contribute independently to discriminative performance. The Global Mean imputation ablation produces the largest accuracy drop among non-MoE variants ($\Delta\mathrm{Acc} = -0.0259$, $\Delta\mathrm{AUC} = -0.0172$, $\Delta\mathrm{F1} = -0.0457$), consistent with global mean statistics overfitting to the population distribution on the smaller 1,098-patient cohort.
 
\begin{table}[ht]
  \centering
  \caption{%
    Component-wise ablation of \textsc{LongMoE} on OASIS-3
    (full modality).
    Results report Balanced Accuracy, AUC-ROC, Macro F1-score, and
    their deltas relative to the full model.
  }
  \label{tab:ablation_oasis}
  \renewcommand{\arraystretch}{1.2}
  \resizebox{0.7\textwidth}{!}{%
  \begin{tabular}{lcccccc}
    \toprule
    \textbf{Variant}
      & \textbf{Acc}
      & \textbf{AUC-ROC}
      & \textbf{F1-Score}
      & $\Delta$\textbf{Acc}
      & $\Delta$\textbf{AUC}
      & $\Delta$\textbf{F1} \\
    \midrule
    \textit{w/o} LE
      & 0.6271 & 0.8486 & 0.6276
      & $-0.0129$ & $-0.0066$ & $-0.0261$ \\
    \textit{w/o} VE
      & 0.6277 & 0.8296 & 0.6244
      & $-0.0123$ & $-0.0256$ & $-0.0293$ \\
    \textit{w/o} Imp-ZP
      & 0.6271 & 0.8486 & 0.6276
      & $-0.0129$ & $-0.0066$ & $-0.0261$ \\
    \textit{w/o} Imp-GM
      & 0.6141 & 0.8380 & 0.6080
      & $-0.0259$ & $-0.0172$ & $-0.0457$ \\
    \textit{w/o} Single Expert
      & 0.5991 & 0.8512 & 0.5954
      & $-0.0409$ & $-0.0040$ & $-0.0583$ \\
    \textit{w/o} Shared Experts
      & 0.5983 & 0.8450 & 0.6058
      & $-0.0417$ & $-0.0102$ & $-0.0479$ \\
    \textit{w/o} $\mathcal{L}_{\mathrm{bal}}$
      & 0.6317 & 0.8510 & 0.6466
      & $-0.0083$ & $-0.0042$ & $-0.0071$ \\
    \midrule
    \textsc{LongMoE}
      & \textbf{0.6400} & \textbf{0.8552} & \textbf{0.6537}
      & --- & --- & --- \\
    \bottomrule
  \end{tabular}%
  }
\end{table}

\paragraph{Ablation under Missing Modalities.}
Table~\ref{tab:ablation_oasis_masking} reports $\Delta\mathrm{AUC}$ as a function of the number of available modalities at inference time on OASIS-3. \textit{w/o} $\mathcal{L}_{\mathrm{bal}}$ and \textit{w/o} Shared Experts (SE) produce the largest drops at 1 modality ($-0.022$ and $-0.021$ respectively), confirming that balanced routing and the shared expert backbone are most critical when modality availability is minimal and the model must maintain discriminative performance from a single modality stream. \textit{w/o} Imp-GM is uniquely consistent across all missingness levels ($-0.016$, $-0.015$, $-0.017$), the only variant that degrades at full modality as well, reflecting the overfitting of population-level imputation statistics on this small cohort regardless of how many modalities are observed. By contrast, \textit{w/o} VE yields small positive $\Delta$AUC across all levels ($+0.001$, $+0.006$, $+0.004$), indicating that on OASIS-3's shorter visit sequences the temporal encoding contributes minimally to AUC under modality masking; and \textit{w/o} Single Expert shows a mixed profile ($-0.017$ at 1 modality, $+0.016$ at full), suggesting that sparse routing benefits information-rich full-modality inputs but converges slower than a monolithic expert under extreme missingness on this smaller dataset.

\begin{table}[ht]
  \centering
  \caption{%
    Ablation under modality masking on OASIS-3.
    Values report $\Delta\mathrm{AUC}$ relative to the full
    \textsc{LongMoE} model at each missingness level
    (1 modality through 3 modalities / full).
  }
  \label{tab:ablation_oasis_masking}
  \renewcommand{\arraystretch}{1.25}
  \resizebox{0.5\textwidth}{!}{%
  \begin{tabular}{lccc}
    \toprule
    \textbf{Variant}
      & \textbf{1 mod}
      & \textbf{2 mods}
      & \textbf{3 mods (full)} \\
    \midrule
    \textit{w/o} LE
      & $-0.009$ & $-0.005$ & $+0.003$ \\
    \textit{w/o} VE
      & $+0.001$ & $+0.006$ & $+0.004$ \\
    \textit{w/o} Imp-ZP
      & $-0.009$ & $-0.005$ & $+0.003$ \\
    \textit{w/o} Imp-GM
      & $-0.016$ & $-0.015$ & $-0.017$ \\
    \textit{w/o} Single Expert
      & $-0.017$ & $+0.000$ & $+0.016$ \\
    \textit{w/o} Shared Experts
      & $-0.021$ & $-0.011$ & $-0.010$ \\
    \textit{w/o} $\mathcal{L}_{\mathrm{bal}}$
      & $-0.022$ & $-0.015$ & $-0.004$ \\
    \bottomrule
  \end{tabular}%
  }
\end{table}

\subsection{Expert Specialisation Analysis}

\paragraph{OASIS-3.}
Figure~\ref{fig:expert_oasis} analyses routing behaviour on OASIS-3 ($K=8$: E1-E5 shared; E6-E8 individual, assigned to Imaging, Clinical, and Genetics respectively). The left plot shows a more heterogeneous stage distribution across shared experts than on ADNI, reflecting OASIS-3's more balanced CN/MCI/AD class split: E1 is the most CN-dominant (49\% CN), E2 and E3 are substantially MCI-enriched (30\% CN, rising MCI), and E4/E5 accumulate higher AD fractions (30\% AD), indicating that shared experts still capture a disease-severity gradient despite the more challenging class distribution. Individual experts show complementary stage profiles: E6 (Imaging) and E7 (Clinical) both receive approximately 40\% CN with rising MCI and AD fractions, while E8 (Genetics) is strongly AD-enriched (47\% AD), suggesting that genetics-only inputs are disproportionately associated with confirmed AD diagnoses in OASIS-3. The right plot shows that unlike ADNI, shared experts on OASIS-3 receive highly diverse modality combinations across I, C, G, and I,C combinations, consistent with the dataset's smaller cohort and three-modality structure; individual experts show clear modality-aligned activation: E6 (Imaging) concentrates on I and I,G; E7 (Clinical) on C and C,G; and E8 (Genetics) on G and I,C,G, confirming that context-conditioned routing recovers modality specialisation even in a smaller, more heterogeneous dataset.
 
\begin{figure}[ht]
  \centering
  \resizebox{0.7\textwidth}{!}{%
    \includegraphics[width=\textwidth]{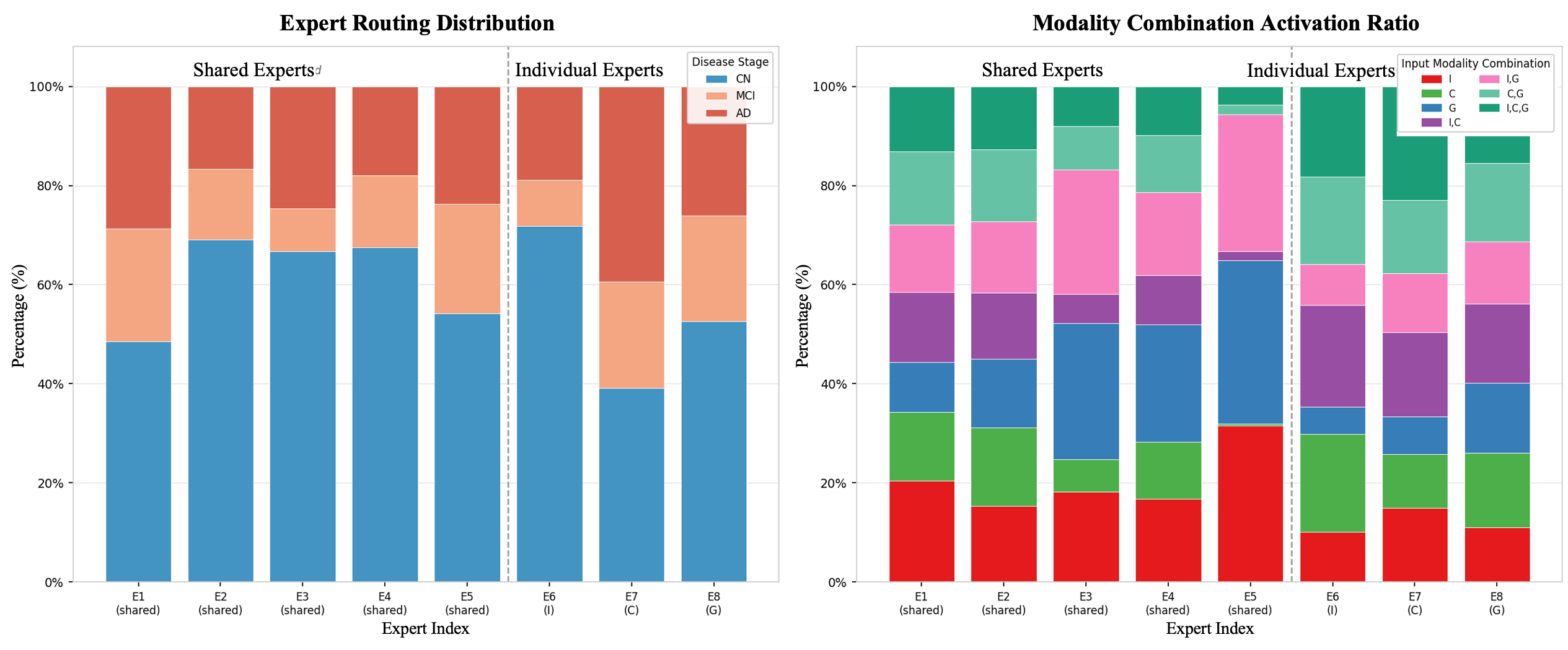}%
   }
  \caption{%
    Expert specialization analysis on OASIS-3.
    \textit{Left:} Expert routing distribution stratified by disease
    stage for shared and individual experts.
    \textit{Right:} Modality combination activation ratio per expert
    across the different OASIS-3 combinations.
  }
  \label{fig:expert_oasis}
\end{figure}
 
\paragraph{MIMIC-IV.}
Figure~\ref{fig:expert_mimic} analyses routing behaviour on MIMIC-IV ($K=8$: E1-E4 shared; E5-E8 individual, assigned to Clinical Notes, Lab Values, ICD Codes, and Demographics respectively). The left plot shows that all shared and individual experts are overwhelmingly dominated by the Survived class (88-90\%), consistent with the heavily imbalanced label distribution (80.6\% Survived), with shared experts showing a modest survival gradient: E1 (88\% Survived), E2 (91\%), E3 (89\%), and E4 (88\%), while individual experts exhibit slightly lower survival fractions (78-83\%), indicating that individual experts are relatively more active for the minority Died class than shared experts. The right plot shows that shared experts are predominantly activated by Notes and Notes-containing combinations, reflecting the dominance and universal availability of clinical notes in MIMIC-IV (100\% coverage), while individual experts exhibit clear modality-aligned specialisation: E5 (Notes) receives highly diverse combinations including N,C,D and N,L,C,D; E6 (Labs) concentrates on L and L-containing combinations; E7 (ICD Codes) shows broad activation across C, N,C, and L,C,D; and E8 (Demographics) is predominantly activated by D and Demographics-inclusive combinations. Together, these patterns confirm that \textsc{LongMoE}'s context-conditioned routing generalises expert specialisation meaningfully across both label types (multiclass staging and binary mortality) and modality semantics (neuroimaging vs.\ EHR), validating the architecture's subgroup-aware fusion design beyond the AD domain.
 
\begin{figure}[ht]
  \centering
  \resizebox{0.7\textwidth}{!}{%
    \includegraphics{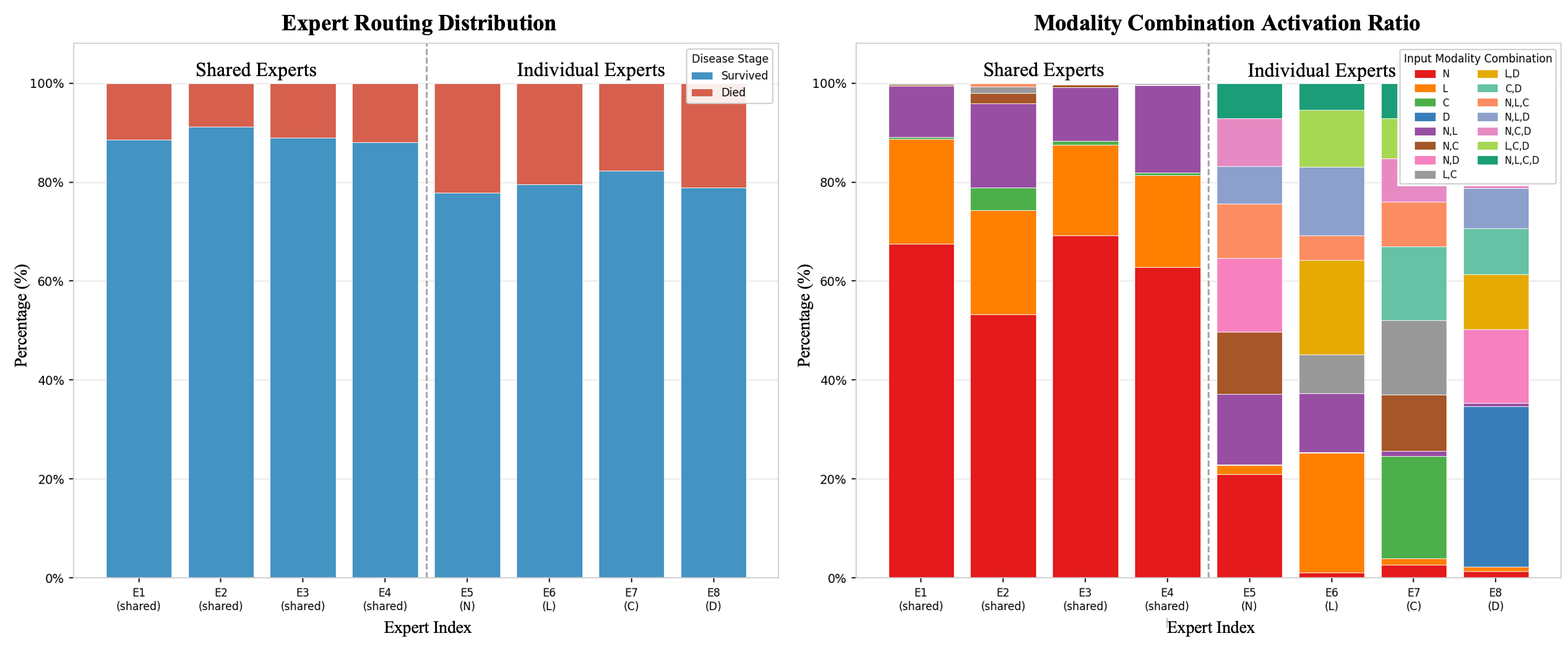}%
  }
  \caption{%
    Expert specialization analysis on MIMIC-IV.
    \textit{Left:} Expert routing distribution stratified by outcome
    for shared and individual experts.
    \textit{Right:} Modality combination activation ratio per expert
    across MIMIC-IV modality combinations.
  }
  \label{fig:expert_mimic}
\end{figure}

\subsection{Hyperparameter Configuration}
\label{app:hyperparams}
 
Tables~\ref{tab:hp_longmoe}--\ref{tab:hp_tf} report the final hyperparameter configurations for \textsc{LongMoE} and all the baselines after optimisation. All models share a common training infrastructure: gradient clipping at \texttt{max\_norm}$=1.0$, a patient-level stratified 70/15/15 data split, and five random seeds. \textsc{LongMoE} uses a higher epoch budget (100) with a cosine annealing scheduler to accommodate the longer convergence time of the trajectory-aware Transformer encoder; baseline models converge within 50 epochs. For \textsc{LongMoE}, the MoE layer comprises $K=8$ experts partitioned into 5 shared experts ($=\max(1, 8-3)$) and 3 modality- specific individual experts (I, C, G), with hard top-1 routing and attentional aggregation over visits. For Flex-MoE, the missing modality embedding bank has shape $(2^{n_{\mathrm{mods}}} - 1,\; n_{\mathrm{mods}},\; \texttt{hidden\_dim})$, covering all $2^L - 1$ observed modality patterns. For Lee-MMGRU, the GRU encoder (Stage 1) is trained end-to-end with Adam, and a Stage 2 L1-penalised logistic regression is fitted on the frozen GRU representations using the SAGA solver with $C \in \{0.01, 0.1, 1.0, 10.0\}$ cross-validated on the validation set and StandardScaler feature normalisation. All hyperparameters were selected by grid search over the validation set. Baseline configurations follow the original papers where available and are tuned otherwise.
 
\begin{table}[ht]
  \centering
  \caption{Hyperparameter configuration for \textsc{LongMoE}.}
  \label{tab:hp_longmoe}
  \renewcommand{\arraystretch}{1.2}
  \resizebox{0.4\textwidth}{!}{%
  \begin{tabular}{ll}
    \toprule
    \textbf{Hyperparameter} & \textbf{Value} \\
    \midrule
    Learning Rate           & $3 \times 10^{-4}$ \\
    Batch Size              & 32 \\
    Weight Decay            & $1 \times 10^{-4}$ \\
    Optimiser               & AdamW \\
    Epochs                  & 100 \\
    Scheduler               & CosineAnnealingLR \\
    Gradient Clipping       & \texttt{max\_norm} $= 1.0$ \\
    Token Dimension ($d_{\mathrm{tok}}$) & 256 \\
    Transformer Layers ($N$) & 2 \\
    Total Experts ($K$)     & 8 \\
    Shared Experts          & 5 ($= \max(1,\, 8-3)$) \\
    Modality Experts        & 3 (I,C,G) \\
    Top-$K$ Expert Selection & 1 \\
    Dropout                 & 0.2 \\
    Aggregation             & Attentional (\texttt{attn}) \\
    Output Classes          & 3 (CN / MCI / AD) \\
    \bottomrule
  \end{tabular}%
  }
\end{table}
 
\begin{table}[ht]
  \centering
  \caption{Hyperparameter configuration for Flex-MoE~\citep{yun2024flexmoe}.}
  \label{tab:hp_flexmoe}
  \renewcommand{\arraystretch}{1.2}
  \resizebox{0.5\textwidth}{!}{%
  \begin{tabular}{ll}
    \toprule
    \textbf{Hyperparameter} & \textbf{Value} \\
    \midrule
    Learning Rate           & $1 \times 10^{-4}$ \\
    Batch Size              & 8 \\
    Epochs                  & 50 \\
    Warm-up Epochs          & 5 \\
    Dropout                 & 0.5 \\
    Gate Loss Weight        & 0.01 \\
    Hidden Dim              & 128 \\
    Number of Experts       & 16 \\
    Number of Routers       & 1 \\
    Fusion Layers           & 1 \\
    Prediction Layers       & 1 \\
    Top-$K$                 & 4 \\
    Attention Heads         & 4 \\
    Missing Embed Shape     & $(2^{n_{\mathrm{mods}}}-1,\; n_{\mathrm{mods}},\; \texttt{hidden\_dim})$ \\
    Gradient Clipping       & \texttt{max\_norm} $= 1.0$ \\
    \bottomrule
  \end{tabular}%
  }
\end{table}
 
\begin{table}[ht]
  \centering
  \caption{Hyperparameter configuration for FuseMoE~\citep{han2024fusemoe}.}
  \label{tab:hp_fusemoe}
  \renewcommand{\arraystretch}{1.2}
  \resizebox{0.38\textwidth}{!}{%
  \begin{tabular}{ll}
    \toprule
    \textbf{Hyperparameter} & \textbf{Value} \\
    \midrule
    Learning Rate           & $1 \times 10^{-4}$ \\
    Batch Size              & 32 \\
    Weight Decay            & $1 \times 10^{-4}$ \\
    Optimiser               & AdamW \\
    Epochs                  & 50 \\
    Gradient Clipping       & \texttt{max\_norm} $= 1.0$ \\
    Hidden Dim              & 128 \\
    Number of Experts       & 8 \\
    Top-$K$                 & 2 \\
    Gating                  & Laplace distance-based \\
    Dropout                 & 0.1 \\
    \bottomrule
  \end{tabular}%
  }
\end{table}
 
\begin{table}[ht]
  \centering
  \caption{Hyperparameter configuration for MulT~\citep{tsai2019multimodal}.}
  \label{tab:hp_mult}
  \renewcommand{\arraystretch}{1.2}
  \resizebox{0.42\textwidth}{!}{%
  \begin{tabular}{ll}
    \toprule
    \textbf{Hyperparameter} & \textbf{Value} \\
    \midrule
    Learning Rate           & $1 \times 10^{-4}$ \\
    Batch Size              & 32 \\
    Weight Decay            & $1 \times 10^{-4}$ \\
    Optimiser               & AdamW \\
    Epochs                  & 50 \\
    Gradient Clipping       & \texttt{max\_norm} $= 1.0$ \\
    Model Dim ($d_{\mathrm{model}}$) & 128 \\
    Attention Heads         & 4 \\
    Transformer Layers      & 3 \\
    FFN Dim                 & 512 ($d_{\mathrm{model}} \times 4$) \\
    Activation              & GELU \\
    Dropout                 & 0.1 \\
    CLS Token               & Learnable \\
    Modality Position Embeddings & Learnable \\
    \bottomrule
  \end{tabular}%
  }
\end{table}
 
\begin{table}[ht]
  \centering
  \caption{Hyperparameter configuration for Lee-MMGRU~\citep{lee2019predicting}.
  Stage 1: GRU encoder. Stage 2: L1-penalised logistic regression
  on frozen representations.}
  \label{tab:hp_leegru}
  \renewcommand{\arraystretch}{1.2}
  \resizebox{0.37\textwidth}{!}{%
  \begin{tabular}{ll}
    \toprule
    \textbf{Hyperparameter} & \textbf{Value} \\
    \midrule
    \multicolumn{2}{l}{\textit{Stage 1 — GRU Encoder}} \\
    \midrule
    Learning Rate           & $1 \times 10^{-3}$ \\
    Batch Size              & 32 \\
    Weight Decay            & $1 \times 10^{-5}$ \\
    Optimiser               & Adam \\
    Epochs                  & 50 \\
    Gradient Clipping       & \texttt{max\_norm} $= 1.0$ \\
    GRU Hidden Dim          & 64 \\
    GRU Layers              & 1 \\
    Dropout                 & 0.1 \\
    \midrule
    \multicolumn{2}{l}{\textit{Stage 2 — Logistic Regression}} \\
    \midrule
    Penalty                 & L1 \\
    $C$ Candidates          & $\{0.01,\; 0.1,\; 1.0,\; 10.0\}$ \\
    Solver                  & SAGA \\
    Max Iterations          & 5{,}000 \\
    Feature Scaling         & StandardScaler \\
    \bottomrule
  \end{tabular}%
  }
\end{table}
 
\begin{table}[ht]
  \centering
  \caption{Hyperparameter configuration for TF (Tensor Fusion
  Network)~\citep{zadeh2017tensor}.}
  \label{tab:hp_tf}
  \renewcommand{\arraystretch}{1.2}
  \resizebox{0.58\textwidth}{!}{%
  \begin{tabular}{ll}
    \toprule
    \textbf{Hyperparameter} & \textbf{Value} \\
    \midrule
    Learning Rate           & $1 \times 10^{-4}$ \\
    Batch Size              & 32 \\
    Weight Decay            & $1 \times 10^{-4}$ \\
    Optimiser               & AdamW \\
    Epochs                  & 50 \\
    Gradient Clipping       & \texttt{max\_norm} $= 1.0$ \\
    Hidden Dim              & 128 \\
    Dropout                 & 0.1 \\
    Fusion                  & Outer product of $(\texttt{hidden\_dim}+1)$ vectors \\
    Fused Dim               & $\min\!\left((\texttt{hidden\_dim}+1)^{n_{\mathrm{mods}}},\;
                              \texttt{hidden\_dim} \times 4\right)$ \\
    \bottomrule
  \end{tabular}%
  }
\end{table}



\begin{thebibliography}{99}
 
 
\bibitem{baltrušaitis2018multimodal}
T.~Baltru\v{s}aitis, C.~Ahuja, and L.-P.~Morency,
``Multimodal machine learning: A survey and taxonomy,''
\textit{IEEE Transactions on Pattern Analysis and Machine Intelligence},
vol.~41, no.~2, pp.~423--443, 2018.
 
\bibitem{liang2021multibench}
P.~P.~Liang, Y.~Lyu, X.~Fan, Z.~Wu, Y.~Cheng, J.~Wu, L.~Chen, P.~Wu,
M.~A.~Lee, Y.~Zhu, R.~Salakhutdinov, and L.~Morency,
``MultiBench: Multiscale benchmarks for multimodal representation learning,''
in \textit{Advances in Neural Information Processing Systems (NeurIPS)},
vol.~34, pp.~5813--5826, 2021.
 
\bibitem{zhang2025multimodal}
Y.~Zhang and D.-Y.~Yeung,
``Multi-task learning in heterogeneous feature spaces,''
in \textit{Proceedings of the AAAI Conference on Artificial Intelligence},
vol.~25, pp.~574--579, 2011.
 
 
\bibitem{jack2013biomarker}
C.~R.~Jack and D.~M.~Holtzman,
``Biomarker modeling of Alzheimer's disease,''
\textit{Neuron}, vol.~80, no.~6, pp.~1347--1358, 2013.
 
\bibitem{marquez2019neuroimaging}
F.~M\'{a}rquez and M.~A.~Yassa,
``Neuroimaging biomarkers for Alzheimer's disease,''
\textit{Molecular Neurodegeneration}, vol.~14, no.~1, p.~21, 2019.
 
\bibitem{papassotiropoulos2006genetics}
A.~Papassotiropoulos, M.~Fountoulakis, T.~Dunckley, D.~A.~Stephan, and
E.~M.~Reiman,
``Genetics, transcriptomics, and proteomics of Alzheimer's disease,''
\textit{Journal of Clinical Psychiatry}, vol.~67, no.~4, p.~652, 2006.
 
 
\bibitem{yun2024flexmoe}
S.~Yun, I.~Choi, J.~Peng, Y.~Wu, J.~Bao, Q.~Zhang, J.~Xin, Q.~Long,
and T.~Chen,
``Flex-MoE: Modeling arbitrary modality combination via the flexible
mixture-of-experts,''
in \textit{Advances in Neural Information Processing Systems (NeurIPS)},
2024.
 
\bibitem{han2024fusemoe}
X.~Han, H.~Nguyen, C.~Harris, N.~Ho, and S.~Saria,
``FuseMoE: Mixture-of-experts transformers for fleximodal fusion,''
in \textit{Advances in Neural Information Processing Systems (NeurIPS)},
2024.
 
\bibitem{zhang2022mmformer}
Y.~Zhang, N.~He, J.~Yang, Y.~Li, D.~Wei, Y.~Huang, Y.~Zhang, Z.~He,
and Y.~Zheng,
``mmFormer: Multimodal medical transformer for incomplete multimodal
learning of brain tumor segmentation,''
\textit{arXiv preprint arXiv:2206.02425}, 2022.
 
\bibitem{wang2024sharedspecific}
H.~Wang, Y.~Chen, C.~Ma, J.~Avery, L.~Hull, and G.~Carneiro,
``Multi-modal learning with missing modality via shared-specific feature
modelling,''
in \textit{Proceedings of the IEEE/CVF Conference on Computer Vision
and Pattern Recognition (CVPR)}, 2024.
 
 
\bibitem{shazeer2017outrageously}
N.~Shazeer, A.~Mirhoseini, K.~Maziarz, A.~Davis, Q.~Le, G.~Hinton,
and J.~Dean,
``Outrageously large neural networks: The sparsely-gated
mixture-of-experts layer,''
in \textit{International Conference on Learning Representations (ICLR)},
2017.
 
\bibitem{fedus2022switch}
W.~Fedus, B.~Zoph, and N.~Shazeer,
``Switch transformers: Scaling to trillion parameter models with simple
and efficient sparsity,''
\textit{Journal of Machine Learning Research}, vol.~23, no.~120,
pp.~1--39, 2022.
 
\bibitem{jiang2024mixtral}
A.~Q.~Jiang, A.~Sablayrolles, A.~Roux, A.~Mensch, B.~Savary,
C.~Bamford, D.~S.~Chaplot, D.~de las Casas, E.~B.~Hanna,
F.~Bressand, et~al.,
``Mixtral of experts,''
\textit{arXiv preprint arXiv:2401.04088}, 2024.
 
\bibitem{mustafa2022limoe}
B.~Mustafa, C.~Riquelme, J.~Puigcerver, R.~Jenatton, and N.~Houlsby,
``Multimodal contrastive learning with LiMoE: The language-image
mixture of experts,''
in \textit{Advances in Neural Information Processing Systems (NeurIPS)},
2022.
 
\bibitem{peng2024sparse}
J.~Peng, K.~Zhou, R.~Zhou, T.~Hartvigsen, Y.~Zhang, Z.~Wang,
and T.~Chen,
``Sparse MoE as a new treatment: Addressing forgetting, fitting, and
learning issues in multi-modal multi-task learning,''
in \textit{International Conference on Learning Representations (ICLR)},
2024.
 
 
\bibitem{zadeh2017tensor}
A.~Zadeh, M.~Chen, S.~Poria, E.~Cambria, and L.-P.~Morency,
``Tensor fusion network for multimodal sentiment analysis,''
in \textit{Proceedings of the Conference on Empirical Methods in
Natural Language Processing (EMNLP)}, pp.~1103--1114, 2017.
 
\bibitem{tsai2019multimodal}
Y.-H.~H.~Tsai, S.~Bai, P.~P.~Liang, J.~Z.~Kolter, L.-P.~Morency,
and R.~Salakhutdinov,
``Multimodal transformer for unaligned multimodal language sequences,''
in \textit{Proceedings of the Annual Meeting of the Association for
Computational Linguistics (ACL)}, pp.~6558--6569, 2019.
 
\bibitem{rahman2020integrating}
W.~Rahman, M.~K.~Hasan, S.~Lee, A.~B.~Zadeh, C.~Mao, L.-P.~Morency,
and E.~Hoque,
``Integrating multimodal information in large pretrained transformers,''
in \textit{Proceedings of the Annual Meeting of the Association for
Computational Linguistics (ACL)}, pp.~2359--2369, 2020.
 
 
\bibitem{lee2019predicting}
G.~Lee, K.~Nho, B.~Kang, K.-A.~Sohn, and D.~Kim,
``Predicting Alzheimer's disease progression using multi-modal deep
learning approach,''
\textit{Scientific Reports}, vol.~9, no.~1, p.~1952, 2019.
 
\bibitem{venugopalan2021multimodal}
J.~Venugopalan, L.~Tong, H.~R.~Hassanzadeh, and M.~D.~Wang,
``Multimodal deep learning models for early detection of Alzheimer's
disease stage,''
\textit{Scientific Reports}, vol.~11, no.~1, p.~3254, 2021.
 
\bibitem{odusami2023machine}
M.~Odusami, R.~Maskeliunas, R.~Damasevicius, and S.~Misra,
``Machine learning with multi-modal neuroimaging data to classify stages
of Alzheimer's disease: A systematic review and meta-analysis,''
\textit{Cognitive Neurodynamics}, pp.~1--20, 2023.
 
 
\bibitem{vaswani2017attention}
A.~Vaswani, N.~Shazeer, N.~Parmar, J.~Uszkoreit, L.~Jones,
A.~N.~Gomez, \L.~Kaiser, and I.~Polosukhin,
``Attention is all you need,''
in \textit{Advances in Neural Information Processing Systems (NeurIPS)},
vol.~30, pp.~5998--6008, 2017.
 
\bibitem{devlin2019bert}
J.~Devlin, M.-W.~Chang, K.~Lee, and K.~Toutanova,
``BERT: Pre-training of deep bidirectional transformers for language
understanding,''
in \textit{Proceedings of the Annual Conference of the North American
Chapter of the Association for Computational Linguistics (NAACL)},
pp.~4171--4186, 2019.
 
 
\bibitem{weiner2010adni}
M.~W.~Weiner, P.~S.~Aisen, C.~R.~Jack, W.~J.~Jagust,
J.~Q.~Trojanowski, L.~Shaw, A.~J.~Saykin, J.~C.~Morris, N.~Cairns,
L.~A.~Beckett, et~al.,
``The Alzheimer's Disease Neuroimaging Initiative: Progress report and
future plans,''
\textit{Alzheimer's \& Dementia}, vol.~6, no.~3, pp.~202--211, 2010.
 
\bibitem{lamontagne2019oasis}
P.~J.~LaMontagne, T.~L.~Benzinger, J.~C.~Morris, S.~Keefe,
R.~Hornbeck, C.~Yao, A.~Hintaus, and D.~Marcus,
``OASIS-3: Longitudinal neuroimaging, clinical, and cognitive dataset
for normal aging and Alzheimer's disease,''
\textit{medRxiv}, 2019.

\bibitem{johnson2020mimic}
Johnson, A., Bulgarelli, L., Pollard, T., Horng, S., Celi, L. A., and Mark, R. (2023).
MIMIC-IV, a freely accessible electronic health record dataset.
\textit{Scientific Data}, \textit{10}(1), 1--9.
Nature Publishing Group.
 
 
\bibitem{doshi2016muse}
J.~Doshi, G.~Erus, Y.~Ou, S.~M.~Resnick, R.~C.~Gur, R.~E.~Gur,
T.~D.~Satterthwaite, S.~Furth, C.~Davatzikos, and A.~D.~N.~Initiative,
``MUSE: Multi-atlas region segmentation utilizing ensembles of
registration algorithms and parameters, and locally optimal atlas
selection,''
\textit{NeuroImage}, vol.~127, pp.~186--195, 2016.
 
\bibitem{ou2011dramms}
Y.~Ou, A.~Sotiras, N.~Paragios, and C.~Davatzikos,
``DRAMMS: Deformable registration via attribute matching and
mutual-saliency weighting,''
\textit{Medical Image Analysis}, vol.~15, no.~4, pp.~622--639, 2011.
 
 
\bibitem{loshchilov2019decoupled}
I.~Loshchilov and F.~Hutter,
``Decoupled weight decay regularization,''
in \textit{International Conference on Learning Representations (ICLR)},
2019.
 
\bibitem{pytorch}
A.~Paszke, S.~Gross, F.~Massa, A.~Lerer, J.~Bradbury, G.~Chanan,
T.~Killeen, Z.~Lin, N.~Gimelshein, L.~Antiga, et~al.,
``PyTorch: An imperative style, high-performance deep learning library,''
in \textit{Advances in Neural Information Processing Systems (NeurIPS)},
vol.~32, pp.~8024--8035, 2019.
 
 
\bibitem{little2019statistical}
R.~J.~A.~Little and D.~B.~Rubin,
\textit{Statistical Analysis with Missing Data}, 3rd ed.
John Wiley \& Sons, 2019.
 
 
\bibitem{riquelme2021scaling}
C.~Riquelme, J.~Puigcerver, B.~Mustafa, M.~Neumann, R.~Jenatton,
A.~S.~Pinto, D.~Keysers, and N.~Houlsby,
``Scaling vision with sparse mixture of experts,''
in \textit{Advances in Neural Information Processing Systems (NeurIPS)},
vol.~34, pp.~8583--8595, 2021.
 
 
\bibitem{ma2018modeling}
J.~Ma, Z.~Zhao, X.~Yi, J.~Chen, L.~Hong, and E.~H.~Chi,
``Modeling task relationships in multi-task learning with multi-gate
mixture-of-experts,''
in \textit{Proceedings of the ACM SIGKDD International Conference on
Knowledge Discovery and Data Mining (KDD)}, pp.~1930--1939, 2018.
 
\bibitem{chen2023modsquad}
Z.~Chen, Y.~Shen, M.~Ding, Z.~Chen, H.~Zhao, E.~G.~Learned-Miller,
and C.~Gan,
``Mod-Squad: Designing mixtures of experts as modular multi-task
learners,''
in \textit{Proceedings of the IEEE/CVF Conference on Computer Vision
and Pattern Recognition (CVPR)}, pp.~11828--11837, 2023.
 
 
\bibitem{zhou2022expertchoice}
Y.~Zhou, T.~Lei, H.~Liu, N.~Du, Y.~Huang, V.~Zhao, A.~Dai, Z.~Chen,
Q.~Le, and J.~Laudon,
``Mixture-of-experts with expert choice routing,''
in \textit{Advances in Neural Information Processing Systems (NeurIPS)},
2022.
 
\bibitem{hazimeh2021dselectk}
H.~Hazimeh, Z.~Zhao, A.~Chowdhery, M.~Sathiamoorthy, Y.~Chen,
R.~Mazumder, L.~Hong, and E.~H.~Chi,
``DSelect-$k$: Differentiable selection in the mixture of experts with
applications to multi-task learning,''
in \textit{Advances in Neural Information Processing Systems (NeurIPS)},
vol.~34, pp.~29335--29347, 2021.

 
\bibitem{hochreiter1997lstm}
S.~Hochreiter and J.~Schmidhuber,
``Long short-term memory,''
\textit{Neural Computation}, vol.~9, no.~8, pp.~1735--1780, 1997.
 
\bibitem{cho2014gru}
K.~Cho, B.~van Merri\"{e}nboer, C.~Gulcehre, D.~Bahdanau,
F.~Bougares, H.~Schwenk, and Y.~Bengio,
``Learning phrase representations using RNN encoder--decoder for
statistical machine translation,''
in \textit{Proceedings of the Conference on Empirical Methods in
Natural Language Processing (EMNLP)}, pp.~1724--1734, 2014.
 
 
\bibitem{choi2016retain}
E.~Choi, M.~T.~Bahadori, J.~Sun, J.~Kulas, A.~Schuetz, and W.~Stewart,
``RETAIN: An interpretable predictive model for healthcare using reverse
time attention mechanism,''
in \textit{Advances in Neural Information Processing Systems (NeurIPS)},
vol.~29, pp.~3504--3512, 2016.
 
 
\bibitem{horn2020sett}
M.~Horn, M.~Moor, C.~Bock, B.~Rieck, and K.~Borgwardt,
``Set functions for time series,''
in \textit{Proceedings of the International Conference on Machine
Learning (ICML)}, vol.~119, pp.~4353--4363, 2020.
 
\bibitem{shukla2021multitime}
N.~Shukla and B.~Marlin,
``Multi-time attention networks for irregularly sampled time series,''
in \textit{International Conference on Learning Representations (ICLR)},
2021.
 
 
\bibitem{jordan1994hierarchical}
M.~I.~Jordan and R.~A.~Jacobs,
``Hierarchical mixtures of experts and the EM algorithm,''
\textit{Neural Computation}, vol.~6, no.~2, pp.~181--214, 1994.

\bibitem{jacobs1991adaptive}
Jacobs, R.~A., Jordan, M.~I., Nowlan, S.~J., and Hinton, G.~E.
\newblock Adaptive mixtures of local experts.
\newblock \emph{Neural Computation}, 3\penalty0 (1):\penalty0 79--87, 1991.

\bibitem{yi2023timesnet}
Wu, H., Hu, T., Liu, Y., Zhou, H., Wang, J., and Long, M.
\newblock {TimesNet}: Temporal 2D-variation modeling for general time series
  analysis.
\newblock In \emph{Proceedings of the International Conference on Learning
  Representations (ICLR)}, 2023.

\bibitem{zhou2022fedformer}
Zhou, T., Ma, Z., Wen, Q., Wang, X., Sun, L., and Jin, R.
\newblock {FEDformer}: Frequency enhanced decomposed transformer for long-term
  series forecasting.
\newblock In \emph{Proceedings of the 39th International Conference on Machine
  Learning (ICML)}, pp.\  27268--27286, 2022.

\bibitem{kiela2019supervised}
D.~Kiela, C.~Bhooshan, H.~Firooz, E.~Perez, and D.~Testuggine,
``Supervised multimodal bitransformers for classifying images and
text,''
\textit{arXiv preprint arXiv:1909.02950}, 2019.

\bibitem{kitaev2020reformer}
N.~Kitaev, L.~Kaiser, and A.~Levskaya,
``Reformer: The efficient transformer,''
in \textit{International Conference on Learning Representations
(ICLR)}, 2020.

\bibitem{cover2006elements}
T.~M.~Cover and J.~A.~Thomas,
\textit{Elements of Information Theory}, 2nd ed.
John Wiley \& Sons, 2006.
 
\bibitem{hornik1989universality}
K.~Hornik, M.~Stinchcombe, and H.~White,
``Multilayer feedforward networks are universal approximators,''
\textit{Neural Networks}, vol.~2, no.~5, pp.~359--366, 1989.
 
\bibitem{bottou2018optimization}
L.~Bottou, F.~E.~Curtis, and J.~Nocedal,
``Optimization methods for large-scale machine learning,''
\textit{SIAM Review}, vol.~60, no.~2, pp.~223--311, 2018.
 
\bibitem{yun2020universal}
C.~Yun, S.~Bhojanapalli, A.~S.~Rawat, S.~J.~Reddi,
and S.~Kumar,
``Are transformers universal approximators of
sequence-to-sequence functions?''
in \textit{International Conference on Learning Representations
(ICLR)}, 2020.
 
\end{thebibliography}
\end{document}